\newtheorem{definition}{Definition}
\newtheorem{theorem}{Theorem}
\newtheorem*{theorem*}{Theorem}
\newtheorem{lemma}{Lemma}
\newtheorem{fact}{Fact}
\newtheorem{corollary}{Corollary}
\newtheorem{claim}{Claim}
\newtheorem{proposition}{Proposition}
\newtheorem{remark}{Remark}
\newcommand{\reg}{\mathrm{reg}}
\newcommand{\regbar}{\overline{\reg}}
\newcommand{\Reg}{\mathrm{Reg}}
\newcommand{\sreg}{\mathrm{SwapReg}}
\newcommand{\cegap}{\mathrm{CEGap}}
\DeclareMathOperator*{\argmin}{argmin}
\DeclareMathOperator*{\argmax}{argmax}
\DeclareMathOperator{\poly}{poly}
\DeclareMathOperator{\interior}{int}
\DeclareMathOperator{\relint}{relint}
\newcommand{\notshow}[1]{{}}
\newcommand{\AutoAdjust}[3]{{ \mathchoice{ \left #1 #2  \right #3}{#1 #2 #3}{#1 #2 #3}{#1 #2 #3} }}
\newcommand{\Xcomment}[1]{{}}
\newcommand{\InParentheses}[1]{\AutoAdjust{(}{#1}{)}}
\newcommand{\InBrackets}[1]{\AutoAdjust{[}{#1}{]}}
\newcommand{\InAngles}[1]{\AutoAdjust{\langle}{#1}{\rangle}}
\newcommand{\InNorms}[1]{\AutoAdjust{\|}{#1}{\|}}
\renewcommand{\part}[2]{\frac{\partial #1}{\partial #2}}
\newcommand{\R}{\mathds{R}}
\newcommand{\X}{\mathcal{X}}
\newcommand{\hu}{\hat{u}}
\newcommand{\hpi}{\widehat{\pi}}
\newcommand{\bolda}{\boldsymbol{a}}
\def\+#1{\mathcal{#1}}
\def\-#1{\mathbb{#1}}
\begin{document}

% If your paper is accepted and the title of your paper is very long,
% the style will print as headings an error message. Use the following
% command to supply a shorter title of your paper so that it can be
% used as headings.
%
%\runningtitle{I use this title instead because the last one was very long}

% If your paper is accepted and the number of authors is large, the
% style will print as headings an error message. Use the following
% command to supply a shorter version of the authors names so that
% they can be used as headings (for example, use only the surnames)
%
%\runningauthor{Surname 1, Surname 2, Surname 3, ...., Surname n}

\twocolumn[

\aistatstitle{Near-Optimal Policy Optimization for Correlated Equilibrium in General-Sum Markov Games}

\aistatsauthor{ Yang Cai \And Haipeng Luo \And  Chen-Yu Wei \And Weiqiang Zheng}

\aistatsaddress{ Yale University \And  University of Southern California \And University of Virginia \And Yale University} ]

\begin{abstract}
    We study policy optimization algorithms for computing correlated equilibria in multi-player general-sum Markov Games. Previous results achieve $\Tilde{O}(T^{-1/2})$ convergence rate to a correlated equilibrium and an accelerated $\Tilde{O}(T^{-3/4})$ convergence rate to the weaker notion of coarse correlated equilibrium. In this paper, we improve both results significantly by providing an uncoupled policy optimization algorithm that attains a near-optimal $\Tilde{O}(T^{-1})$ convergence rate for computing a correlated equilibrium. Our algorithm is constructed by combining two main elements (i) smooth value updates and (ii) the \emph{optimistic-follow-the-regularized-leader} algorithm with the log barrier regularizer. 
\end{abstract}

\section{Introduction}
How does a multi-agent system evolve when each agent independently updates their policy based on their own utility? 
Can the system converge to an equilibrium, and if so, how quickly?
These questions lie at the heart of game theory, economics, and learning theory, and have stimulated decades of research. For example, in normal-form games, it is well-known that when each agent employs a standard online learning algorithm with low external regret or low swap regret, the empirical distribution of their joint strategy profile converges to a coarse correlated equilibrium (CCE) or a correlated equilibrium (CE) respectively.

While $\sqrt{T}$ (external/swap) regret is minimax optimal after $T$ interactions in the adversarial environment, it is possible to achieve strictly better regret in a normal-form game when each agent employs the same no-regret algorithm.
For example, \cite{syrgkanis2015fast} show that $T^{1/4}$ external regret can be achieved by the \emph{optimistic} online mirror descent (OOMD) algorithm or the \emph{optimistic} follow-the-regularized-leader (OFTRL) algorithm.
Various improvements on this result have been proposed over the past few years, with the most recent one by \citet{anagnostides2022near-optimal, anagnostides2022uncoupled} achieving a near-optimal $\log(T)$ bound for both the external and swap regret. A direct corollary of this result is that when all agents employ the corresponding algorithm, the empirical distribution of their joint strategy profile converges to a CCE or CE, respectively, at a rate of $\Tilde{O}(T^{-1})$.

However, achieving similar results for the more general setting of Markov games -- the focus of this work, is much more challenging. Importantly, unlike the normal-form game setting, achieving $o(T)$ regret has been shown to be both statistically and computationally intractable for Markov games~\citep{tian2021online,foster23hardness}. %To start with, unlike the normal-game setting where achieving $o(T)$ regret is possible for an agent regardless of her opponent's strategies, the same becomes impossible for Markov games \citep{tian2021online}.
This significant difference leads to considerably different algorithms for Markov games, the majority of which are aimed at finding an approximate equilibrium directly. %, without ensuring low regret for the agents.
We review this line of work in \Cref{sec:related_work} and only point out here that, with an oracle access to the reward and transition function of the Markov game (see Remark \ref{remark:implementation}), the state-of-the-art uncoupled learning dynamic converges to a CCE at a rate of $T^{-3/4}$~\citep{zhang2022policy} and to a CE at a rate of $T^{-1/2}$~\citep{jin2021v, song2021can, mao2023provably}, both of which are substantially slower than the aforementioned $\Tilde{O}(T^{-1})$ rate for normal-form games.

In this work, we close this gap by proposing an uncoupled policy optimization algorithm that converges to a CE (thus also to the weaker notion of CCE) at a near-optimal rate of $\log^2(T)/T = \Tilde{O}(T^{-1})$, significantly improving existing results.
Our algorithm builds upon the OFTRL framework with smooth value updates similar to \citet{zhang2022policy}, but importantly also incorporates the technique of using the log barrier as a regularizer from the recent work of~\citet{anagnostides2022uncoupled}.

\subsection{Related Work}\label{sec:related_work}

\paragraph{Learning in normal-form games}
The connection between no-regret learning algorithms and finding equilibria in games dates back to the seminal work of~\citet{freund1999adaptive}, which shows that in a two-player zero-sum games, if both players have an external regret bound of $R$ after $T$ rounds, then their average strategy is an $R/T$-approximate Nash equilibrium (NE).
For general-sum games, similar connections hold between the external regret and CCE, and also between the stronger notion of swap regret and CE~\citep{stoltz2007learning, blum2007external}.

As mentioned, while in the worst case the best possible regret bounds are of order $\sqrt{T}$, the players could enjoy even lower regret when all of them employ the same algorithm, since this usually leads to an overall stable environment.
Such results were pioneered by \cite{daskalakis2011near, rakhlin2013optimization} for two-player zero-sum games, and extended by \cite{syrgkanis2015fast} for general-sum games. 
Various improvements have been made over the past few years~\citep{chen2020hedging, daskalakis2021near-optimal, anagnostides2022near-optimal, anagnostides2022uncoupled}.

\paragraph{Learning in Markov games}
Markov game \citep{shapley1953stochastic} is a general framework for modeling multi-agent sequential decision making problems. A line of earlier development has already focused on designing decentralized learning algorithms that offer better scalability \citep{littman1994markov, littman2001friend, bowling2001rational}, but the convergence guarantees are often only asymptotic.  
Inspired by recent advances in online optimization and the empirical success of multi-agent reinforcement learning through self-play \citep{silver2017mastering, vinyals2019alphastar, bard2020hanabi}, there is a surge of research trying to sharpen the theoretical guarantees for multi-agent learning in Markov games, especially in the decentralized setting \citep{bai2020near, wei2021last, sayin2021decentralized, mao2023provably, jin2021v, song2021can, kao2022decentralized, leonardos2021global, ding2022independent, erez2023regret, cui2023breaking, wang2023breaking}.  
Below, we only highlight the most relevant ones.

As mentioned, \citet{tian2021online} show that no-regret learning is generally impossible when against arbitrary opponents. 
However, this does not preclude the possibility of enjoying low regret against Markov policies when all players employ the same algorithm.
Indeed, \citet{erez2023regret} design a policy optimization algorithm (also using techniques from \citet{anagnostides2022uncoupled}) that achieves $\Tilde{O}(T^{3/4})$ swap regret (a weaker notion of swap regret that concerns only Markov policy deviations) assuming the same oracle access to reward/transition as we do.
This implies $\Tilde{O}(T^{-1/4})$ convergence to a certain kind of CE that only allows Markov policy deviation, a notion weaker than ours.

Recent findings by \cite{foster23hardness} indicate that the previously mentioned results by \cite{erez2023regret} is unlikely to hold when general deviations are permitted. More explicitly, under standard computational complexity assumptions,\footnote{This is based on the assumption that PPAD-hard problems are not solvable in polynomial time.} no polynomial-time algorithm can be no-regret in general-sum Markov games when executed independently by all players, even if the algorithm designer knows the game.

Due to such impossibility results, most algorithms directly aim at finding CCE/CE without considering the regret of the players. 
Specifically, the certified policy output by the V-learning algorithm \citep{jin2021v, song2021can, mao2023provably} has been proven to converge to a CCE/CE at a rate of $T^{-1/2}$, which is optimal when the players need to learn all the game parameters through interactions with the environment. In the simpler scenario where an oracle access to reward/transition is given, the best currently known rate is $T^{-3/4}$ for finding a CCE~\citep{zhang2022policy}, and $T^{-1/2}$ for finding a CE (still by V-learning). In this work, we improve both rates to $\log^2(T)/T$. 

\section{Preliminaries}
For a positive integer $n$, we denote the set $\{1,2,\ldots,n \}$ as $[n]$. For any set $\+A$, the probability simplex over $\+A$ is $\Delta_{\+A}:= \{ x \in \R^{|\+A|}: \sum_{a \in \+A} x[a] = 1, x[a] \ge 0, \forall a \in \+A \}$.
\paragraph{Multi-player General-Sum Markov Games} In this paper, we focus on finite-horizon $n$-player general-sum Markov games denoted as $\+M(H, \+S, \{\+A_i\}_{i\in [n]}, \-P, \{r_i\}_{i \in [n]})$, where $H$ is the length of the horizon; $\+S$ is the set of states with size $|\+S| = S$; $A_i$ is the the action set of player $i$ with size $|\+A_i| = A_i$ and we denote a joint action profile of all players as $\bolda = (a_1, a_2, \ldots, a_n) \in \Pi_{i=1}^n \+A_i$; $\-P = \{\-P_h\}_{h \in [H]}$ is the transition probabilities where $\-P_h(s' \mid s, \bolda)$ specifies the probabilities of transition to state $s'$ in step $h+1$ if players take the joint action $\bolda$ at $s$ in step $h$; $r_i = \{r_{i,h}\}_{h \in [H]}$ are the reward function for player $i$ where $r_{i,h}(s, \bolda) \in [0,1]$ is the reward for player $i$ when players take the joint action $\bolda$ at $s$ in step $h$. In each episode, we assume the game starts at $s_1$ without loss of generality. In each step $h \in [H]$, each player observes the current state $s_h$ and chooses an action $a_{i,h} \in \+A_i$, then each player receives reward $r_{i,h}(s_h, \bolda_h)$ and the game transits to the next state $s_{h+1} \sim \-P_h(\cdot \mid s_h, \bolda_h)$. The episode ends after $H$ steps. 

\paragraph{Policies and Value Functions} 
A (random) policy $\pi_i$ for player $i$ is a collection of $H$ maps $\{\pi_{i,h}: \Omega \times (\+S \times \Pi_{i=1}^n \+A_i)^{h-1} \times \+S \rightarrow \Delta_{\+A_i}\}_{h \in [H]}$ where $\pi_{i,h}$ maps a random sample $\omega$ from a probability space, a history of length $h-1$, and the current state to a probability distribution (mixed strategy) over $\+A_i$. To execute policy $\pi_i$, player $i$ samples $\omega$ at the beginning of the episode, then at each step $h$, supposing the history is $\tau_h := (s_1, \bolda, \ldots, s_{j-1}, \bolda_{h-1})$, player $i$ chooses action $a_{i,h} \sim \pi_{i,h}(\cdot \mid \omega, \tau_h, s_h)$. We note that $\omega$ is shared across all steps $h \in [H]$. A \emph{Markov policy} for player $i$ is collection of $H$ history independent maps $\pi_i = \{ \pi_{i,h}: \+S \rightarrow \Delta_{\+A_i}\}$, where $\pi_{i,h}(a \mid s_h)$ specifies the probability of taking action $a \in \+A_i$ at $(h, s_h)$.  

A joint policy $\pi$ is a set of policies denoted as $\pi = \pi_1 \odot \pi_2 \odot \ldots \odot \pi_n$ where the same random sample $\omega$ is shared among all players. We denote $\pi_{-i}:= \pi_1 \odot \ldots \pi_{i-1} \odot \pi_{i+1} \odot \ldots \odot\pi_{n}$ the joint policy that excludes player $i$. When the random sample has a special form $\omega = (\omega_1, \ldots, \omega_n)$ and for each $i \in [n]$, $\pi_i$ only uses the randomness in $\omega_i$ that is independent of $\{\omega_j\}_{j\ne i}$, then the joint policy is a \emph{product policy} and we denote it as $\pi = \pi_1 \times \pi_2 \times \ldots \times \pi_n$. We denote $\pi_{h}(\bolda|s)$ the probability of joint action $\bolda$ at state $s$. The value function $V_{i,h}^\pi: \+S \rightarrow \R$ specifies the expected reward for player $i$ if all players follow policy $\pi$:
\[
V^\pi_{i,h}(s) := \-E_{\pi}\InBrackets{\sum_{h' = h}^H r_{i,h'}(s_{h'}, \bolda_{h'}) \mid s_h = s}.
\]
The goal of player $i$ is to maximize their own value function $V_{1,h}^\pi(s_1)$. The $Q$ function at step $h$ is defined as 
\[
Q^\pi_{i,h}(s, \bolda) := \-E_{\pi}\InBrackets{\sum_{h' = h}^H r_{i,h'}(s_{h'}, \bolda_{h'}) \mid s_h = s, \bolda_h = \bolda}.
\]

\paragraph{Strategy Modification and Correlated Equilibrium}
A strategy modification $\phi_i$ for player $i$ is a collection of maps $\phi_i=\{\phi_{i,h}: (\+S \times \Pi_{i=1}^n \+A_i)^{h-1} \times \+S \times \Delta_{\+A_i}  \rightarrow \Delta_{\+A_i}\}$ such that given history $\tau_h$ and state $s_h$, each map $\phi_{i,h}(\tau_h, s_h, \cdot): \Delta_{\+A_i} \rightarrow \Delta_{\+A_i}$ is a linear transformation.\footnote{On the other hand, the set of strategy modifications studied in \citep{erez2023regret} is $\{\phi_{i}: \+S \times \+A_i \rightarrow \+A_i\}$, which is a strict subset of ours and thus induces a weaker notion of correlated equilibrium.} For any policy $\pi_i$, the modified policy denoted as $\phi_i \diamond \pi_i$ changes the strategy $\pi_{i,h}(\omega, \tau_h, s_h)$ under random sample $\omega$ and history $\tau_h$ to another strategy $\phi_{i,h}(\tau_h, s_h, \pi_{i,h}(\omega, \tau_h, s_h))$.  

A correlated equilibrium is a joint policy where no player can increase their value by any strategy modification. Formally, it is defined as
\begin{definition}[Correlated Equilibrium]
    A joint policy $\pi$ is a correlated equilibrium (CE) if $\max_{i\in [n]} \max_{\phi_i} V_{i,1}^{(\phi_i \diamond \pi_i)\odot\pi_{-i}}(s_1) - V_{i,1}^{\pi}(s_1) \le 0$.  A joint policy $\pi$ is an $\epsilon$-approximate CE if $\cegap(\pi):=\max_{i\in [n]} \max_{\phi_i} V_{i,1}^{(\phi_i \diamond \pi_i)\odot \pi_{-i}}(s_1) - V_{i,1}^{\pi}(s_1) \le \epsilon$. 
\end{definition} 
A coarse correlated equilibrium is a joint policy where no player can increase their value by playing any other independent policy. Formally, it is defined as
\begin{definition}[Coarse Correlated Equilibrium]
   A joint policy $\pi$ is an $\epsilon$-approximate coarse correlated equilibrium if $\max_{i\in [n]} \max_{\pi_i'} V_{i,1}^{\pi_i'\times\pi_{-i}}(s_1) - V_{i,1}^{\pi}(s_1) \le~\epsilon$. 
\end{definition} 
We remark that by definition a CE is also a CCE. In the rest of the paper, we focus on CE only, but the same results apply to CCE clearly.

\paragraph{Additional Notations} 
Define $A_{\max} = \max_{i\in[n]} A_i$.
For any value function $V: \+S \rightarrow \R$, we define $[\-P_h V](s,\bolda):= \-E_{s' \sim \-P_h(s,\bolda)} V(s')$. For any Markov policy $\pi_h(\cdot \mid s)$ and any $Q$ function $Q_{i,h}(\cdot, \cdot): \+S \times \Pi_{j=1}^n \+A_j \rightarrow \R$, we denote $[Q_{i,h}\pi_{h}](s):= \InAngles{Q_{i,h}(s, \cdot), \pi_{h}(\cdot \mid s)}$. Similarly, for any joint policy $\pi_{-i, h}(\cdot \mid s)$ that excludes player $i$, we denote $[Q_{i,h}\pi_{-i,h}](s, a_i):= \InAngles{Q_{i,h}(s, a_i, \cdot), \pi_{-i, h}(\cdot \mid s)}$.

\subsection{Online Learning and Regret}\label{sec:OL}
In a (linear) online learning setting, at each day $t \in \-N$, the learner chooses a strategy $x^t$ from a compact and convex set $\X \subseteq \R^d$ while the adversary picks a reward vector $u^t \in \R^d$.  Then the learner gets reward $\InAngles{u^t, x^t}$ and the reward vector $u^t$ as feedback. The goal of an online learning algorithm is to minimize \emph{regret}, or more generally, \emph{$\Phi$-regret}. For a set of strategy modifications $\Phi = \{\phi: \X \rightarrow \X\}$, the $\Phi$-regret of an algorithm $\mathfrak{R}$ over a time horizon $T$ is defined as 
\[
\reg^T_\Phi:= \max_{\phi \in \Phi}\sum_{t=1}^T \InAngles{u^t, \phi(x^t) - x^t}.
\]
An algorithm is \emph{no $\Phi$-regret} if its $\Phi$-regret is sublinear in $T$. The \emph{(external) regret} denoted as $\Reg^T$ is $\Phi$-regret when $\Phi$ includes only constant transformations. The \emph{swap regret} denoted as $\sreg^T$ is $\Phi$-regret when $\Phi$ includes all possible linear transformations. The swap regret is non-negative since we can choose the identity transformation such that $\phi(x) = x$ for all $x\in \X$.

\section{Algorithm and Main Results}

\begin{algorithm}
\caption{Policy optimization in Markov games with $V$ value update}
    \begin{algorithmic}[1]\label{alg:main alg V update}
        \REQUIRE step size $\eta > 0$, weights $\{\alpha_t\}$ and $\{w_t\}$, an online learning algorithm $\mathfrak{R}$.
        \STATE \textbf{Initialize:} For all $(i,h,s)$, initialize $V^{0}_{i,h}(s)= H-h+1$,  $\mathfrak{R}_{i,h,s}$ as an instance of $\mathfrak{R}$ over decision set $\Delta_{\+A_i}$, and $\pi^0_{i,h}(\cdot \mid s)$ as $\mathfrak{R}_{i,h,s}$'s initial output
        \FOR{$t = 1, 2, \ldots, T$}
        \FOR{all $(i,s,h)$} 
        \STATE Forward the utility vector $u^{t-1}_{i,h,s}:=  \frac{w_{t-1}}{H} [(r_h +  \mathbb{P}_h V^{t-1}_{i,h+1}) \pi^{t-1}_{-i,h}](s, \cdot) $ to $\mathfrak{R}_{i,h,s}$. 
        \STATE Update $\pi^t_{i,h}(\cdot \mid s)$ according to $\mathfrak{R}_{i,h,s}$.
        \ENDFOR
        \STATE for all $(i,s,\bolda) \in [n] \times \+S \times \+A$, from $h = H$ to $1$: \label{line: smooth value update}
        \begin{align*}
            &V^t_{i,h}(s) \\
            &\leftarrow (1 - \alpha_t) V^{t-1}_{i,h}(s)  + \alpha_t \InBrackets{(r_h + \mathbb{P}_h V^t_{i,h+1})\pi^t_{h} }(s).
        \end{align*}
        \ENDFOR
    \end{algorithmic}
    Output $\hpi^T = \hpi^T_1$ as defined in \Cref{alg:output policy}. 
\end{algorithm}

\begin{algorithm}[ht]
    \caption{Executing Policy $\hpi^t_h$}
    \begin{algorithmic}[1]\label{alg:output policy}
        \REQUIRE Product policies $\pi^{t'}_{h'}(\cdot \mid s') = \Pi_{i=1}^n \pi^{t'}_{i,h'}(\cdot \mid s')$ for all $(h',s',t') \in [H] \times \+S \times [T]$.
        \STATE Sample $j \in [t]$ with probability $\Pr[j = i] = \alpha_j^i$ (see \Cref{eq:alpha_t^i} for definition).
        \STATE Play policy $\pi^j_h$ at the $h$-th step of the game.
        \STATE Play policy $\hpi^{j}_{h+1}$ for step $h+1$.
    \end{algorithmic}
\end{algorithm}

In this section, we present a policy optimization algorithm (\Cref{alg:main alg V update}) for learning correlated equilibrium in multi-player general-sum Markov games.  \Cref{alg:main alg V update} is a single-loop algorithm where on each step-state pair $(h,s) \in [H] \times [\+S]$, each player employs a no-regret algorithm over its own action set following the online learning protocol described in \Cref{sec:OL} with some reward vectors carefully constructed from a smooth update. We explain both the value update and the policy update below.

\subsection{Value Update}\label{sec:value update}
Each player maintains $V$ value function $V^t_{i,h}$ and conducts \emph{smooth value update} with the following learning rates (Line~\ref{line: smooth value update} of Algorithm~\ref{alg:main alg V update}):
\begin{equation}\label{eq:alpha_t}
    \alpha_t = \frac{H+1}{H+t}.
\end{equation}
The choice of $\alpha_t = O(\frac{1}{t})$ is proposed by \citet{jin2018q} and adopted in many subsequent works~\citep{jin2021v,wei2021last,zhang2022policy,yang2023ot}. This choice ensures conservative updates of value functions and hence stabilizes the update of policies. 
We also define a group of auxiliary weights:
\begin{equation}\label{eq:alpha_t^i}
    \alpha_t^t = \alpha_t, \quad \alpha_t^i = \alpha_i \Pi_{j=i+1}^t (1-\alpha_j),  \forall i \le t-1,
\end{equation}
and 
\begin{equation}\label{eq:w_t}
    w_0 = w_1, \quad  w_t = \frac{\alpha_t^t}{\alpha_t^1}, \quad \forall t \ge 1.
\end{equation}
After $T \ge 1$ episodes, \Cref{alg:main alg V update} outputs a joint policy $\hpi^T$ as defined in \Cref{alg:output policy}. The output policy is not a Markov policy and is defined recursively. Specifically, at each step $h$, the policy $\hpi^t_h$ randomly selects a product policy from $\{\pi^j_h\}_{j\in [t]}$ with probability $\{\alpha_t^j\}_{j\in [t]}$ and plays policy $\hpi^j_{h+1}$ onward. 

\begin{remark}\label{remark:implementation}
    \Cref{alg:main alg V update} is an adaptation of \citep[Algorithm 12]{zhang2022policy}, a policy optimization algorithm originally designed for learning the coarse correlated equilibrium. The original algorithm performs $Q$ value update, whereas our adaptation focuses on $V$ value update. An equivalent version of \Cref{alg:main alg V update} that employs $Q$ value update is presented in \Cref{alg:main alg}, with its equivalence proven in \Cref{prop:V=Q}.
    
    The main distinction between the two algorithms lies in their function sizes. The $Q(s, \bolda)$ function has a size of $S \cdot \Pi_{i=1}^n A_i$, which grows exponentially with the number of agents, leading to the so-called curse of multi-agents. In contrast, the $V(s)$ function is significantly more compact with a size of $S$, effectively bypassing the curse of multi-agents~\citep{jin2021v}.
    
    Furthermore, \Cref{alg:main alg V update} offers a notable advantage: it supports a \emph{decentralized} implementation. This means each player does not need explicit knowledge of other players' policies. The update steps in \Cref{alg:main alg V update} only require $(r_h +\-P_h V_i)\pi_{-i,h}$ for any value function $V_i$ %player $i$'s action $a_i \in \+A_i$, 
    and the policies of other players $\pi_{-i,h}$. %at $(h,s)$. 
    This  can be efficiently computed with access to:
    \begin{itemize}
        \item[1.] A reward oracle that provides the expected reward vector for player $i$ based on the policies of other players $\pi_{-i,h}(\cdot \mid s)$ at $(h,s)$.
        \item[2.] A transition oracle that offers the distribution of $s_{h+1}$ based on player $i$'s action $a_i$ and the policies of other players $\pi_{-i,h}(\cdot \mid s)$ at $(h,s)$.
    \end{itemize}
    While we directly assume access to such oracles, both of them can be approximately implemented within $\varepsilon > 0$ error using $\poly(n, A_{\max}, S, H, 1/\varepsilon)$ samples.
    
\end{remark} 
Given the equivalence between \Cref{alg:main alg V update} and \Cref{alg:main alg}, any guarantee for \Cref{alg:main alg} also holds for \Cref{alg:main alg V update}.  We will thus focus on \Cref{alg:main alg} in the rest of the paper.

\begin{algorithm}[ht]
\caption{Policy optimization in Markov games with $Q$ value update~\citep{zhang2022policy}}
    \begin{algorithmic}[1]\label{alg:main alg}
        \REQUIRE step size $\eta > 0$, weights $\{\alpha_t\}$ and $\{w_t\}$, an online learning algorithm $\mathfrak{R}$
        \STATE \textbf{Initialize:} For all $(i,h,s)$, initialize $Q^{0}_{i,h}(s, \bolda)= H-h+1$, $\mathfrak{R}_{i,h,s}$ as an instance of $\mathfrak{R}$ over decision set $\Delta_{\+A_i}$, and $\pi^0_{i,h}(\cdot \mid s)$ as $\mathfrak{R}_{i,h,s}$'s initial output
        \FOR{$t = 1, 2, \ldots, T$}
        \FOR{all $(i,s,h)$} 
        \STATE Forward the utility vector $u^{t-1}_{i,h,s}\leftarrow  \frac{w_{t-1}}{H} Q^{t-1}_{i,h}\pi^{t-1}_{-i,h}(s, \cdot)$ to $\mathfrak{R}_{i,h,s}$. 
        \STATE Update $\pi^t_{i,h}(\cdot \mid s)$ according to $\mathfrak{R}_{i,h,s}$.
        \ENDFOR
        \STATE for all $(i,s,\bolda) \in [n] \times \+S \times \+A$, from $h = H$ to $1$: 
        \begin{align*}
            &Q^t_{i,h}(s,\bolda) \leftarrow (1-\alpha_t)  Q^{t-1}_{i,h}(s,\bolda) \\
            & \quad \quad \quad \quad + \alpha_t (r_h + \-P_h[Q^t_{i,h+1}\pi^t_{h+1}])(s,\bolda).
        \end{align*}
        \ENDFOR
    \end{algorithmic}
    Output $\hpi^T = \hpi^T_1$ as defined in \Cref{alg:output policy}. 
\end{algorithm}

\paragraph{Bounding Correlated Equilibrium Gap by Per-State Regret}
We first show a general result that the output policy $\hpi^T$ of \Cref{alg:main alg} is an approximate correlated equilibrium as long as each player has low \emph{per-state weighted swap regret}. Formally, we define the per-state weighted swap regret (per-state regret for short) up to time $t \ge 1$ with respect to weights $\{\alpha_t^i\}_{i \in [t]}$ as $\reg^{t}_{i,h}(s):=$
\begin{align*}
    \scalebox{0.9}{$\displaystyle\max_{\phi_i} \sum_{j=1}^t \alpha_t^j \InAngles{ Q^j_{i,h}(s, \cdot), ((\phi_i \diamond \pi^j_{i,h}) \odot \pi^j_{-i,h})(\cdot \mid s) - \pi_h^j(\cdot \mid s)}.$ }
\end{align*}
We also define $\reg^{t}_{h}$ as the maximum weighted regret over all players and all states:
\begin{align}\label{eq:weighted regret}
    \reg^{t}_{h} := \max_{s \in \+S} \max_{i \in [n]} \reg^{t}_{i,h}(s).
\end{align}

\begin{theorem}
\label{thm:cegap<= regret}
    Suppose that the per-state regret has upper bounds $\reg_h^t \le \regbar_h^t$ for all $(h, t) \in [H] \times [T]$ where $\regbar_h^t$ is non-increasing in $t$: $\regbar_h^t \ge \regbar_h^{t+1}$. Then the output policy of \Cref{alg:main alg} satisfies
    \[
    \cegap(\hpi^T) \le 2H \cdot \frac{1}{T} \sum_{t=1}^T \max_{h \in [H]} \regbar^t_h.
    \]
    for all $T \ge 2$.
\end{theorem}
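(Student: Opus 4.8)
The plan is to relate the correlated-equilibrium gap of the output policy $\hpi^T$ to a weighted sum of per-state regrets by unrolling the recursive definition of $\hpi^T$ step by step, from $h=1$ to $h=H$. The key object is a ``performance-difference''-type identity: for a fixed player $i$ and a fixed strategy modification $\phi_i$, I want to compare $V_{i,1}^{(\phi_i\diamond\pi_i)\odot\pi_{-i}}(s_1)$ (the best response under the modification, executed against $\hpi^T$) with $V_{i,1}^{\hpi^T}(s_1)$, and show the gap telescopes into $\sum_h$ of the per-state regret contributions. Because $\hpi^t_h$ is defined as ``pick $j\in[t]$ with probability $\alpha_t^j$, play $\pi^j_h$, then continue with $\hpi^j_{h+1}$'', each level of the recursion introduces a convex combination over $j$ with weights $\alpha_t^j$; this is exactly the weighting that appears inside $\reg^t_{i,h}(s)$, so the two should match up cleanly.

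The main steps, in order: (1) Fix $i$ and the optimal $\phi_i$ achieving the $\cegap$. Define, for each $(h,s,t)$, the ``value of deviating from step $h$ onward at state $s$ under the $t$-round output policy,'' and write down the one-step recursion it satisfies in terms of $Q^j_{i,h}$ and the $\alpha_t^j$ weights — this is where $[Q^j_{i,h}\,((\phi_i\diamond\pi^j_{i,h})\odot\pi^j_{-i,h})](s)$ vs.\ $[Q^j_{i,h}\pi^j_h](s)$ shows up, differing by at most $\reg^t_{i,h}(s)\le\regbar^t_h\le\max_h\regbar^t_h$ after maximizing over $\phi_i$ and $s$. (2) Do the analogous expansion for $V_{i,1}^{\hpi^T}$ itself, so that the Markov-game $Q$-functions $Q^j_{i,h}$ used in the regret definition are the same ones that appear on the algorithm side; here I would use the smooth value-update recursion (Line~7 of \Cref{alg:main alg}) together with the weight identities \eqref{eq:alpha_t^i}–\eqref{eq:w_t} to argue that the recursively-defined $Q^t_{i,h}$ equals $\sum_{j=1}^t \alpha_t^j\,(r_h+\mathbb{P}_h[\cdots])$, i.e.\ that $Q^t_{i,h}$ is itself the correct $\alpha^j_t$-weighted average of one-step look-aheads. (3) Subtract the two expansions and telescope over $h=1,\dots,H$: each of the $H$ levels contributes at most $\max_h\regbar^t_h$ per unit of ``remaining horizon weight,'' and one also has to track how the per-round bound aggregates over $t=1,\dots,T$. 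The monotonicity assumption $\regbar^t_h\ge\regbar^{t+1}_h$ is what lets me replace the messy $\alpha_t^j$-weighted sum $\sum_{j\le t}\alpha^j_t\regbar^j_h$ (which arises because $Q^t$ averages rounds $1,\dots,t$) by something comparable to $\frac1T\sum_{t=1}^T\regbar^t_h$ — morally because $\sum_t\sum_{j\le t}\alpha^j_t (\cdot_j)$ with $\alpha^j_t=O(1/t)$-type weights and a non-increasing sequence is $O(\sum_t \cdot_t)$. (4) Collect the factor $H$ from telescoping the $H$ steps and another factor roughly $2$ from the weight-bookkeeping to get the stated $2H\cdot\frac1T\sum_{t=1}^T\max_h\regbar^t_h$.

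I expect the main obstacle to be step (3) — the interchange and re-summation of the $\alpha_t^j$ weights across the two indices $t$ (algorithm round) and $j$ (the round sampled inside $\hpi^T$), combined with the horizon recursion. Concretely, proving that $\sum_{t=1}^{T}\big(\text{weight of round }t\big)\cdot\big(\sum_{j=1}^{t}\alpha_t^j\regbar^j_h\big)$ is bounded by $\frac{2}{T}\sum_{t=1}^T\regbar^t_h$ (times the appropriate horizon factor) will require the specific structure of $\alpha_t=\frac{H+1}{H+t}$, the telescoping product identity $\sum_{j\le t}\alpha_t^j=1$, and the non-increasing assumption on $\regbar^t_h$; getting the constant down to exactly $2H$ rather than something larger is the delicate part. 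The per-step comparison in steps (1)–(2) is, by contrast, essentially the standard performance-difference lemma adapted to strategy modifications, and I would expect it to go through routinely once the recursive structure of \Cref{alg:output policy} is written out carefully.
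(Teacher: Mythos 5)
Your plan follows essentially the same route as the paper: a per-step recursion $\delta^t_h \le \sum_{j=1}^t \alpha_t^j \delta^j_{h+1} + \reg^t_h$ obtained from the recursive structure of the output policy (together with the identification of $Q^t_{i,h}$ as the $\alpha_t^j$-weighted average so that the algorithm's $Q$-values match those in the regret definition), followed by unrolling over $h$ with careful bookkeeping of the $\alpha_t^j$ weights. The delicate re-summation you flag in step (3) is exactly where the paper works hardest — it first shows the recursively defined upper-bound sequence is non-increasing in $t$ (using the monotonicity of $\regbar^t_h$), then bounds $\Delta^t_h \le \frac{1}{t}\sum_{j\le t}\Delta^j_h$, swaps the order of summation, and applies $\sum_{t\ge i}\alpha_t^i = 1+\frac{1}{H}$ so that the $H$ horizon levels contribute $(1+\frac1H)^H \le e-1 \le 2$ — but your identified ingredients are the right ones.
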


\Cref{thm:cegap<= regret} states that $\cegap(\hpi^T)$ can be bounded by the average weighted regret $O(\frac{1}{T} \sum_{t=1}^T \max_{h \in [H]} \regbar^t_h)$. Thus, for any algorithm $\mathfrak{R}$ chosen in the policy update step, as long as the weighted average regret is sublinear, the output policy is an approximate correlated equilibrium. However, we emphasize that minimizing weighted swap regret $\regbar^t_h$ with respect to $\{\alpha_t^i\}_{i \in [t]}$ requires careful design and analysis of the algorithm.  

\paragraph{Proof Overview} For $h \in [H]$, we define the reward difference between policy $\hpi^t_h$ and a best strategy modification over any player $i \in [m]$ and state $s \in \+S$ as:
\[\delta_{h}^t:= \max_{i \in [n]} \max_{s \in \+S}\InParentheses{\max_{\phi_i} V_{i,h}^{(\phi_i \diamond \hpi_{i,h}^t)\odot \hpi_{-i,h}^t}(s) - V_{i,h}^{\hpi^t_{h}}(s)}.\]

In \Cref{lemma:recursively bound gap by regret}, we establish bounds on $\delta^t_h$ using weighted regret  such that
\[
\delta^t_h \le \sum_{j=1}^t \alpha_t^j \delta^j_{h+1} + \reg^t_h.
\]
Then \Cref{thm:cegap<= regret} follows by applying the above inequality recursively to bound   $\cegap(\hpi^T) = \delta^T_1$.

\subsection{Policy Update} 
We now turn to the design of the policy update, with the goal of minimizing the weighted swap regret $\regbar^t_h$.
Each player $i$ maintains a Markov policy $\pi^t_{i,h}(\cdot \mid s)$ for every pair step $h$ and step $s$. During each episode $t\in [T]$, player $i$ updates $\pi^t_{i,h}$ using an online learning algorithm $\mathfrak{R}$. 
Previous works such as~\citep{zhang2022policy,yang2023ot} adopted the optimistic follow-the-regularized-Leader (OFTRL) algorithm~\citep{syrgkanis2015fast} with entropy regularization, which is a no external regret algorithm. However, inspired by recent breakthrough in normal-form games~\citep{anagnostides2022uncoupled}, we select $\mathfrak{R}$ to be a specific no swap regret algorithm as outlined in \Cref{alg:swap regret}. Specifically, \Cref{alg:swap regret} (1) uses the template introduced by \citet{blum2007external}, which constructs a no swap regret algorithm $\mathfrak{R}_{swap}$ from several external regret minimizers $\mathfrak{R}_a$ for each action $a \in \+A_i$; (2) employ \emph{weighted} OFTRL with log barrier regularization for each external regret minimizer $\mathfrak{R}_a$. It is has been shown that with \emph{constant} step size, OFTRL with log barrier regularization guarantees $O(\log T)$ individual swap regret in general-sum games~\citep{anagnostides2022uncoupled}. We extend their analysis to the more challenging Markov games with \emph{decreasing} step size and provide bounds for \emph{weighted swap regret}. A detailed discussion and analysis of \Cref{alg:swap regret} are presented in \Cref{sec:bounding regret}. 

\begin{algorithm}[t]
    \caption{BM-OFTRL-Log-Bar}
    \begin{algorithmic}[1]\label{alg:swap regret}
        \REQUIRE Action set $\+A$, step size $\eta$, weights $\{w_t\}$.
        \STATE \textbf{Initialization:} Initialize $\mathfrak{R}_a$ as an instance of \ref{OFTRL-LogBar} for each $a \in \+A$.
        \FOR{$t = 1, 2, \ldots, T$}
        \STATE Get $x^t_a$ from $\mathfrak{R}_a$ for all $a \in \+A$; Construct a (row) stochastic matrix $M^t \in \mathbb{S}^{|\+A|\times |\+A|}$ where the row that corresponds to $a \in \+A$ is equal to $x^t_a$; Output strategy $x^t \in \Delta_{\+A}$ so that $M^t x^t = x^t$.   
        \STATE Get reward vector $u^t$; Forward $u^t_a:= x^t[a] \cdot u^t $ to $\mathfrak{R}_a$ for each $a \in \+A$.
        \ENDFOR
    \end{algorithmic}
\end{algorithm}

\subsection{Main Results}
We present our main result on the convergence of \Cref{alg:main alg} to correlated equilibrium in multi-player general-sum Markov games. 
\begin{theorem}
\label{thm:main result}
    For an $n$-player general-sum Markov game and any $T \ge 2$, when $\mathfrak{R} = $ \Cref{alg:swap regret} with step size $\eta = \frac{1}{128n\sqrt{H}A_{\max}}$, the output policy $\hpi^T$ of either \Cref{alg:main alg V update} or \Cref{alg:main alg} satisfies 
    \[\cegap(\hpi^T)  \le 8192 H^{3.5} n A_{\max}^3 \cdot \frac{(\log T)^2}{T}.\]
\end{theorem}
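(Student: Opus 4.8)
The proof combines two ingredients: the general reduction in Theorem~\ref{thm:cegap<= regret}, which bounds $\cegap(\hpi^T)$ by the weighted average of per-state regret bounds $\regbar_h^t$, and a careful regret analysis of Algorithm~\ref{alg:swap regret} showing that the per-state weighted swap regret $\reg_{i,h}^t(s)$ with respect to the weights $\{\alpha_t^i\}_{i\in[t]}$ is bounded by something of order $\tilde O(\log t / t)$ (after accounting for the normalization by $\alpha_t^1$ hidden in the weights $w_t$). So first I would restate that, by Theorem~\ref{thm:cegap<= regret}, it suffices to produce a non-increasing sequence $\regbar_h^t$ with $\reg_h^t \le \regbar_h^t$ and then bound $\frac{2H}{T}\sum_{t=1}^T \max_h \regbar_h^t$.

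\textbf{Bounding the per-state weighted swap regret.} The core of the argument is the analysis of \ref{OFTRL-LogBar} inside the Blum--Mansour template (Algorithm~\ref{alg:swap regret}). Following \citet{anagnostides2022uncoupled}, a no-swap-regret bound for the template reduces to summing the external regret bounds of the $A_i$ sub-algorithms $\mathfrak{R}_a$, each fed the scaled utilities $u_a^t = x^t[a]\, u^t$. For constant step size OFTRL with log-barrier regularizer, the standard RVU-type bound gives a regret controlled by $\frac{\log(1/\cdot)}{\eta} + \eta \sum_t \|u^t - u^{t-1}\|^2$-type terms minus a negative stability term $-\frac{1}{\eta}\sum_t \text{(local norm of }x^t - x^{t-1})$. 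Two complications arise relative to normal-form games: (i) the step size is effectively \emph{weighted/decreasing} through the weights $w_t = \alpha_t^t/\alpha_t^1$, since the utilities forwarded are $\frac{w_{t-1}}{H}Q_{i,h}^{t-1}\pi_{-i,h}^{t-1}$, so I need the weighted version of the log-barrier OFTRL bound, tracking how $\sum_t w_t$, $\sum_t w_t^2$, and $\sum_t (w_t - w_{t-1})^2$ behave under $\alpha_t = \frac{H+1}{H+t}$ — here one uses that $w_t = \Theta(t/H)$ roughly and the weights are slowly varying; (ii) the utilities depend on the value functions $V_{i,h+1}^t$ (or $Q_{i,h+1}^t$), which themselves change slowly because of the smooth value update with $\alpha_t = O(1/t)$. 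The key quantitative claim is that the consecutive differences $\|u_{i,h,s}^t - u_{i,h,s}^{t-1}\|$ are controlled: the reward part is bounded via $\|\pi_{-i,h}^t - \pi_{-i,h}^{t-1}\|$ (which is a sum of the stability terms of the other players' OFTRL updates at $(h,s)$), and the $\mathbb P_h V^t$ part is controlled by the value-function stability $\|V_{i,h+1}^t - V_{i,h+1}^{t-1}\|_\infty$, which by the recursion of the smooth update telescopes into $\sum_{h' > h}$ of policy stabilities at later steps. This is where the $H^{3.5}$, $n$, and $A_{\max}^3$ factors get assembled, and where the negative stability terms in the log-barrier RVU bound must be large enough to cancel the positive path-length terms — exactly the self-bounding / "adaptive" argument of \citet{anagnostides2022uncoupled}, now propagated across the $H$ layers and across players. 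With $\eta = \Theta(1/(n\sqrt H A_{\max}))$ chosen so the cancellation goes through, one gets $\reg_h^t \le \regbar_h^t$ with $\regbar_h^t = O\!\big(\frac{H^{2.5} n A_{\max}^3 \log t}{t}\big)$ (the precise constants chosen to match $8192$), and this is manifestly non-increasing in $t$ up to the $\log$ — I would absorb the $\log$ by using $\log T$ uniformly, or define $\regbar_h^t$ with a $\log t$ that is still eventually decreasing and handle small $t$ separately.

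\textbf{Assembling the final bound.} Given $\regbar_h^t = O\!\big(H^{2.5} n A_{\max}^3 \cdot \frac{\log t}{t}\big)$, plugging into Theorem~\ref{thm:cegap<= regret} gives
\[
\cegap(\hpi^T) \le 2H \cdot \frac{1}{T}\sum_{t=1}^T O\!\Big(H^{2.5} n A_{\max}^3 \frac{\log t}{t}\Big) = O\!\Big(H^{3.5} n A_{\max}^3 \cdot \frac{(\log T)^2}{T}\Big),
\]
using $\sum_{t=1}^T \frac{\log t}{t} = O((\log T)^2)$, and then I would track constants to land at the stated $8192$. The equivalence claim for Algorithm~\ref{alg:main alg V update} is immediate from Proposition~\ref{prop:V=Q} (the $V$- and $Q$-update versions produce the same policies), so the bound transfers.

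\textbf{Main obstacle.} The hard part is item (ii) above: establishing the weighted, multi-layer self-bounding regret inequality for BM-OFTRL with log barrier, i.e.\ showing that the path-length of the forwarded utilities — which couples the policy stabilities of \emph{all} players at \emph{all} later steps $h' > h$ through the value recursion — is dominated by the accumulated negative stability terms produced by the log-barrier regularizer, with a step size $\eta$ small enough (but not too small) and decreasing weights $w_t$. Getting the dependence on $n$, $H$, and $A_{\max}$ right in this cancellation, and verifying that the decreasing step size induced by $\alpha_t=\Theta(1/t)$ does not break the optimism-based telescoping, is the crux; everything else is bookkeeping on top of Theorem~\ref{thm:cegap<= regret} and known log-barrier OFTRL estimates.
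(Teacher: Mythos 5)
Your overall architecture matches the paper's: reduce via Theorem~\ref{thm:cegap<= regret} to bounding the per-state weighted swap regret, convert the $\{\alpha_T^t\}$-weighted regret to an unweighted swap regret of BM-OFTRL-LogBar using $\alpha_T^t = \alpha_T^1 w_t$, apply the weighted/decreasing-step-size RVU bound for the log-barrier regularizer (Theorem~\ref{thm:RVU for swap regret}), run the self-bounding cancellation between the policy path-length terms and the negative stability terms with $\eta = \frac{1}{128 n\sqrt{H}A_{\max}}$, and finish with $\sum_{t\le T}\frac{\log t}{t} = O((\log T)^2)$. The final assembly and the appeal to Proposition~\ref{prop:V=Q} are exactly as in the paper.

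The one place you deviate is precisely the place you flag as the crux, and there your proposed route has a real obstruction. You propose to control the value-function part of the utility variation by telescoping $\|V_{i,h+1}^t - V_{i,h+1}^{t-1}\|_\infty$ into policy stabilities at later steps $h'>h$ and then cancelling these against negative stability terms ``propagated across the $H$ layers.'' The paper does not do this, and for good reason: the quantity Theorem~\ref{thm:cegap<= regret} needs is $\max_{s}\max_i \reg_{i,h}^t(s)$, a \emph{max} over states, so the negative stability terms available to the regret minimizer at a fixed $(h,s)$ cannot absorb positive path-length contributions that originate from the policies at \emph{other} states $s'$ at layer $h+1$ (which is what the value recursion produces); summing over states to enable that cancellation would inject an unwanted factor of $S$ into the bound. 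Instead, the paper uses the one-line consequence of the smooth value update, $\|Q_{i,h}^t - Q_{i,h}^{t-1}\|_\infty \le \alpha_t H$ (both the old value and the backup target lie in $[0,H]$ and are mixed with weight $\alpha_t$), which turns this entire contribution into $\sum_t w_t \alpha_t^2 \le \frac{4H}{\alpha_T^1 T}$ by Lemma~\ref{lemma:alpha w properties}, requiring no cross-layer cancellation at all. The only self-bounding step needed is the one you correctly describe for the $\pi_{-i}$ part, and it takes place entirely within a single $(h,s)$ across the $n$ players. With that substitution your plan closes; as written, the cross-layer propagation is both unnecessary and unlikely to go through in the per-state-max framework.
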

We remark again that the previous best rate for finding CE is $\Tilde{O}(T^{-\frac{1}{2}})$ achieved by the V-learning algorithm~\citep{jin2021v, song2021can, mao2023provably},
and \citet{zhang2022policy} provides a faster convergence rate of $\Tilde{O}(T^{-\frac{3}{4}})$ to the weaker notion of CCE. \Cref{thm:main result} improves both results significantly and for the first time, shows a near-optimal $\Tilde{O}(T^{-1})$ convergence rates to CE/CCE in multi-player general-sum Markov games using a single loop and uncoupled policy optimization algorithm\footnote{After our paper is accepted and posted on Arxiv, a concurrent work by \citet{mao2024convergence} was posted on Arxiv which proves a similar result to \Cref{thm:main result}.}. 

\section{Proof of the Main Result}
In this section, we provide a sketch of our analysis along with more explanation on the algorithm design.
We first recall that \Cref{alg:swap regret} applies the template by \citet{blum2007external} that constructs a swap regret minimizer $\mathfrak{R}_{swap}$ from a set of external regret minimizers $\{\mathfrak{R}_a\}_{a \in \+A}$, one for each action $a \in \+A$. The resulting algorithm  $\mathfrak{R}_{swap}$ ensures $\sreg^T = \sum_{a \in \+A} \Reg^T_a$~\citep{blum2007external}.

\subsection{Optimistic Follow the Regularized Leader with Log Barrier Regularization}
\label{sec:bounding regret}
An important component of \Cref{alg:swap regret} is the optimistic follow the regularized leader algorithm with variable step size $\{\eta_t\}$. The optimistic follow the regularized leader \eqref{OFTRL} algorithm~\citep{syrgkanis2015fast} over strategy set $\X$ and with a regularizer $\+R: \X \rightarrow \R$ is defined as follows: $x^0 := \argmin_{x \in \X} \+R(x)$ and for $t \ge 1$, the algorithm updates $x^t$ using step size $\eta_t > 0$ 
\begin{equation}
\label{OFTRL}
\tag{OFTRL}
    \begin{aligned}
    x^{t} = \argmax_{x \in \X} \left \{\eta_t \InAngles{x, m^t + \sum_{\tau=1}^{t-1} u^\tau} - \+R(x)\right\}
    \end{aligned}
\end{equation}
Previous works~\citep{zhang2022policy, yang2023ot} choose $\+R$ to be a strongly convex function such as the entropy regularization. Here we follow \citep{anagnostides2022uncoupled} and let $\+R$ be a \emph{self-concordant barrier}. We first extend the RVU-bound established in \citep{anagnostides2022uncoupled} for \ref{OFTRL}
with a \emph{constant} step size to the case of \emph{variable} step sizes.\footnote{The term RVU is from~\citet{syrgkanis2015fast}, which stands for ``Regret bounded by Variation in Utilities''.} Before stating the result, we first introduce some notations. We assume $\X$ has a nonempty interior $\interior(\X)$. We say $\+R$ is non-degenerate if its Hessian $\nabla^2 \+R(x)$ is positive definite for all $x \in \interior(\X)$. For any vector $u \in \R^d$, the primal \emph{local norm} with respect to $x \in \interior(\X)$ is defined as $\InNorms{u}_{x}:= \sqrt{u^\top \nabla^2 \+R(x) u}$ and the dual norm is defined as $\InNorms{u}_{*,x}:= \sqrt{u^\top (\nabla^{2}\+R(x))^{-1} u}$ when $\+R$ is non-degenerate. We also use $g^t$ to denote the sequence produced by \emph{Be-the-Leader} \eqref{BTL} algorithm. 

%\allowdisplaybreaks
\begin{theorem}[RVU for Self-Concordant Barrier with decreasing step size]
\label{thm: RVU-stable}
    Suppose that $\+R$ is a non-degenerate self-concordant barrier function for $\interior(\+X)$ and let $\eta_t > 0$ be such that $\eta_t \InNorms{u^t - m^t}_{*, x^t} \le \frac{1}{2}$ and $\InNorms{\eta_t m^t + (\eta_t - \eta_{t-1}) \sum_{\tau=1}^{t-1} u^\tau }_{*, g^{t-1}} \le \frac{1}{2}$ for all $t \in [T]$. Then, the regret of \ref{OFTRL} with respect to any $x^* \in \interior(\X)$ and under any sequence of utilities $u^1, \ldots, u^T$ can be bounded as
    \begin{align*}
        &\Reg^T(x^*) \le \frac{R(x^*)}{\eta_T} + 2\sum_{t=1}^T\eta_t \InNorms{u^t - m^t}_{*, x^t}^2 \\
        &-\sum_{t=1}^T \InParentheses{\frac{1}{4\eta_t} \InNorms{x^t - g^t}_{x^t}^2 + \frac{1}{4\eta_{t-1}} \InNorms{x^t - g^{t-1}}_{g^{t-1}}^2}.
    \end{align*}
\end{theorem}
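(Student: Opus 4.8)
The plan is to follow the standard two-step decomposition used for optimistic FTRL: compare OFTRL against Be-the-Leader (BTL), which by definition uses the true $u^t$ in place of the prediction $m^t$, and then control the deviation between the two. Concretely, I would write $\Reg^T(x^*) = \sum_t \langle u^t, x^* - x^t\rangle$ and split it as $\sum_t \langle u^t, x^* - g^t\rangle + \sum_t \langle u^t, g^t - x^t\rangle$, where $g^t$ is the BTL iterate with step size $\eta_t$. The first sum is handled by a weighted ``follow-the-leader/be-the-leader'' telescoping lemma for FTRL with a regularizer scaled by $1/\eta_t$: since $g^t$ minimizes $\+R(x)/\eta_t - \langle x, \sum_{\tau\le t} u^\tau\rangle$, one gets $\sum_t \langle u^t, x^* - g^t\rangle \le \+R(x^*)/\eta_T$ plus negative Bregman-divergence-type terms $-\sum_t \frac{1}{\eta_{t-1}} D_{\+R}(g^t, g^{t-1})$ (using $\eta_t$ non-increasing so $\+R(x^*)(1/\eta_t - 1/\eta_{t-1})\ge 0$ contributes the clean $\+R(x^*)/\eta_T$, and one must be slightly careful about whether it is $\eta_t$ or $\eta_{t-1}$ multiplying each divergence — this is where the two step-size indices in the statement come from).

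For the second sum, $\langle u^t, g^t - x^t\rangle$, the idea is that $x^t$ is the OFTRL iterate whose linear functional differs from that defining $g^{t-1}$ only by the vector $\eta_t m^t + (\eta_t - \eta_{t-1})\sum_{\tau<t} u^\tau$ (after rescaling), while $g^t$ differs from $g^{t-1}$ by $\eta_t u^t$ plus the same step-size-correction term. Using the stability of the argmax map of a self-concordant barrier — i.e. that a perturbation of the linear term by a dual vector of local norm $\le 1/2$ moves the iterate by at most a constant times that local norm, and that within such a neighborhood the Hessian changes by at most a constant factor (the Dikin-ellipsoid / self-concordance estimates) — I would bound $\|g^t - x^t\|_{x^t}$ and $\|x^t - g^{t-1}\|_{g^{t-1}}$ by (constant multiples of) $\eta_t\|u^t - m^t\|_{*,x^t}$ and $\|\eta_t m^t + (\eta_t-\eta_{t-1})\sum_{\tau<t}u^\tau\|_{*,g^{t-1}}$ respectively. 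The hypotheses $\eta_t\|u^t - m^t\|_{*,x^t}\le \tfrac12$ and $\|\eta_t m^t + (\eta_t-\eta_{t-1})\sum_{\tau<t}u^\tau\|_{*,g^{t-1}}\le\tfrac12$ are exactly what makes these self-concordance estimates applicable (keeping iterates inside a Dikin ellipsoid of radius $<1$). Then Cauchy–Schwarz in the local norm gives $\langle u^t, g^t - x^t\rangle \le \|u^t-m^t\|_{*,x^t}\|g^t - x^t\|_{x^t} + \langle m^t, g^t - x^t\rangle$, and one plays the $\langle m^t,\cdot\rangle$ part against the negative divergence terms, absorbing cross terms with an AM–GM split that produces the factor $2$ in $2\sum_t \eta_t\|u^t-m^t\|_{*,x^t}^2$ and leaves the residual negative quadratic terms $-\frac{1}{4\eta_t}\|x^t-g^t\|_{x^t}^2 - \frac{1}{4\eta_{t-1}}\|x^t - g^{t-1}\|_{g^{t-1}}^2$.

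The main obstacle I anticipate is the bookkeeping around the variable step size: in the constant-step-size proof of \citet{anagnostides2022uncoupled} the comparison point $g^{t-1}$ for $x^t$ is clean because both use the same $\eta$, but here $x^t$ uses $\eta_t$ and $g^{t-1}$ was defined with $\eta_{t-1}$, so every perturbation picks up an extra $(\eta_t - \eta_{t-1})\sum_{\tau<t}u^\tau$ term that must be bounded in the right local norm — this is precisely the content of the second hypothesis, and verifying that it suffices (and appears with the correct Hessian base point $g^{t-1}$) is the delicate part. A secondary technical point is establishing the weighted FTRL telescoping inequality with the $1/\eta_t$-scaled regularizer and getting the divergence terms to land with indices $\eta_t$ and $\eta_{t-1}$ matching the statement; I would prove this as a short separate lemma. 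Everything else — the self-concordance/Dikin estimates, Cauchy–Schwarz, AM–GM — is routine once the perturbation vectors and their base points are correctly identified.
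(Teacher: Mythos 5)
Your proposal follows essentially the same route as the paper: the paper's proof combines a BTL-style intermediate bound (\Cref{thm:OFTRL self-concordant}, whose negative terms are $\omega(\InNorms{x^t-g^t}_{x^t})$ and $\omega(\InNorms{x^t-g^{t-1}}_{g^{t-1}})$ coming from the self-concordance lower bound) with a stability lemma (\Cref{lemma: stability}) proved exactly as you describe, via first-order optimality of $\Phi^t$ and $\Psi^t$ and the Dikin-ellipsoid estimate $\InNorms{x-\argmin f}_x \le 2\InNorms{\nabla f(x)}_{*,x}$ under the two stated local-norm hypotheses (with the perturbation $\eta_t m^t + (\eta_t-\eta_{t-1})\sum_{\tau<t}u^\tau$ measured at base point $g^{t-1}$, just as you identified), and then uses $\omega(s)\ge s^2/4$ for $s\le 1$. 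One bookkeeping caveat: to retain the full $-\frac{1}{4\eta_t}\InNorms{x^t-g^t}_{x^t}^2$ terms in the final bound you should substitute the stability estimate $\InNorms{x^t-g^t}_{x^t}\le 2\eta_t\InNorms{u^t-m^t}_{*,x^t}$ directly into the Cauchy--Schwarz cross term (yielding the factor $2$ on $\eta_t\InNorms{u^t-m^t}_{*,x^t}^2$), rather than AM--GM against the negative quadratic terms, which at coefficient $\frac{1}{4\eta_t}$ would consume them entirely.
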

\paragraph{Log Barrier Regularization} Now we describe the implementation of \ref{OFTRL} in \Cref{alg:swap regret}. We choose $\+R$ to be the \emph{log barrier} over the simplex $\X = \Delta^d$ defined as $\+R(x) = -\sum_{r=1}^d \log x[r]$. For $t \ge 1$, the step size is $\eta_t = \frac{\eta}{w_t}$  ($w_t$ is defined in \eqref{eq:w_t}) for some $\eta > 0$. In order to minimize the \emph{weighted} regret, we also equip the utilities vectors with weights $\{w_t\}$ so that $u^t = w_t \hu^t$ and $m^t = w_t \hat{m}^t$ with  $\InNorms{\hu^t}_\infty \le 1$ and $\InNorms{\hat{m}^t}_\infty \le 1$. The prediction vector $\hat{m}^t$ is chosen to be $\hu^{t-1}$. We denote the resulting algorithm \ref{OFTRL-LogBar}.
\begin{equation}
\label{OFTRL-LogBar}
\tag{OFTRL-LogBar}
    \begin{aligned}
    x^t =\argmax_{x \in \X} \left\{ \frac{\eta}{w_t} \InAngles{x, w_t \hu^{t-1} + \sum_{\tau=1}^{t-1} w_{\tau} \hu^\tau} - \+R(x) \right\}
    \end{aligned}
\end{equation}

Note that we can not directly apply \Cref{thm: RVU-stable} to \ref{OFTRL-LogBar} since the simplex $\Delta^d$ has empty interior. This can be addressed by a transformation on the relative interior $\relint(\Delta^d)$ which preserves the regret (See Appendix~\ref{app:self-concordant}). The following lemma further verifies that \ref{OFTRL-LogBar} with any $\eta \le \frac{1}{16}$ satisfies the two stability conditions required by \Cref{thm: RVU-stable}.

\begin{lemma}
\label{lemma:stability of eta}
    Let $\eta_t = \frac{\eta}{w_t}$,  $u^t = w_t \hu^t$, and $m^t = w_t \hat{m}^t $ such that $\InNorms{\hu^t}_{\infty}, \InNorms{\hat{m}^t}_\infty \le 1$. Then the iterates of \ref{OFTRL-LogBar} satisfy $\eta_t \InNorms{u^t - m^t}_{*, x^t} \le 2\eta$ and 
    \begin{align*}
        \InNorms{\eta_t m^t + (\eta_t - \eta_{t-1})\sum_{\tau=1}^{t-1} u^\tau }_{*, g^{t-1}} \le 3\eta.
    \end{align*}
\end{lemma}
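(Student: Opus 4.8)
The plan is to bound both local-norm quantities by controlling the relevant dual local norms $\InNorms{\cdot}_{*,x}$, which for the log barrier $\+R(x) = -\sum_r \log x[r]$ have the explicit form $\InNorms{u}_{*,x}^2 = \sum_r (x[r])^2 u[r]^2 = \InNorms{x \odot u}_2^2 \le \InNorms{x}_1^2 \InNorms{u}_\infty^2 = \InNorms{u}_\infty^2$ on the simplex. So the dual local norm of any vector at \emph{any} simplex point is at most its $\ell_\infty$-norm; this is the key elementary fact that makes everything go through. For the first bound, write $u^t - m^t = w_t(\hu^t - \hat m^t)$, so $\eta_t\InNorms{u^t-m^t}_{*,x^t} = \frac{\eta}{w_t}\cdot w_t \InNorms{\hu^t - \hat m^t}_{*,x^t} = \eta \InNorms{\hu^t - \hat m^t}_{*,x^t} \le \eta \InNorms{\hu^t - \hat m^t}_\infty \le 2\eta$ by the triangle inequality and $\InNorms{\hu^t}_\infty, \InNorms{\hat m^t}_\infty \le 1$. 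That disposes of the first claim immediately.

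For the second bound the new ingredient is the ratio $\eta_t - \eta_{t-1}$ multiplying the running sum $\sum_{\tau=1}^{t-1} u^\tau = \sum_{\tau=1}^{t-1} w_\tau \hu^\tau$. I would first record the monotonicity facts about the weights: from \eqref{eq:alpha_t} and \eqref{eq:alpha_t^i}–\eqref{eq:w_t}, $w_t = \alpha_t^t/\alpha_t^1$ is nonincreasing in... actually one must check the direction — I expect $w_t$ is increasing (so $\eta_t = \eta/w_t$ is decreasing, consistent with the ``decreasing step size'' framing), and more importantly that $w_t/w_{t-1}$ is bounded by an absolute constant and that $\sum_{\tau=1}^{t-1} w_\tau = O(w_t \cdot t / t) = O(w_t)$-type telescoping identity holds. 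Concretely I'd aim to show $\sum_{\tau=1}^{t-1} w_\tau \le C\, w_{t-1}$ for an absolute constant (this is the standard ``$\alpha_t^i$ weights sum nicely'' computation from \citet{jin2018q}-style analyses: $\sum_{i\le t}\alpha_t^i = 1$ and $\alpha_t^i$ decays geometrically back from $i=t$), and separately that $\eta_{t-1} - \eta_t = \eta(1/w_t - 1/w_{t-1}) \ge 0$ is small relative to $\eta/w_{t-1}$, i.e. $(\eta_{t-1}-\eta_t)\sum_{\tau=1}^{t-1} w_\tau \le O(\eta)$. Then
\begin{align*}
\InNorms{\eta_t m^t + (\eta_t - \eta_{t-1})\textstyle\sum_{\tau=1}^{t-1} u^\tau}_{*,g^{t-1}}
&\le \eta_t \InNorms{m^t}_\infty + |\eta_t - \eta_{t-1}|\textstyle\sum_{\tau=1}^{t-1}\InNorms{u^\tau}_\infty \\
&\le \eta + (\eta_{t-1}-\eta_t)\textstyle\sum_{\tau=1}^{t-1} w_\tau \le 3\eta,
\end{align*}
using again that the dual local norm at $g^{t-1}$ is dominated by $\ell_\infty$, $\InNorms{m^t}_\infty = w_t\InNorms{\hat m^t}_\infty \le w_t$ so $\eta_t\InNorms{m^t}_\infty \le \eta$, and $\InNorms{u^\tau}_\infty \le w_\tau$.

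The main obstacle is the bookkeeping in the second display: one has to pin down the exact growth rate of the weights $w_t$ and of the partial sums $\sum_{\tau<t} w_\tau$ from the recursive definitions \eqref{eq:alpha_t^i}–\eqref{eq:w_t}, and verify that $(\eta_{t-1}-\eta_t)\sum_{\tau=1}^{t-1} w_\tau \le 2\eta$. With $\alpha_t = \frac{H+1}{H+t}$ one gets $w_t = \frac{\alpha_t^t}{\alpha_t^1} = \frac{(H+1)/(H+t)}{\prod_{j=2}^t (1-\alpha_j) \cdot (H+1)/(H+1)}$-type closed form; I'd compute $\prod_{j=2}^t(1-\alpha_j) = \prod_{j=2}^t \frac{j-1}{H+j}$, which telescopes, to get a clean expression for $w_t$ (polynomial in $t$ of degree $H$, up to constants), from which both $w_t/w_{t-1} \to 1$ and $\sum_{\tau<t} w_\tau \asymp w_t$ follow, and $\eta_{t-1} - \eta_t = \eta\bigl(\frac{1}{w_t}-\frac{1}{w_{t-1}}\bigr)$ is $O(\eta/(t\, w_{t-1}))$ so that multiplying by $\sum_{\tau<t}w_\tau = O(w_{t-1})$ leaves $O(\eta)$. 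Everything else is the triangle inequality plus the simplex bound $\InNorms{u}_{*,x}\le\InNorms{u}_\infty$; the hypothesis $\eta \le \frac{1}{16}$ is only needed downstream (to invoke \Cref{thm: RVU-stable}, whose conditions require $2\eta \le \frac12$ and $3\eta\le\frac12$), not within this lemma itself.
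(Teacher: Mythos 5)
Your proof is correct and follows essentially the same route as the paper's: bound the dual local norm at any simplex point by the $\ell_\infty$ norm, then apply the triangle inequality and control $\bigl(\frac{1}{w_{t-1}}-\frac{1}{w_t}\bigr)\sum_{\tau<t}w_\tau$. The bookkeeping you defer is exactly the paper's \Cref{lemma:alpha w properties} (item 3), which asserts this quantity equals $\frac{H+1}{H}\le 2$, and your proposed telescoping computation (which yields the closed form $w_t=\binom{H+t-1}{H}$) does confirm it is an absolute constant at most $2$, so the $3\eta$ bound goes through.
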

\begin{proof}
    By definition, for any vector $u \in \R^d$ and $x \in \interior(\Delta^d)$, it holds that $\InNorms{u}_{*, x} \le \InNorms{u}_\infty$. Then we have
    \begin{align*}
        \eta_t\InNorms{u^t - m^t}_{*, x^t} =  \eta \InNorms{\hu^t - \hat{m}^t}_{*, x^t} &\le  \eta \InNorms{\hu^t - \hat{m}^t}_{\infty} \le 2\eta.
    \end{align*}
    Using properties of $\{w_t\}$ (\Cref{lemma:alpha w properties}), we have
    \begin{align*}
        &\InNorms{\eta_t m^t + (\eta_t - \eta_{t-1}) \sum_{\tau=1}^{t-1} u^\tau }_{*, g^{t-1}} \\
        %&\le \InNorms{\eta_t m^t}_{*, g^{t-1}} + \InNorms{ (\eta_t - \eta_{t-1}) \sum_{\tau=1}^{t-1} u^\tau }_{*, g^{t-1}} \\
        &\le \eta \InNorms{\hat{m}^t}_{\infty} + \eta \InNorms{  \InParentheses{ \frac{1}{w_{t-1}} - \frac{1}{w_t}} \sum_{\tau=1}^{t-1} w^i \hu^\tau}_{\infty} \\
        &\le \eta + \eta \InParentheses{ \frac{1}{w_{t-1}} - \frac{1}{w_t}} \sum_{\tau=1}^{t-1} w^i  \\
        &\le \eta + \eta \cdot \frac{H+1}{H} \le 3\eta. \qedhere
    \end{align*}
\end{proof}
Combining \Cref{thm: RVU-stable} and \Cref{lemma:stability of eta} with additional analysis, we have the following RVU bound for \ref{OFTRL-LogBar}.

\begin{corollary}
\label{corollary: RVU-logbar}
    Let $\eta\le \frac{1}{16}$. Then, the regret of \ref{OFTRL-LogBar} under any sequence of utilities $u^1, \ldots, u^T$ can be bounded as
    \begin{align*}
        &\Reg^T(x^*) \le \frac{R(x^*)}{\eta_T} + 2\sum_{t=1}^T\eta_t \InNorms{u^t - m^t}_{*, x^t}^2\\
        &- \sum_{t=1}^T \frac{1}{16\eta_{t-1}} \InNorms{x^t - x^{t-1}}_{x^{t-1}}^2, 
    \end{align*}
    for any $x^* \in \relint(\Delta^d)$, where $\InNorms{x^t - x^{t-1}}^2_{x^{t-1}}:=\sum_{r=1}^{d}(\frac{x^t[r] - x^{t-1}[r]}{x^{t-1}[r]})^2$.
\end{corollary}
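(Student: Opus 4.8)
The plan is to obtain \Cref{corollary: RVU-logbar} as an instantiation of the RVU bound in \Cref{thm: RVU-stable} for the specific regularizer $\+R(x)=-\sum_{r=1}^d\log x[r]$, step sizes $\eta_t=\eta/w_t$, and weighted utilities/predictions $u^t=w_t\hu^t$, $m^t=w_t\hat m^t=w_t\hu^{t-1}$. The first two terms on the right-hand side are copied verbatim from \Cref{thm: RVU-stable}, so the only work is (i) checking that \ref{OFTRL-LogBar} satisfies the hypotheses of that theorem, and (ii) showing that the subtracted ``stability'' sum $\sum_{t=1}^T\bigl(\tfrac1{4\eta_t}\InNorms{x^t-g^t}_{x^t}^2+\tfrac1{4\eta_{t-1}}\InNorms{x^t-g^{t-1}}_{g^{t-1}}^2\bigr)$ dominates $\sum_{t=1}^T\tfrac1{16\eta_{t-1}}\InNorms{x^t-x^{t-1}}_{x^{t-1}}^2$.

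For (i), I first dispose of the fact that $\Delta^d$ has empty interior in $\R^d$, so \Cref{thm: RVU-stable} is not literally applicable: as indicated in the text, one reparametrizes $\relint(\Delta^d)$ by $d-1$ free coordinates, on which $\+R$ restricts to a non-degenerate self-concordant barrier, while both the \ref{OFTRL-LogBar} iterates and the external regret are invariant under this affine change of variables (Appendix~\ref{app:self-concordant}). On this reparametrized instance, \Cref{lemma:stability of eta} — which imposes no constraint on $\eta$ — gives $\eta_t\InNorms{u^t-m^t}_{*,x^t}\le 2\eta$ and $\InNorms{\eta_t m^t+(\eta_t-\eta_{t-1})\sum_{\tau<t}u^\tau}_{*,g^{t-1}}\le 3\eta$; for $\eta\le\frac1{16}$ both are at most $\frac12$, so \Cref{thm: RVU-stable} applies and yields its bound with the stability sum above.

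For (ii), the crux is the elementary per-step inequality
\[
\tfrac1{4\eta_{t-1}}\InNorms{x^{t-1}-g^{t-1}}_{x^{t-1}}^2+\tfrac1{4\eta_{t-1}}\InNorms{x^{t}-g^{t-1}}_{g^{t-1}}^2\ \ge\ \tfrac1{16\eta_{t-1}}\InNorms{x^{t}-x^{t-1}}_{x^{t-1}}^2,
\]
after which summing and reindexing finishes: the first term above is the index-$(t-1)$ summand of the first stability sum and the second term is the index-$t$ summand of the second one (both carry the same weight $\tfrac1{4\eta_{t-1}}$, so no step-size ratios intervene), the leftover index-$T$ summand of the first sum is nonnegative and dropped, and at $t=1$ one uses that the Be-the-Leader iterate satisfies $g^0=\argmin_x\+R(x)=x^0$, so the index-$1$ summand of the second sum is already $\tfrac1{4\eta_0}\InNorms{x^1-x^0}_{x^0}^2$. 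To prove the per-step inequality, put $a:=\InNorms{x^{t-1}-g^{t-1}}_{x^{t-1}}$ and $b:=\InNorms{x^t-g^{t-1}}_{g^{t-1}}$. Since $x^{t-1}$ (an \ref{OFTRL-LogBar} iterate) and $g^{t-1}$ (a Be-the-Leader iterate) are minimizers of $\+R$ shifted by linear terms differing by $\eta_{t-1}(u^{t-1}-m^{t-1})$, the standard stability estimate for minimizers of a self-concordant barrier, together with the first bound of \Cref{lemma:stability of eta}, gives $a\le\tfrac{2\eta}{1-2\eta}\le\tfrac17<\tfrac14$. The Dikin-ellipsoid property $\nabla^2\+R(x^{t-1})\preceq(1-a)^{-2}\nabla^2\+R(g^{t-1})$ then gives $\InNorms{x^t-g^{t-1}}_{x^{t-1}}\le\tfrac{b}{1-a}$, so by the triangle inequality $\InNorms{x^t-x^{t-1}}_{x^{t-1}}\le\tfrac{b}{1-a}+a$, and a one-line check shows $\bigl(\tfrac{b}{1-a}+a\bigr)^2\le 4(a^2+b^2)$ whenever $a\le\tfrac14$, which is exactly the claim. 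Finally, since the Hessian of the log barrier is diagonal with entries $x[r]^{-2}$ and $x^t-x^{t-1}$ lies in the simplex's tangent space, $\InNorms{x^t-x^{t-1}}_{x^{t-1}}^2=\sum_{r=1}^d\bigl(\tfrac{x^t[r]-x^{t-1}[r]}{x^{t-1}[r]}\bigr)^2$, matching the stated form.

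The main obstacle is the bookkeeping in (ii): one must establish that the consecutive iterates $x^{t-1},x^t,g^{t-1},g^t$ all lie comfortably inside one another's Dikin ellipsoids — which is precisely where the minimizer-stability lemma for self-concordant barriers and the quantitative bounds of \Cref{lemma:stability of eta} enter — and then track the constants through the triangle inequality tightly enough to land on exactly the factor $\tfrac1{16}$ rather than something weaker. The empty-interior reparametrization and the handling of the boundary indices $t=1$ and $t=T$ are routine by comparison.
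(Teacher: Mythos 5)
Your proposal is correct and follows essentially the same route as the paper: the paper proves this corollary by combining \Cref{corollary: RVU-stable} (whose proof is exactly your Dikin-ellipsoid-plus-triangle-inequality conversion of the stability sum, under the verified hypotheses $\eta_t\InNorms{u^t-m^t}_{*,x^t}\le 2\eta\le\tfrac18$ from \Cref{lemma:stability of eta}) with \Cref{claim:logbar local norm} for the simplex reparametrization and the explicit form of the local norm. Your inline re-derivation with the reindexed per-step inequality and explicit treatment of the $t=1$ boundary via $g^0=x^0$ is, if anything, a more careful write-up of the same argument.
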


\paragraph{Swap Regret} 
Applying \Cref{corollary: RVU-logbar} to each $\mathfrak{R}_a$, then the swap regret $\sreg^T = \sum_{a \in \+A} \Reg^T_a$ of \Cref{alg:swap regret}  is upper bounded by 
\begin{align*}
    &\frac{2|\+A|^2\log T}{\eta_T} + 2 \sum_{t=1}^T \eta_t \InNorms{u^t - m^t}_\infty^2  \\
    &- \sum_{t=1}^T\sum_{a \in \+A} \frac{\InNorms{x^t_a- x^{t-1}_a}^2_{x^{t-1}_a}}{16\eta_{t-1}},
\end{align*}
where we get a $\log T$ factor due to the diameter of the log barrier regularizer. Following techniques developed in \citep{anagnostides2022uncoupled}, we show a key lemma that lower bounds $\sum_{a \in \+A} \frac{1}{16\eta_{t-1}}\InNorms{x^t_a- x^{t-1}_a}^2_{x^{t-1}_a}$ by movement in the output strategies $\InNorms{x^t - x^{t-1}}_1^2$.
\begin{lemma}
\label{lemma: l1norm}
    Suppose $\eta < \frac{1}{28 |\+A|}$. Then the iterates of \Cref{alg:swap regret} satisfies for any $t \in [T]$, 
    \begin{align*}
        \InNorms{x^t - x^{t-1}}_1^2 \le 64 |\+A| \sum_{a \in \+A} \InNorms{x^t_a - x^{t-1}_a}^2_{x^{t-1}_a}.
    \end{align*}
\end{lemma}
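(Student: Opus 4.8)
The plan is to relate the $\ell_1$-movement of the fixed-point strategy $x^t$ to the weighted local-norm movements of the row strategies $x^t_a$. Recall that $x^t$ is the stationary distribution of the row-stochastic matrix $M^t$ whose $a$-th row is $x^t_a$, i.e. $x^t = \sum_{a \in \+A} x^t[a] \, x^t_a$ (reading $M^t x^t = x^t$ componentwise). First I would write the one-step difference of this fixed-point identity:
\begin{align*}
    x^t - x^{t-1} = \sum_{a \in \+A} \InParentheses{ x^t[a] \, x^t_a - x^{t-1}[a] \, x^{t-1}_a }
    = \sum_{a \in \+A} x^{t-1}[a]\InParentheses{x^t_a - x^{t-1}_a} + \sum_{a\in\+A}\InParentheses{x^t[a]-x^{t-1}[a]} x^t_a.
\end{align*}
The first sum is a convex combination (with weights $x^{t-1}[a]$) of the row movements $x^t_a - x^{t-1}_a$; the second sum is itself controlled by $\InNorms{x^t - x^{t-1}}_1$ since the $x^t_a$ are probability vectors, so with a small enough constant it can be absorbed into the left-hand side. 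This is the standard ``perturbation of a stationary distribution'' manoeuvre used in \citep{anagnostides2022uncoupled}, and the coupling between $x^t$ appearing on both sides is the first thing to handle carefully.

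Next I would convert each row movement from the local norm $\InNorms{\cdot}_{x^{t-1}_a}$ (which weights coordinate $r$ by $1/x^{t-1}_a[r]^2$) into the $\ell_1$ norm. By Cauchy--Schwarz, $\InNorms{x^t_a - x^{t-1}_a}_1^2 = \InParentheses{\sum_r \lvert x^t_a[r]-x^{t-1}_a[r]\rvert}^2 \le \InParentheses{\sum_r x^{t-1}_a[r]} \cdot \sum_r \frac{(x^t_a[r]-x^{t-1}_a[r])^2}{x^{t-1}_a[r]} $, but to land on the squared local norm I instead bound $\frac{1}{x^{t-1}_a[r]} \le \frac{1}{x^{t-1}_a[r]^2}\cdot x^{t-1}_a[r]$... more directly: Cauchy--Schwarz gives $\InNorms{x^t_a - x^{t-1}_a}_1^2 \le |\+A| \sum_r (x^t_a[r]-x^{t-1}_a[r])^2 \le |\+A|\, \InNorms{x^t_a - x^{t-1}_a}_{x^{t-1}_a}^2$ once we note $x^{t-1}_a[r] \le 1$. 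Then $\InNorms{x^t_a - x^{t-1}_a}_1 \le \sqrt{|\+A|}\,\InNorms{x^t_a - x^{t-1}_a}_{x^{t-1}_a}$, which is the per-row inequality I want.

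Putting the pieces together: from the decomposition above, triangle inequality, and the convexity of the first sum,
\begin{align*}
    \InNorms{x^t - x^{t-1}}_1 \le \sum_{a\in\+A} x^{t-1}[a]\,\InNorms{x^t_a - x^{t-1}_a}_1 + \InNorms{x^t - x^{t-1}}_1 \cdot \InNorms{M^t \text{ column-swap term}},
\end{align*}
where the second term is bounded using $\InNorms{x^t_a}_1 = 1$ and the stability of \ref{OFTRL-LogBar} iterates from \Cref{lemma:stability of eta} (small $\eta$ keeps $M^t$ close to $M^{t-1}$). After absorbing that term — this is where the hypothesis $\eta < \frac{1}{28|\+A|}$ enters, guaranteeing the coefficient is strictly below $1$ — one is left with $\InNorms{x^t - x^{t-1}}_1 \lesssim \sum_a x^{t-1}[a]\,\InNorms{x^t_a-x^{t-1}_a}_1 \le \sum_a \InNorms{x^t_a - x^{t-1}_a}_1 \le \sqrt{|\+A|}\sum_a \InNorms{x^t_a - x^{t-1}_a}_{x^{t-1}_a}$. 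Squaring and applying Cauchy--Schwarz over the $|\+A|$ actions yields $\InNorms{x^t - x^{t-1}}_1^2 \le C|\+A| \sum_a \InNorms{x^t_a-x^{t-1}_a}_{x^{t-1}_a}^2$ with an absolute constant $C$; tracking the constants through the absorption step should give $C = 64$.

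The main obstacle is controlling the second sum in the fixed-point decomposition — the term coming from the change in the stationary weights $x^t[a]-x^{t-1}[a]$ multiplying $x^t_a$ — since it reintroduces $\InNorms{x^t - x^{t-1}}_1$ on the right-hand side. The key is that each $\InNorms{x^t_a - x^{t-1}_a}_1$ is small (order $\eta$) by the iterate-stability of \ref{OFTRL-LogBar}, so the perturbation of the stationary distribution is a contraction when $\eta$ is small relative to $1/|\+A|$; making this contraction estimate quantitative enough to recover the explicit constant $64$ is the delicate part, and it is exactly the point where the analysis of \citep{anagnostides2022uncoupled} must be adapted to the weighted, variable-step-size setting.
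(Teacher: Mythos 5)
There is a genuine gap: the absorption step at the heart of your argument cannot work. From your fixed-point decomposition, the second term satisfies only $\InNorms{\sum_{a}(x^t[a]-x^{t-1}[a])\,x^t_a}_1 \le \sum_a |x^t[a]-x^{t-1}[a]|\cdot\InNorms{x^t_a}_1 = \InNorms{x^t-x^{t-1}}_1$, i.e.\ the coefficient is exactly $1$ and does not shrink with $\eta$, so it cannot be absorbed into the left-hand side. More fundamentally, the inequality your route would establish, $\InNorms{x^t-x^{t-1}}_1 \lesssim \sum_a \InNorms{x^t_a - x^{t-1}_a}_1$, is false as a general fact about stochastic matrices and their stationary distributions: take $|\+A|=2$ with rows $x_1=(1-\epsilon,\epsilon)$, $x_2=(\epsilon,1-\epsilon)$ (stationary distribution $(1/2,1/2)$) and perturb only the first row to $(1-2\epsilon,2\epsilon)$ (stationary distribution $(1/3,2/3)$); the left-hand side is $1/3$ while the right-hand side is $2\epsilon$. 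The stationary distribution is sensitive to \emph{multiplicative}, not additive, changes in the rows --- which is exactly why the lemma's right-hand side is stated in the local norm $\InNorms{x^t_a-x^{t-1}_a}_{x^{t-1}_a}$ (in the example this norm is at least $1$, since the second coordinate of the row doubles), and why no small-$\eta$ contraction can rescue an additive triangle-inequality decomposition.

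The paper's proof is structured differently: it reduces to Lemma 4.2 of Anagnostides et al., which controls the perturbation of the stationary distribution under \emph{multiplicative} row perturbations (via the Markov chain tree theorem), and shows that the conclusion holds whenever $\sum_a \mu^t_a \le \frac{1}{2}$, where $\mu^t_a := \max_{a'}\bigl|1 - x^t_a[a']/x^{t-1}_a[a']\bigr|$. The only remaining work is to verify this condition, which follows from $\mu^t_a \le \InNorms{x^t_a - x^{t-1}_a}_{x^{t-1}_a} \le 14\eta$ (the iterate-stability bound, \Cref{stablity of iterates}) together with $\eta < \frac{1}{28|\+A|}$. A self-contained proof would therefore need to reproduce the tree-theorem perturbation argument rather than the additive fixed-point decomposition; as a secondary point, even granting your absorption, dropping the weights $x^{t-1}[a]$ before applying Cauchy--Schwarz over $a$ would cost an extra factor of $|\+A|$ relative to the stated constant.
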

Combining the above then gives a RVU bound for swap regret of \Cref{alg:swap regret} with variable step size.
\begin{theorem}[RVU for swap regret]
\label{thm:RVU for swap regret}
    Let $\eta < \frac{1}{28 |\+A|}$. Then for any $t \in [T]$, the swap regret of \Cref{alg:swap regret} is at most 
    \begin{align*}
        \frac{2|\+A|^2 \log T}{\eta_T} + \sum_{t=1}^T 4\eta_t \InNorms{u^t - m^t}_\infty^2 - \sum_{t=1}^T \frac{\InNorms{x^t - x^{t-1}}_1^2}{1024|\+A|\eta_{t-1}} .
    \end{align*}
\end{theorem}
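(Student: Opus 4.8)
The plan is to derive the bound by combining three ingredients that are already in place: the per-action RVU guarantee of \Cref{corollary: RVU-logbar}, the Blum--Mansour decomposition $\sreg^T = \sum_{a\in\+A}\Reg^T_a$, and the movement comparison of \Cref{lemma: l1norm}. Following the discussion preceding \Cref{lemma: l1norm}, I would first instantiate \Cref{corollary: RVU-logbar} for every external regret minimizer $\mathfrak{R}_a$, taking the comparator $x^*_a \in \relint(\Delta_{\+A})$ to be the target vertex mixed with the uniform distribution at weight $1/T$, so that $R(x^*_a)\le 2|\+A|\log T$ while the regret changes by at most a constant (this is the stated ``$\log T$ factor due to the diameter of the log barrier''). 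Summing these inequalities over $a\in\+A$ and invoking $\sreg^T=\sum_a\Reg^T_a$ gives
\begin{align*}
\sreg^T \le \frac{2|\+A|^2\log T}{\eta_T} + 2\sum_{t=1}^T \eta_t \InNorms{u^t - m^t}_\infty^2 - \sum_{t=1}^T\sum_{a\in\+A}\frac{\InNorms{x^t_a - x^{t-1}_a}^2_{x^{t-1}_a}}{16\eta_{t-1}},
\end{align*}
where the per-action first-order terms collapse into $\InNorms{u^t - m^t}_\infty^2$ because $u^t_a = x^t[a]\,u^t$, $\sum_a x^t[a]^2 \le 1$, and $\InNorms{\cdot}_{*,x}\le\InNorms{\cdot}_\infty$ for the log-barrier local norm; any residual caused by the mismatch between $x^t[a]$ and $x^{t-1}[a]$ in the optimistic prediction is a lower-order term of size $O(\eta_t\InNorms{x^t-x^{t-1}}_1^2)$, which is why the theorem carries the constant $4$ rather than $2$ and why the condition $\eta<\frac1{28|\+A|}$ (making $\eta_t=\eta/w_t$ small) is imposed.

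Next I would upgrade the favorable (negative) sum via \Cref{lemma: l1norm}: for each fixed $t$, since $\eta<\frac1{28|\+A|}$,
\begin{align*}
\sum_{a\in\+A}\InNorms{x^t_a - x^{t-1}_a}^2_{x^{t-1}_a} \ge \frac{1}{64|\+A|}\InNorms{x^t-x^{t-1}}_1^2, \qquad\text{hence}\qquad -\sum_{a\in\+A}\frac{\InNorms{x^t_a - x^{t-1}_a}^2_{x^{t-1}_a}}{16\eta_{t-1}} \le -\frac{\InNorms{x^t-x^{t-1}}_1^2}{1024|\+A|\eta_{t-1}}.
\end{align*}
Substituting this into the previous display, and absorbing the $O(\eta_t\InNorms{x^t-x^{t-1}}_1^2)$ leakage identified above into the slack between $2$ and $4$ in the middle term (using that $\eta_t$ is small relative to $1/\eta_{t-1}$), yields exactly
\begin{align*}
\sreg^T \le \frac{2|\+A|^2\log T}{\eta_T} + \sum_{t=1}^T 4\eta_t\InNorms{u^t-m^t}_\infty^2 - \sum_{t=1}^T\frac{\InNorms{x^t-x^{t-1}}_1^2}{1024|\+A|\eta_{t-1}},
\end{align*}
which is the claim.

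I expect the genuine difficulty to sit not in this assembly step but in the two imported statements. \Cref{lemma: l1norm} is the crux: it says the $\ell_1$-movement of the stationary distribution $x^t$ of the row-stochastic matrix $M^t$ (whose rows are the $x^t_a$) is dominated by the local-norm movements of its rows, even though the fixed-point map $\{x_a\}_a\mapsto x$ solving $Mx=x$ has no obvious Lipschitz modulus in these norms; the smallness condition $\eta<\frac1{28|\+A|}$ is precisely what keeps $M^t$ close to $M^{t-1}$ (entrywise, relative to $x^{t-1}_a$) so that a perturbation argument for the stationary distribution goes through, as in \citet{anagnostides2022uncoupled}. A secondary point needing care is the per-action RVU bookkeeping under the \emph{decreasing} step size $\eta_t=\eta/w_t$ and weighted utilities $u^t=w_t\hu^t$, which is what \Cref{thm: RVU-stable}, \Cref{lemma:stability of eta}, and the $w_t$-properties of \Cref{lemma:alpha w properties} are there to handle; once those are granted, the argument above is routine bookkeeping.
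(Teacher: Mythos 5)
Your proposal is correct and takes essentially the same route as the paper's proof: instantiate \Cref{corollary: RVU-logbar} for each $\mathfrak{R}_a$ (with the uniform-mixing comparator to bound the log-barrier diameter by $O(|\+A|\log T)$), collapse the first-order terms via $u^t_a=x^t[a]\,u^t$ and $\sum_a (x^t[a])^2\le 1$, sum using $\sreg^T=\sum_{a\in\+A}\Reg^T_a$, and apply \Cref{lemma: l1norm} to lower-bound the negative movement term by $\InNorms{x^t-x^{t-1}}_1^2/(1024|\+A|\eta_{t-1})$. The only (minor) divergence is that you explicitly account for the $x^t[a]$-versus-$x^{t-1}[a]$ mismatch in the optimistic prediction as the source of the constant $4$, whereas the paper's proof writes the prediction for $\mathfrak{R}_a$ simply as $m^t x^t[a]$ and obtains the constant $2$, the theorem's $4$ being slack.
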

\subsection{Bounding Per-State Regret} 
In this subsection, we prove upper bounds for $\reg_{i,h}^t(s)$, the per-state regret for player $i \in [n]$ and any $(h,s,t) \in [H] \times [\+S] \times [T]$. For simplicity of notation, throughout this subsection, we fix $(i, h, s)$ and omit the subscripts $(h, s)$ within the policies and $Q$-functions, i.e., $\pi_{i,h}^t(\cdot \mid s)$ will be abbreviated as $\pi_{i}^t$ and $Q_{i,h}^t(s, \cdot)$ will be abbreviated as $Q^t_i$. We also overload $T$ be any iteration $T \ge 1$.

Recall the policy update step in \Cref{alg:main alg} where we feed $u^t_i = w_t \frac{1}{H} (Q^t_i)^\top \pi^t_{-i}$ to BM-OFTRL-LogBar (\Cref{alg:swap regret}). Thus we can relate $\reg_{i,h}^T(s)$ to the regret incurred by OFTRL-Log-Bar for any $T \ge 1$ as follows:
\begin{align}
    \reg_{i,h}^{T}(s) &= \max_{\phi_i} \sum_{t=1}^{T} \alpha_{T}^t \InAngles{\phi_i \diamond \pi^t_{i}- \pi_i^t, Q^t_i \pi^t_{-i}} \nonumber \\
    &=H \alpha_{T}^1 \cdot \underbrace{\max_{\phi_i} \sum_{t=1}^{T}  \InAngles{\phi_i \diamond \pi^t_{i}- \pi_i^t, u^t_i}}_{\sreg_i^{T}}, \label{eq:weighted to unweighted}
\end{align}
where the second equality holds since $ \alpha^t_{T} = \alpha_{T}^1 w_t$ (defined in \Cref{eq:alpha_t^i} and \Cref{eq:w_t}).
Now we apply \Cref{thm:RVU for swap regret} and obtain that for any $T \ge 1$, 
\begin{align*}
    &\sreg^{T}_i = \max_{\phi_i} \sum_{t=1}^{T} \InAngles{\phi_i \diamond \pi^t_{i}- \pi_i^t, u^t_i}\\
    &\le \frac{2A_i^2 \log T}{\eta_{T}} +  \sum_{t=1}^{T} 2\eta_t \InNorms{u^t_i -u^{t-1}_i}^2_\infty - \sum_{t=1}^{T} \frac{\InNorms{\pi_i^t - \pi_i^{t-1}}_1^2}{1024A_i\eta_{t-1}}  \\
    &\le \frac{2w_{T} A_i^2 \log T}{\eta} + \sum_{t=1}^{T} \frac{2\eta w_t}{H^2} \InNorms{Q^t_i \pi^t_{-i} - Q^{t-1}_i \pi^{t-1}_{-i}}^2_\infty \\
    & \quad \quad \quad -\sum_{t=1}^{T}\frac{w_t}{1024 A_i \eta H}  \InNorms{\pi_i^t - \pi_i^{t-1}}_1^2,\\
\end{align*}
where we use $\eta_t =\frac{\eta}{w_t}$, $u^t_i = w_t \frac{1}{H} (Q^t_i)^\top \pi^t_{-i}$ and $w_{t-1} = \frac{w_t (t-1)}{H+t-1} \ge \frac{w_t}{H}$ in the last equality.
Using the fact that $\InNorms{Q^t_i \pi^t_{-i} - Q^{t-1}_i \pi^{t-1}_{-i}}^2_\infty \le 2\InNorms{Q^t_i - Q^{t-1}_i}^2_\infty + 2\InNorms{Q^t_i(\pi^t_{-i}-\pi^{t-1}_{-i})}^2_\infty$ as well as  $\InNorms{Q^t_i - Q^{t-1}_i}^2_\infty \le \alpha_t^2 H^2$ and $\InNorms{Q^t_i(\pi^t_{-i}-\pi^{t-1}_{-i})}^2_\infty \le H^2 \InNorms{\pi^t_{-i} - \pi^{t-1}_{-i}}^2_1$, we can further upper bound $\sreg^{T}_i$ as 
\begin{align}
    %&\le \frac{2w_{T} A_i^2 \log T}{\eta} + \sum_{t=1}^{T}\frac{4\eta w_t}{H^2} \InNorms{Q^t_i - Q^{t-1}_i}^2_\infty  \\
    %&+  \sum_{t=1}^{T} 4\eta w_t \InNorms{\pi^t_{-i} - \pi^{t-1}_{-i}}^2_1 - \sum_{t=1}^{T} \frac{w_{t-1}}{1024m_i \eta} \InNorms{\pi_i^t - \pi_i^{t-1}}_1^2  \\
    &\frac{2w_{T} A_i^2 \log T}{\eta} + 4\eta\underbrace{\sum_{t=1}^{T} w_t (\alpha_t)^2}_{\textbf{I}} + 4\eta \underbrace{ \sum_{t=1}^{T} w_t  \InNorms{\pi^t_{-i} - \pi^{t-1}_{-i}}^2_1}_{\textbf{II}} \nonumber \\
    &- \sum_{t=1}^{T}\frac{w_t}{1024A_i\eta H} \InNorms{\pi_i^t - \pi_i^{t-1}}_1^2. \label{eq:individual regret}
\end{align}
Now we focus on upper bounding term \textbf{I} and \textbf{II}. From \Cref{lemma:alpha w properties}, we know $\sum_{t=1}^{T} \alpha_{T}^t (\alpha_t)^2 \le \frac{4H}{T}$. This further implies $\textbf{I} \le \frac{4H}{\alpha_{T}^1 T}$ since $\alpha_{T}^t = \alpha_{T}^1 w_t$. 

For term \textbf{II}, we have
\begin{align*}
    \textbf{II} &= \sum_{t=1}^{T} w_t  \InNorms{\pi^t_{-i} - \pi^{t-1}_{-i}}^2_1 \le \sum_{t=1}^{T} w_t \InParentheses{ \sum_{j \ne i} \InNorms{\pi^t_{j} - \pi^{t-1}_j}_1}^2 \\
    &\le (n-1) \sum_{t=1}^{T} w_t\sum_{j\ne i} \InNorms{\pi^t_{j} - \pi^{t-1}_j}_1^2, %\le \sum_{t=1}^{T} \sum_{j=1}^n  \InNorms{\pi^t_{j} - \pi^{t-1}_j}_1^2
\end{align*}
where the first inequality holds since the total variational distance between two product distribution is bounded by the sum of total variational distance between each marginal distribution. 

Then the total swap regret $\sum_{i=1}^n \sreg^T_i$ can be upper bounded by
\begin{align*}
    & \sum_{i=1}^n \sreg^T_i \le \frac{2 w_{T}n A^2_{\max} \log T}{\eta} + 4\eta n \sum_{t=1}^{T} w_t (\alpha_t)^2 \\
    &+ \sum_{j=1}^n \sum_{t=1}^{T} \InParentheses{4\eta w_t n^2 - \frac{w_t}{1024 A_j \eta H} } \InNorms{\pi^t_j - \pi^{t-1}_j}_1^2 \\
    &\le \frac{2 w_{T} n A^2_{\max}\log T}{\eta} + 4\eta n \sum_{t=1}^{T} w_t (\alpha_t)^2  \\
    &-  4\eta n^2 \sum_{j=1}^n \sum_{t=1}^{T} w_t\InNorms{\pi^t_j - \pi^{t-1}_j}_1^2,
\end{align*}
since $\eta = \frac{1}{128n\sqrt{H}A_{\max}}$. Since the swap regret is non-negative, the above inequality implies
\begin{align*}
    &\textbf{II} \le  n \sum_{j=1}^n \sum_{t=1}^{T} w_t \InNorms{\pi^t_j - \pi^{t-1}_j}_1^2 \\
    &\le  8192 H n^2 A_{\max}^4  w_{T} \log T + \underbrace{ \sum_{t=1}^{T} w_t (\alpha_t)^2}_{=\textbf{I}}.
\end{align*}
Now we can plug the above bounds on terms \textbf{I} and \textbf{II} into the individual regret bound in \Cref{eq:individual regret} and multiply $H \alpha_{T}^1$ (\Cref{eq:weighted to unweighted}) to bound $\reg^{T}_{i,h}(s)$. Since $\eta = \frac{1}{128n\sqrt{H}A_{\max}}$ and $\alpha_{T}^1 w_{T} = \alpha_{T} \le \frac{2H}{T}$, we finally get
\begin{align}
    \reg_{i,h}^{T}(s) &= H\alpha_{T}^1 \cdot \sreg^{T}_i \nonumber \\
    &\le \frac{2H A_{\max}^2 \alpha_{T}\log T}{\eta} + 4\eta H \alpha_{T}^1 \cdot  ( \textbf{I} + \textbf{II}) \nonumber \\
    &\le 2048 n H^{5/2} A_{\max}^3 \frac{\log T}{T}
    .  \label{eq:bound on weighted regret}
\end{align}
Since the above holds for all $(i, s, h) \in [n] \times [\+S] \times [H]$ and $T \ge 1$, we conclude that the maximum weighted regret over all players,  all states, and all steps is  $\max_{h \in [H]} \regbar^t_h \le O(\frac{\log t}{t})$.  

\paragraph{Proof of \Cref{thm:main result}} Combining \Cref{thm:cegap<= regret} and the weighted regret upper bound in \Cref{eq:bound on weighted regret}, we conclude that
\begin{align*}
    \cegap(\hpi^T) &\le 2H \cdot \frac{1}{T} \sum_{t=1}^T \max_{h \in [H]} \regbar^t_h\\
    &\le 4096 H^{3.5} n A_{\max}^3 \cdot \frac{1}{T} \sum_{t=1}^T \frac{\log t}{t} \\
    &\le 8192 H^{3.5} n A_{\max}^3 \cdot \frac{(\log T)^2}{T}.
\end{align*}
This completes the proof.

\section{Conclusion and Future Directions}
In this work, we propose a policy optimization algorithm with near-optimal $\Tilde{O}(T^{-1})$ convergence rate to correlated equilibrium in general-sum Markov games. Our result improves the results and answers the open questions in previous works~\citep{zhang2022policy,yang2023ot}. 

A natural future direction is to further improve the convergence rates with respect to the number of iterations $T$.  We remark that shaving the $\poly \log T$ factors is challenging even in normal-form games. Other directions include improving the dependence on the horizon $H$ and size of action set $A_{\max}$, and generalizing our results in the setting with oracle access to reward/transition to the sample-based setting where all game parameters are unknown and have to be learned from iteractions.

\paragraph{Acknowledgements} HL is supported by NSF award IIS-1943607 and a Google Research Scholar Award. YC and WZ are supported by the NSF Award CCF-1942583 (CAREER).

\bibliographystyle{apalike}
\bibliography{ref.bib}

%%%%%%%%%%%%%%%%%%%%%%%%%%%%%%%%%%%%%%%%%%%%%%%%%%%%%%%%%%%%
\section*{Checklist}

% %%% BEGIN INSTRUCTIONS %%%
The checklist follows the references. For each question, choose your answer from the three possible options: Yes, No, Not Applicable.  You are encouraged to include a justification to your answer, either by referencing the appropriate section of your paper or providing a brief inline description (1-2 sentences). 
Please do not modify the questions.  Note that the Checklist section does not count towards the page limit. Not including the checklist in the first submission won't result in desk rejection, although in such case we will ask you to upload it during the author response period and include it in camera ready (if accepted).

\textbf{In your paper, please delete this instructions block and only keep the Checklist section heading above along with the questions/answers below.}
% %%% END INSTRUCTIONS %%%

 \begin{enumerate}

 \item For all models and algorithms presented, check if you include:
 \begin{enumerate}
   \item A clear description of the mathematical setting, assumptions, algorithm, and/or model. [Yes]
   \item An analysis of the properties and complexity (time, space, sample size) of any algorithm. [Yes]
   \item (Optional) Anonymized source code, with specification of all dependencies, including external libraries. [Not Applicable]
 \end{enumerate}

 \item For any theoretical claim, check if you include:
 \begin{enumerate}
   \item Statements of the full set of assumptions of all theoretical results. [Yes]
   \item Complete proofs of all theoretical results. [Yes]
   \item Clear explanations of any assumptions. [Yes]     
 \end{enumerate}

 \item For all figures and tables that present empirical results, check if you include:
 \begin{enumerate}
   \item The code, data, and instructions needed to reproduce the main experimental results (either in the supplemental material or as a URL). [Not Applicable]
   \item All the training details (e.g., data splits, hyperparameters, how they were chosen). [Yes/No/Not Applicable]
         \item A clear definition of the specific measure or statistics and error bars (e.g., with respect to the random seed after running experiments multiple times). [Not Applicable]
         \item A description of the computing infrastructure used. (e.g., type of GPUs, internal cluster, or cloud provider). [Not Applicable]
 \end{enumerate}

 \item If you are using existing assets (e.g., code, data, models) or curating/releasing new assets, check if you include:
 \begin{enumerate}
   \item Citations of the creator If your work uses existing assets. [Not Applicable]
   \item The license information of the assets, if applicable. [Not Applicable]
   \item New assets either in the supplemental material or as a URL, if applicable. [Not Applicable]
   \item Information about consent from data providers/curators. [Not Applicable]
   \item Discussion of sensible content if applicable, e.g., personally identifiable information or offensive content. [Not Applicable]
 \end{enumerate}

 \item If you used crowdsourcing or conducted research with human subjects, check if you include:
 \begin{enumerate}
   \item The full text of instructions given to participants and screenshots. [Not Applicable]
   \item Descriptions of potential participant risks, with links to Institutional Review Board (IRB) approvals if applicable. [Not Applicable]
   \item The estimated hourly wage paid to participants and the total amount spent on participant compensation. [Not Applicable]
 \end{enumerate}

 \end{enumerate}

\newpage
\onecolumn
\aistatstitle{Supplementary Materials for Near-Optimal Policy Optimization for Correlated Equilibrium in
General-Sum Markov Games}
\appendix

\thispagestyle{empty}
\setcounter{tocdepth}{3}
\tableofcontents
%\addtocounter{page}{-1}
\thispagestyle{empty}

\section{Properties of $\alpha_t^i$ and $w_i$}
We present several useful properties of the sequence $\{\alpha_t^i\}_{t\ge 1, 1 \le i \le t}$ and $\{w_t\}_{t\ge 1}$ in the following lemmas that are known in previous works~\citep{jin2018q, zhang2022policy}. We first recall that $\alpha_t = \frac{H+1}{H+t}$ for all $t \ge 1$. The definitions of $\{\alpha_t^i\}_{t\ge 1, 1 \le i \le t}$ and $\{w_t\}_{t\ge 1}$ are 
\[
\alpha_t^t = \alpha_t, \quad \alpha_t^i = \alpha_i \Pi_{j=i+1}^t (1-\alpha_j),  \forall i \le t-1,
\]
and 
\[
w_t = \frac{\alpha_t^t}{\alpha_t^1}.
\]

\begin{lemma}[\citep{jin2018q}]
\label{lemma:alpha property}
    The sequence $\{\alpha_t^i\}_{t\ge 1, 1 \le i \le t}$ satisfies the following:
    \[
    \sum_{t=i}^\infty \alpha_t^i = 1 + \frac{1}{H}, \forall i \ge 1.
    \]
\end{lemma}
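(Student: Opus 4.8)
The plan is to prove the finite-horizon partial-sum identity $\sum_{t=i}^{N} \alpha_t^i = 1 - \Pi_{j=i+1}^{N}(1-\alpha_j)$ for every $N \ge i$ (with the empty product equal to $1$, so the $N=i$ case reads $\alpha_i^i = \alpha_i = 1 - (1-\alpha_i)$), and then take $N \to \infty$, showing that the trailing product $\Pi_{j=i+1}^{N}(1-\alpha_j) \to -\frac1H$. Wait — that limit is negative, which signals that the correct bookkeeping is $\sum_{t=i}^{N}\alpha_t^i = 1 + \frac1H$ only in the limit and the partial sums must be handled with the precise recursion; let me restate the approach carefully below.

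First I would set up a one-step recursion for the tail sums. Fix $i$ and define $S_N := \sum_{t=i}^{N} \alpha_t^i$. Using the definition $\alpha_t^i = \alpha_i \Pi_{j=i+1}^t (1-\alpha_j)$ for $t \ge i+1$ and $\alpha_i^i = \alpha_i$, I would factor: $S_N = \alpha_i\bigl(1 + \sum_{t=i+1}^{N}\Pi_{j=i+1}^{t}(1-\alpha_j)\bigr)$. The inner sum is a sum of products of the $(1-\alpha_j)$'s, which telescopes against $\alpha_j = \frac{H+1}{H+j}$, i.e. $1-\alpha_j = \frac{j-1}{H+j}$. So $\Pi_{j=i+1}^{t}(1-\alpha_j) = \Pi_{j=i+1}^{t}\frac{j-1}{H+j}$, a ratio of rising factorials that simplifies to $\frac{(t-1)!/(i-1)!}{(H+t)!/(H+i)!} = \frac{(t-1)!\,(H+i)!}{(i-1)!\,(H+t)!}$. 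The cleanest route is then to recognize this as (up to constants) a Beta-function / hockey-stick identity: $\sum_{t} \frac{(t-1)!}{(H+t)!}$ telescopes because $\frac{(t-1)!}{(H+t)!} = \frac{1}{H}\bigl(\frac{(t-1)!}{(H+t-1)!} - \frac{t!}{(H+t)!}\bigr)$. I would verify this algebraic identity directly (multiply through by $(H+t)!$ and check), then collapse the telescoping sum over $t$ from $i+1$ to $N$.

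Carrying out the telescope gives $\sum_{t=i+1}^{N}\Pi_{j=i+1}^{t}(1-\alpha_j) = \frac{(H+i)!}{(i-1)!}\cdot\frac1H\bigl(\frac{i!}{(H+i)!} - \frac{N!}{(H+N)!}\cdot\frac{1}{\text{(appropriate factor)}}\bigr)$; after simplification the leading term contributes $\frac{i}{H}$ and the $N$-dependent term vanishes as $N\to\infty$ (since $\frac{N!}{(H+N)!} = \Theta(N^{-H}) \to 0$ for $H \ge 1$). Plugging back, $\lim_{N\to\infty} S_N = \alpha_i\bigl(1 + \frac{i}{H}\bigr) = \frac{H+1}{H+i}\cdot\frac{H+i}{H} = \frac{H+1}{H} = 1 + \frac1H$, as claimed. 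The main obstacle is purely bookkeeping: getting the rising-factorial simplification and the telescoping identity $\frac{(t-1)!}{(H+t)!} = \frac1H\bigl(\frac{(t-1)!}{(H+t-1)!} - \frac{t!}{(H+t)!}\bigr)$ exactly right, keeping track of the index shifts between $i$, $i+1$, $t$, and $N$; there is no conceptual difficulty, and a cleaner alternative — which I would mention as a sanity check — is a direct induction on $i$ downward, using the single-step identity $\alpha_t^{i} = (1-\alpha_i)\,\alpha_t^{i+1}$ for $t \ge i+1$ together with $\alpha_i^i = \alpha_i$, which immediately yields $\sum_{t\ge i}\alpha_t^i = \alpha_i + (1-\alpha_i)\sum_{t\ge i+1}\alpha_t^{i+1}$ and hence, if $\sum_{t\ge i+1}\alpha_t^{i+1} = 1+\frac1H$, that $\sum_{t\ge i}\alpha_t^i = \alpha_i + (1-\alpha_i)(1+\frac1H) = 1 + \frac1H - \alpha_i\cdot\frac1H + \cdots$ — one then checks the base of the induction via the explicit asymptotic $\alpha_t^i = \Theta(i^{H}/t^{H+1})$, whose infinite sum is finite and can be evaluated by the same telescoping trick applied once.
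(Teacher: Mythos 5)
The paper does not actually prove this lemma---it simply cites \citet{jin2018q}---so there is no in-paper argument to compare against; your job was to supply a self-contained proof, and your main line of attack does that correctly. The telescoping computation is sound: $1-\alpha_j = \frac{j-1}{H+j}$, hence $\prod_{j=i+1}^{t}(1-\alpha_j) = \frac{(t-1)!\,(H+i)!}{(i-1)!\,(H+t)!}$, the identity $\frac{(t-1)!}{(H+t)!} = \frac{1}{H}\bigl(\frac{(t-1)!}{(H+t-1)!} - \frac{t!}{(H+t)!}\bigr)$ checks out (multiply by $(H+t)!$ and use $(H+t)-t = H$), the tail $\frac{N!}{(H+N)!} = \Theta(N^{-H})$ vanishes for $H\ge 1$, and the collapse gives $\sum_{t\ge i+1}\prod_{j=i+1}^{t}(1-\alpha_j) = \frac{i}{H}$, whence $\sum_{t\ge i}\alpha_t^i = \alpha_i\bigl(1+\frac{i}{H}\bigr) = \frac{H+1}{H}$. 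This is a complete and correct proof (the standard argument in \citealp{jin2018q} is an equivalent induction; yours is, if anything, more explicit). Two blemishes worth fixing. First, the opening ``plan'' asserting $\sum_{t=i}^{N}\alpha_t^i = 1-\prod_{j=i+1}^{N}(1-\alpha_j)$ is false for these weights (that identity governs the sum over $i$ at fixed $t$, which equals $1$, not the sum over $t$ at fixed $i$); you notice the contradiction yourself and abandon it, but it should simply be deleted. Second, the ``cleaner alternative'' at the end rests on the identity $\alpha_t^{i} = (1-\alpha_i)\,\alpha_t^{i+1}$, which is wrong: from the definitions, $\alpha_t^i = \frac{\alpha_i(1-\alpha_{i+1})}{\alpha_{i+1}}\,\alpha_t^{i+1} = \frac{i}{H+i}\,\alpha_t^{i+1}$ for $t\ge i+1$, and $\frac{i}{H+i}\ne 1-\alpha_i = \frac{i-1}{H+i}$. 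With the correct factor the induction does close, since $\alpha_i + \frac{i}{H+i}\cdot\frac{H+1}{H} = \frac{H+1}{H}$, so the alternative route is salvageable---but as written it would not verify. Neither issue affects the validity of your primary argument.
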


\begin{lemma}[\citep{zhang2022policy}] 
\label{lemma:alpha w properties}
The sequence $\{\alpha_t^i\}_{t\ge 1, 1 \le i \le t}$ and $\{w_t\}_{t\ge 1}$ satisfies the following:
    \begin{itemize}
        \item[1.] $\sum_{t=1}^T \alpha_T^t (\alpha_t)^2 \le \frac{4H}{T} $.
        \item[2.] $w^t = \frac{\alpha_T^t}{\alpha_T^1}$ for all $T \ge t$.
        \item[3.] $(\frac{1}{w_{t-1}}-\frac{1}{w_t})\sum_{i=1}^{t-1} w^i =\frac{H+1}{H}$.
        \item[4.] Given a sequence $\{\Delta^t_h\}_{h,t}$ defined by \begin{align*}
            \begin{cases}
                \Delta^t_h = \sum_{i=1}^t\alpha_t^i \Delta^{i}_{h+1} + \beta_t,\\
                \Delta^t_{H+1} = 0, \forall t,
            \end{cases}
        \end{align*}
        where $\{\beta_t\}$ is non-increasing in $t$, then $\Delta^{t+1}_h \le \Delta^{t}_h$ for all $t \ge 1$ and $h \in [H+1]$.
    \end{itemize}
\end{lemma}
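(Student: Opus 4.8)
The plan is to reduce all four parts to two elementary facts about the weights: the one-step recursion $\alpha_t^i=(1-\alpha_t)\alpha_{t-1}^i$ for $i\le t-1$ together with $\alpha_t^t=\alpha_t$ (immediate from the definition), and the normalization $\sum_{i=1}^t\alpha_t^i=1$ (since $\alpha_1=1$ forces $\prod_{j=1}^t(1-\alpha_j)=0$, applying the affine recursion $V^t=(1-\alpha_t)V^{t-1}+\alpha_t X^t$ with $X^t\equiv1$ gives $V^t=\sum_{i=1}^t\alpha_t^i=1$). Part~2 is then a one-liner: writing $\alpha_T^t=\alpha_t\prod_{j=t+1}^T(1-\alpha_j)$ and $\alpha_T^1=\alpha_1\prod_{j=2}^T(1-\alpha_j)$, the tail product $\prod_{j=t+1}^T(1-\alpha_j)$ cancels in the quotient, leaving $\alpha_T^t/\alpha_T^1=\alpha_t^t/\alpha_t^1=w_t$, which carries no dependence on $T$. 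For Part~3 I would first extract the recursion $w_t/w_{t-1}=\alpha_t/\big(\alpha_{t-1}(1-\alpha_t)\big)=(H+t-1)/(t-1)$, hence the closed form $w_t=\binom{H+t-1}{H}$; then the hockey-stick identity (or a one-line induction) gives $\sum_{i=1}^{t-1}w_i=\tfrac{t-1}{H+1}\,w_t$, while the recursion again gives $\tfrac1{w_{t-1}}-\tfrac1{w_t}=\tfrac{w_t-w_{t-1}}{w_{t-1}w_t}=\tfrac{H}{(t-1)w_t}$. Multiplying, the left-hand side equals $\tfrac{H}{H+1}$, which in particular is $\le\tfrac{H+1}{H}$ — the bound actually invoked downstream in the proof of Lemma~\ref{lemma:stability of eta}.

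For Part~1, set $S_T:=\sum_{t=1}^T\alpha_T^t(\alpha_t)^2$. Peeling off the $t=T$ term and using $\alpha_T^t=(1-\alpha_T)\alpha_{T-1}^t$ for $t<T$ yields the recursion $S_T=(1-\alpha_T)S_{T-1}+(\alpha_T)^3$, with $S_1=1\le 4H$. Assuming inductively $S_{T-1}\le 4H/(T-1)$ and substituting $1-\alpha_T=(T-1)/(H+T)$ gives $S_T\le \tfrac{4H}{H+T}+\big(\tfrac{H+1}{H+T}\big)^3$; since $\tfrac{4H}{T}-\tfrac{4H}{H+T}=\tfrac{4H^2}{T(H+T)}$, it remains to verify $(H+1)^3T\le 4H^2(H+T)^2$, which follows from $(H+T)^2\ge 4HT$ (AM–GM) and $(H+1)^3\le 16H^3$ (valid for all $H\ge1$), closing the induction.

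For Part~4 I would induct backwards on $h$, the statement at level $h$ being ``$\Delta^t_h$ is non-increasing in $t$''. The base $h=H+1$ holds since $\Delta^t_{H+1}\equiv 0$. For the step, write $\Delta^{t+1}_h-\Delta^t_h=\big(\sum_{i=1}^{t+1}\alpha_{t+1}^i\Delta^i_{h+1}-\sum_{i=1}^t\alpha_t^i\Delta^i_{h+1}\big)+(\beta_{t+1}-\beta_t)$; the last term is $\le 0$ by hypothesis, and using $\alpha_{t+1}^i=(1-\alpha_{t+1})\alpha_t^i$ for $i\le t$ the first bracket collapses to $\alpha_{t+1}\big(\Delta^{t+1}_{h+1}-\sum_{i=1}^t\alpha_t^i\Delta^i_{h+1}\big)$, which is $\le0$ because the inductive hypothesis makes $\Delta^{t+1}_{h+1}$ the smallest of $\Delta^1_{h+1},\dots,\Delta^{t+1}_{h+1}$, while $\sum_{i=1}^t\alpha_t^i\Delta^i_{h+1}$ is a genuine weighted average of the first $t$ of those (here $\sum_{i=1}^t\alpha_t^i=1$ is used). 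Hence $\Delta^{t+1}_h\le\Delta^t_h$, completing the induction.

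The routine parts are Parts~2 and~3, which are bookkeeping with the explicit formula for $w_t$. The only two places calling for a little care are the constant-chasing in Part~1 — where routing through AM–GM is what collapses the residual obligation to the clean inequality $(H+1)^3\le 16H^3$ — and, in Part~4, choosing the right invariant for the backward induction: one must propagate ``monotone in $t$'' rather than trying to compare $\Delta^{t+1}_{h+1}$ with $\Delta^t_h$ directly, which is precisely what turns the weight-shifting step into a one-line weighted-average comparison.
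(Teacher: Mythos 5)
Your proof is correct and complete on all four parts; note that the paper itself does not prove this lemma but imports it from \citet{jin2018q} and \citet{zhang2022policy}, so there is no in-paper argument to compare against. Your reductions are the standard ones: the normalization $\sum_{i=1}^t\alpha_t^i=1$ and the peeling identity $\alpha_t^i=(1-\alpha_t)\alpha_{t-1}^i$ do all the work, the closed form $w_t=\binom{H+t-1}{H}$ is right, the induction for Part~1 closes (the residual inequality $(H+1)^3T\le 4H^2(H+T)^2$ indeed follows from $(H+T)^2\ge 4HT$ and $(H+1)^3\le 8H^3\le 16H^3$ for $H\ge 1$), and the backward induction on $h$ in Part~4 with the invariant ``non-increasing in $t$'' is exactly the right choice, with the weighted-average comparison justified by $\sum_{i=1}^t\alpha_t^i=1$ and $\alpha_t^i\ge 0$. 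One substantive observation you make deserves emphasis: for Part~3 the identity as stated has the constant inverted. With the paper's definitions one gets $\bigl(\tfrac{1}{w_{t-1}}-\tfrac{1}{w_t}\bigr)\sum_{i=1}^{t-1}w_i=\tfrac{H}{H+1}$ (check $t=2$: $w_1=1$, $w_2=H+1$, giving $\tfrac{H}{H+1}\cdot 1$), not $\tfrac{H+1}{H}$. You are right that this is harmless: the only downstream use, in the proof of \Cref{lemma:stability of eta}, invokes the quantity as an upper bound $\le\tfrac{H+1}{H}\le 2$, which your corrected value satisfies a fortiori.
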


\section{Missing Proofs in \Cref{sec:value update}}

\subsection{Equivalence between $V$ update and $Q$ update}

In this section, we prove the equivalence between \Cref{alg:main alg V update}  and \Cref{alg:main alg}. 
\begin{proposition}[Equivalence between $V$ update and $Q$ update]
\label{prop:V=Q}
    \Cref{alg:main alg V update} and \Cref{alg:main alg} are equivalent in the sense that they produce the same sequence of policies $\{\pi^t_{h}\}(s, \cdot)$.
\end{proposition}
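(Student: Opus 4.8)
I would exploit the fact that the two algorithms are literally identical except for the way the utility vectors fed to the per-state learners $\mathfrak{R}_{i,h,s}$ are formed: \Cref{alg:main alg} uses $\frac{w_t}{H}\InBrackets{Q^t_{i,h}\pi^t_{-i,h}}(s,\cdot)$, whereas \Cref{alg:main alg V update} uses $\frac{w_t}{H}\InBrackets{(r_h+\-P_h V^t_{i,h+1})\pi^t_{-i,h}}(s,\cdot)$, while the policy update, the weights $\{\alpha_t\},\{\alpha_t^i\},\{w_t\}$, and the smoothing recursions run in lockstep. Since $\mathfrak{R}$ is a deterministic online algorithm, it is enough to show that these utility vectors coincide at every iteration, and the plan is to prove this hand in hand with the invariant
\[
(\star)\qquad Q^t_{i,h}(s,\bolda) \;=\; r_h(s,\bolda) + \InBrackets{\-P_h V^t_{i,h+1}}(s,\bolda)\qquad\text{for all }i,h,s,\bolda\text{ and all }t\ge 1,
\]
where $Q^t$ is the quantity maintained by \Cref{alg:main alg} and $V^t$ the one maintained by \Cref{alg:main alg V update}. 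Given $(\star)$, one has $\frac{w_t}{H}\InBrackets{Q^t_{i,h}\pi^t_{-i,h}}(s,\cdot)=\frac{w_t}{H}\InBrackets{(r_h+\-P_h V^t_{i,h+1})\pi^t_{-i,h}}(s,\cdot)$, so the two algorithms forward identical utilities, hence produce identical per-player policies $\pi^{t+1}_{i,h}(\cdot\mid s)$ and identical product policies $\pi^{t+1}_h(\cdot\mid s)$; conversely, having the policies agree through time $t$ is exactly what makes $(\star)$ propagate.

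I would carry this out by induction on $t$, maintaining (i) the policies of the two algorithms agree for all $\tau\le t$ and all $(h,s)$, and (ii) $(\star)$ holds for all $1\le\tau\le t$; within each $t$, the invariant $(\star)$ is proved by a downward induction on $h$ with the boundary convention $V^\bullet_{i,H+1}\equiv 0$, $Q^\bullet_{i,H+1}\equiv 0$. In the base case $t=1$, the identity $\alpha_1=1$ wipes out the initializations, giving $V^1_{i,h}(s)=\InBrackets{(r_h+\-P_h V^1_{i,h+1})\pi^1_h}(s)$ and $Q^1_{i,h}(s,\bolda)=r_h(s,\bolda)+\InBrackets{\-P_h(Q^1_{i,h+1}\pi^1_{h+1})}(s,\bolda)$, and a downward induction on $h$ shows $\InBrackets{Q^1_{i,h+1}\pi^1_{h+1}}(s')=V^1_{i,h+1}(s')$, hence $(\star)$ at $t=1$. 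For the step $t-1\Rightarrow t$: part (ii) at times $\le t-1$ makes the forwarded utilities agree for $1\le\tau\le t-1$ and — with the $\tau=0$ case discussed below — for $\tau=0$ too, so each $\mathfrak{R}_{i,h,s}$ receives an identical input sequence and (i) holds at time $t$; then to get $(\star)$ at $t$ I would use $(\star)$ at $(t,h+1)$ together with the $V$-update to write $\InBrackets{Q^t_{i,h+1}\pi^t_{h+1}}(s')=\tfrac{1}{\alpha_t}\big(V^t_{i,h+1}(s')-(1-\alpha_t)V^{t-1}_{i,h+1}(s')\big)$, substitute this and $(\star)$ at $(t-1,h)$ into the smooth update $Q^t_{i,h}(s,\bolda)=(1-\alpha_t)Q^{t-1}_{i,h}(s,\bolda)+\alpha_t\big(r_h+\-P_h\InBrackets{Q^t_{i,h+1}\pi^t_{h+1}}\big)(s,\bolda)$, and simplify; the two $(1-\alpha_t)\-P_h V^{t-1}_{i,h+1}$ contributions cancel and exactly $r_h(s,\bolda)+\InBrackets{\-P_h V^t_{i,h+1}}(s,\bolda)$ remains.

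The one place needing care — and where I expect essentially all of the (small) difficulty to sit — is the $t=0$/initialization step. With the literal initializations $Q^0_{i,h}\equiv H-h+1$ and $V^0_{i,h}\equiv H-h+1$, the two $t=0$ utility vectors do not match: their difference equals $\frac{w_0}{H}\InBrackets{(r_h-1)\pi^0_{-i,h}}(s,\cdot)$, which is generally not a multiple of $\mathbf{1}$ and so would make $\pi^1$ disagree. I would resolve this by instead taking $Q^0_{i,h}(s,\bolda):=r_h(s,\bolda)+\InBrackets{\-P_h V^0_{i,h+1}}(s,\bolda)$ in \Cref{alg:main alg} — i.e. extending $(\star)$ to $t=0$ — which makes $u^0$ identical across the two algorithms; this adjustment does not affect any guarantee proved for \Cref{alg:main alg}, because $Q^0$ enters only through the still $[0,H]$-bounded vector $u^0$ (used merely as the first prediction), and $Q^1$ is independent of $Q^0$ since $\alpha_1=1$. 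Apart from this bookkeeping and the need to thread the nested inductions (on $t$, and on $h$ within each $t$) in the right order, the inductive step is just the one-line cancellation above, so the proof is short.
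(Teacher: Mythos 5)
Your proposal is correct and follows essentially the same route as the paper: establish by a double induction (on $t$, and downward on $h$) the invariant $Q^t_{i,h}(s,\bolda)=\bigl[r_h+\-P_h V^t_{i,h+1}\bigr](s,\bolda)$, from which the forwarded utilities, and hence the policies, coincide; your inductive step (inverting the $V$-update to express $[Q^t_{i,h+1}\pi^t_{h+1}]$ and cancelling the $(1-\alpha_t)\-P_hV^{t-1}_{i,h+1}$ terms) is the same computation as in \Cref{prop:V=Q}, merely rearranged. You are also more careful than the paper on two points worth noting. First, you make explicit that the invariant must be carried jointly with the agreement of the policy sequences, since the utilities depend on $\pi^t_{-i,h}$; the paper leaves this implicit. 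Second, and more substantively, you correctly flag that with the literal initializations $Q^0_{i,h}\equiv H-h+1$ and $V^0_{i,h}\equiv H-h+1$ the invariant fails at $t=0$ (the two sides differ by $r_h(s,\bolda)-1$), so the $t=0$ utility vectors — which enter $\pi^1$ as the first prediction — need not agree; the paper's assertion that the identity ``holds for $t=0$ according to the initialization'' is not correct as written, and your fix (initializing $Q^0_{i,h}=r_h+\-P_hV^0_{i,h+1}$, which changes nothing downstream since $\alpha_1=1$) is the right repair.
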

\begin{proof}
    It suffices to prove that the $Q$ value in \Cref{alg:main alg} and $V$ value in \Cref{alg:main alg V update} satisfies the following: for any $(i,s,h, \bolda)$ and $t \in [T]$, 
    \begin{equation}
    \label{eq:Q=V}
        Q^t_{i,h}(s, \bolda) = \InBrackets{r_h + \-P_h V^t_{i,h+1}}(s, \bolda).
    \end{equation}
    Note that the above holds for $t = 0$ according to the initialization step in both algorithms. Since $\alpha_t = 1$ and $Q^1_{i, H+1}(s,\bolda) = V^t_{i, H+1}(s) = 0$, we can also verify by induction that \Cref{eq:Q=V} holds for $t = 1$:
    \[
    Q^1_{i,h}(s, \bolda) = \InBrackets{r_h + \-P_h[Q^1_{i, h+1} \pi^1_{h+1}] }(s, \bolda) = \InBrackets{r_h + \-P_h V^t_{i,h+1}}(s, \bolda).
    \]
    Moreover, it is easy to see $Q^t_{i,H}(s, \bolda) = r_H(s, \bolda)$ for all $t \ge 1$. Now we conduct induction on both $t$ and $h$. We assume \Cref{eq:Q=V} holds for $(t-1, h)$ and $(t, h+1)$, then for $(t, h)$, we have
    \begin{align*}
        Q^t_{i,h}(s, \bolda) &= (1-\alpha_t)  Q^{t-1}_{i,h}(s,\bolda) + \alpha_t (r_h + \-P_h[Q^t_{i,h+1}\pi^t_{h+1}])(s,\bolda) \\
        &= (1- \alpha_t) \InBrackets{r_h + \-P_{h} V^{t-1}_{i,h+1}}(s, \bolda) + \alpha_t (r_h + \-P_h[(r_{h+1} + \-P_{h+1}V^t_{i, h+2})\pi^t_{h+1}])(s,\bolda) \tag{by induction hypothesis} \\
        &= \InBrackets{r_h +  \-P_h \InParentheses{  (1-\alpha_t) V^{t-1}_{i,h+1} + \alpha_t ( r_{h+1} + \-P_{h+1} V^{t}_{i,h+2})  \pi^t_{h+1} }}(s, \bolda)\\
        &= \InBrackets{r_h + \-P_h V^t_{i,h+1}}(s, \bolda). \tag{by update rule of $V^t_{i,h+1}(s)$ in \Cref{alg:main alg V update} }
    \end{align*}
    This completes the proof.
\end{proof}

\subsection{Proof of \Cref{thm:cegap<= regret}}
We need the following two technical lemmas in the proof of \Cref{thm:cegap<= regret}. In \Cref{lemma:V=Vhat}, we show that the value function $V^t_{i,h}(s)$ maintained in \Cref{alg:main alg V update} equals to $V^{\hpi^t_h}_{i,h}(s)$ where the policy $\hpi^t_h$ is defined in \Cref{alg:output policy}. In \Cref{lemma:recursively bound gap by regret}, we prove a recursive inequality that bounds the $\cegap$ of $\hpi^t_h$ by weighted regret (as defined in \eqref{eq:weighted regret}).
\begin{lemma}
\label{lemma:V=Vhat}
    For all player $i$ and $(h,s) \in [H+1] \times \+S$ and $V^t_{i,h}(s)$ being the $V$ values maintained in \Cref{alg:main alg V update}, it holds that
    \begin{itemize}
        \item[1.] $V^t_{i,h}(s) = \sum_{j=1}^t \alpha_t^j \InBrackets{(r_h + \-P_h V_{i,h+1}^j)\pi^j_h}(s)$.
        \item[2.] $V^t_{i,h}(s) = V^{\hpi^t_h}_{i,h}(s)$ while $\hpi^t_h$ are defined in \Cref{alg:output policy}.
    \end{itemize}
\end{lemma}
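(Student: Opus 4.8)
The plan is to prove part~1 first by a forward induction on $t$ (with $h$ fixed), and then deduce part~2 by a backward induction on $h$ that invokes part~1. Part~1 is simply the closed form of a running weighted average. Fixing $(i,h,s)$ and abbreviating $b^j := \InBrackets{(r_h + \-P_h V^j_{i,h+1})\pi^j_h}(s)$, the smooth value update (Line~\ref{line: smooth value update} of \Cref{alg:main alg V update}) reads $V^t_{i,h}(s) = (1-\alpha_t)V^{t-1}_{i,h}(s) + \alpha_t b^t$. I would check the base case $t=1$ using $\alpha_1 = \frac{H+1}{H+1} = 1$, which also makes the initialization $V^0_{i,h}(s)=H-h+1$ irrelevant throughout, since its coefficient in any $V^t_{i,h}(s)$ carries the factor $1-\alpha_1 = 0$. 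For the inductive step I would substitute the hypothesis $V^{t-1}_{i,h}(s)=\sum_{j=1}^{t-1}\alpha_{t-1}^j b^j$ and collect coefficients, using the identities $(1-\alpha_t)\alpha_{t-1}^j = \alpha_t^j$ for $j \le t-1$ and $\alpha_t = \alpha_t^t$, both immediate from \eqref{eq:alpha_t^i}.

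For part~2 I would induct on $h$ from $H+1$ down to $1$, establishing $V^t_{i,h}(s) = V^{\hpi^t_h}_{i,h}(s)$ for all $t \in [T]$ and $s \in \+S$ simultaneously. The base case $h=H+1$ is trivial: $V^t_{i,H+1}(s) = V^0_{i,H+1}(s) = 0$ (no update occurs at step $H+1$) and $V^{\hpi^t_{H+1}}_{i,H+1} \equiv 0$ by convention. For the inductive step I would unpack \Cref{alg:output policy}: executing $\hpi^t_h$ from state $s$ at step $h$ means drawing $j \in [t]$ with probability $\alpha_t^j$, playing the product policy $\pi^j_h(\cdot\mid s)$ at step $h$, and then running $\hpi^j_{h+1}$ from step $h+1$ onward. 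Conditioning on the drawn index $j$ and applying the law of total expectation gives
\[
V^{\hpi^t_h}_{i,h}(s) = \sum_{j=1}^t \alpha_t^j\, \-E_{\bolda \sim \pi^j_h(\cdot\mid s)}\InBrackets{r_{i,h}(s,\bolda) + \-E_{s'\sim \-P_h(\cdot\mid s,\bolda)} V^{\hpi^j_{h+1}}_{i,h+1}(s')}.
\]
Then I would apply the inductive hypothesis to replace $V^{\hpi^j_{h+1}}_{i,h+1}$ by $V^j_{i,h+1}$, rewrite the right-hand side as $\sum_{j=1}^t \alpha_t^j \InBrackets{(r_h + \-P_h V^j_{i,h+1})\pi^j_h}(s)$, and conclude by part~1 that this equals $V^t_{i,h}(s)$.

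The one delicate point — the main obstacle — is the value decomposition in the display above: because $\hpi^t_h$ is not Markov (it is defined recursively and threads a sampled index forward), I need to justify that, conditioned on the first drawn index $j$, the continuation from step $h+1$ is a fresh execution of $\hpi^j_{h+1}$ and therefore contributes exactly $V^{\hpi^j_{h+1}}_{i,h+1}(s')$ from the realized next state $s'$, independently of the step-$h$ history. This holds because $\hpi^j_{h+1}$ uses only its own fresh randomness together with the current state; once this "restart" property is spelled out, the remainder is routine bookkeeping with the weights $\alpha_t^j$.
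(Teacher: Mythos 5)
Your proof is correct and follows essentially the same route as the paper: part~1 is the same unrolling of the smooth update using $(1-\alpha_t)\alpha_{t-1}^j=\alpha_t^j$, and for part~2 the paper simply cites Lemma~G.1 of \citet{zhang2022policy}, whose content is exactly the backward-induction/law-of-total-expectation argument you spell out. Your explicit handling of the ``restart'' property of the recursively defined, non-Markov policy $\hpi^t_h$ is the right thing to verify and is the only substantive point the paper leaves to the citation.
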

\begin{proof}
    Recall the update rule of $V$ value in \Cref{alg:main alg V update}:
    \[
         V^t_{i,h}(s) = (1 - \alpha_t) V^{t-1}_{i,h}(s) + \alpha_t \InBrackets{(r_h + \mathbb{P}_h V^t_{i,h+1})\pi^t_{h} }(s).
    \]
    Then the first claim holds by applying the above recursively for $j \in [t]$.

    Given the first claim and the equivalence between \Cref{alg:main alg V update} and \Cref{alg:main alg}, the second claim follows from \citep[Lemma G.1]{zhang2022policy}.
\end{proof}

\begin{lemma}
\label{lemma:recursively bound gap by regret}
    For the policy $\hpi^t_h$ defined in ..., we have for all $(i, h, t) \in [n] \times [H] \times [T]$ that 
    \[
    \max_{s \in \+S} \InParentheses{\max_{\phi_i} V_{i,h}^{\phi_i \diamond \hpi_{i,h}^t, \hpi_{-i,h}^t}(s) - V_{i,h}^{\hpi^t_{h}}(s)} \le \sum_{j=1}^t \alpha_t^j \max_{s' \in \+S} \InParentheses{\max_{\phi_i'} V_{i, h+1}^{\phi_{i'} \diamond \hpi_{i,h+1}^{j} \times \hpi_{-i, h+1}^j }(s') - V_{i,h+1}^{\hpi_{i,h+1}^j}(s') } + \reg_{h}^t.
    \]
\end{lemma}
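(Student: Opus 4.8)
The plan is to prove the inequality for each fixed state $s$ and each fixed (full) strategy modification $\phi_i$, and only take the two outer maxima at the very end. Fix $(i,h,t)$ and $s\in\+S$. The first step is to unroll both value functions on the left by exactly one step using the recursive description of the output policy in \Cref{alg:output policy}: under $\hpi^t_h$ one samples $j\in[t]$ with probability $\alpha^j_t$, plays the product policy $\pi^j_h(\cdot\mid s)$ at step $h$, and plays $\hpi^j_{h+1}$ from step $h+1$ on. Combining this with \Cref{lemma:V=Vhat}(2), which gives $V^{\hpi^j_{h+1}}_{i,h+1}=V^j_{i,h+1}$, and with \eqref{eq:Q=V}, which gives $Q^j_{i,h}(s,\bolda)=[r_h+\-P_h V^j_{i,h+1}](s,\bolda)$, the unmodified value becomes $V^{\hpi^t_h}_{i,h}(s)=\sum_{j=1}^t\alpha^j_t\InAngles{Q^j_{i,h}(s,\cdot),\pi^j_h(\cdot\mid s)}$. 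For the modified policy, observe that at step $h$ — which serves as the ``start'' for these value functions, so the history is empty — the modification $\phi_i$ acts as the single per-state linear map $\phi_{i,h}(s,\cdot):\Delta_{\+A_i}\to\Delta_{\+A_i}$; hence, conditioned on $j$, the step-$h$ joint action is drawn from $\mu^j(\cdot\mid s):=((\phi_i\diamond\pi^j_{i,h})\odot\pi^j_{-i,h})(\cdot\mid s)$, and for every realized step-$h$ transition the continuation from step $h+1$ is a policy of the form $(\phi'_i\diamond\hpi^j_{i,h+1})\odot\hpi^j_{-i,h+1}$, where $\phi'_i$ is the (shifted) restriction of $\phi_i$ to levels $h+1,\dots,H$ — again a valid strategy modification at level $h+1$. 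Writing $g^j_{h+1}(s'):=\max_{\phi'_i}V_{i,h+1}^{(\phi'_i\diamond\hpi^j_{i,h+1})\odot\hpi^j_{-i,h+1}}(s')-V^{\hpi^j_{h+1}}_{i,h+1}(s')$ and $g^j_{h+1}:=\max_{s'}g^j_{h+1}(s')$, and using $V^{\hpi^j_{h+1}}_{i,h+1}=V^j_{i,h+1}$ once more, the continuation value at any $s'$ is at most $V^j_{i,h+1}(s')+g^j_{h+1}$.

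Feeding this back through the one-step unrolling of the modified value and using that $\-P_h(\cdot\mid s,\bolda)$ and $\mu^j(\cdot\mid s)$ are probability distributions, we obtain
\[
V_{i,h}^{(\phi_i\diamond\hpi^t_{i,h})\odot\hpi^t_{-i,h}}(s)\le\sum_{j=1}^t\alpha^j_t\InParentheses{\InAngles{[r_h+\-P_h V^j_{i,h+1}](s,\cdot),\mu^j(\cdot\mid s)}+g^j_{h+1}}=\sum_{j=1}^t\alpha^j_t\InParentheses{\InAngles{Q^j_{i,h}(s,\cdot),\mu^j(\cdot\mid s)}+g^j_{h+1}},
\]
the last equality again by \eqref{eq:Q=V}. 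Subtracting the unrolled unmodified value yields, for the fixed $\phi_i$,
\[
V_{i,h}^{(\phi_i\diamond\hpi^t_{i,h})\odot\hpi^t_{-i,h}}(s)-V^{\hpi^t_h}_{i,h}(s)\le\sum_{j=1}^t\alpha^j_t\InAngles{Q^j_{i,h}(s,\cdot),\mu^j(\cdot\mid s)-\pi^j_h(\cdot\mid s)}+\sum_{j=1}^t\alpha^j_t g^j_{h+1}.
\]

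To conclude, the first sum on the right is exactly of the form maximized in the per-state regret from \eqref{eq:weighted regret}: since $\phi_{i,h}(s,\cdot)$ is a per-state linear map and $\mu^j(\cdot\mid s)=((\phi_i\diamond\pi^j_{i,h})\odot\pi^j_{-i,h})(\cdot\mid s)$, this sum is at most $\reg^t_{i,h}(s)\le\reg^t_h$. The right-hand side then no longer depends on $\phi_i$ or $s$, so taking $\max_{\phi_i}$ and then $\max_{s\in\+S}$ on the left gives precisely the claimed recursion, because $\sum_j\alpha^j_t g^j_{h+1}=\sum_j\alpha^j_t\max_{s'}\InParentheses{\max_{\phi'_i}V_{i,h+1}^{(\phi'_i\diamond\hpi^j_{i,h+1})\odot\hpi^j_{-i,h+1}}(s')-V^{\hpi^j_{h+1}}_{i,h+1}(s')}$. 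The main obstacle here is not any single estimate but the careful bookkeeping for the one-step unrolling of the recursively-defined, history-dependent output policy: one must verify that a full strategy modification $\phi_i$ decomposes cleanly into a per-state linear map acting at step $h$ — so that what it produces is exactly the object appearing in the per-state regret — and a residual strategy modification governing levels $\ge h+1$ — so that the inductively-defined gap $g^j_{h+1}$ legitimately upper-bounds the continuation gap along every trajectory — and that the shared index $j$ is threaded identically through the modified and the unmodified rollouts.
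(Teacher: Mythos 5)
Your proposal is correct and follows essentially the same route as the paper's proof: unroll $\hpi^t_h$ one step via its recursive definition, split the effect of $\phi_i$ into the step-$h$ per-state linear map (yielding exactly the per-state weighted swap regret, after invoking $V^{\hpi^j_{h+1}}_{i,h+1}=V^j_{i,h+1}$ and $Q^j_{i,h}=r_h+\-P_h V^j_{i,h+1}$) plus the continuation gap at step $h+1$, and then take the outer maxima. Your write-up is somewhat more explicit than the paper's about fixing $\phi_i$ first and about why the residual modification is again a valid strategy modification, but the decomposition and key lemmas used are identical.
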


\begin{proof}
    Fix $(i, h, t) \in [n] \times [H] \times [T]$. We have for all state $s \in \+S$ that 
    \begin{align*}
        &\max_{\phi_i} V_{i,h}^{\phi_i \diamond \hpi_{i,h}^t, \hpi_{-i,h}^t}(s) - V_{i,h}^{\hpi^t_{h}}(s) \\
        &\le \max_{\phi_i} \InAngles{ (\phi_i \diamond \pi^j_{i,h})(\cdot \mid s), \sum_{j=1}^t \alpha_t^j \InBrackets{(r_h + \max_{\phi_{i'}} \-P_h V_{i, h+1}^{ \phi_{i'} \diamond \hpi_{i,h+1}^{j} \times \hpi_{-i, h+1}^j  }) \pi_{-i,h}^j}(s, \cdot)   } \\
        & \quad \quad \quad \quad - \sum_{j=1}^t \alpha_t^j \InAngles{ \pi_{i,h}^j(\cdot \mid s),  \InBrackets{(r_h + \-P_h V_{i, h+1}^{ \hpi_{i, h+1}^j  })\pi_{-i,h}^j}(s, \cdot)} \\
        &\le \sum_{j=1}^t \alpha_t^j \max_{s' \in \+S} \InParentheses{\max_{\phi_i'} V_{i, h+1}^{\phi_{i'} \diamond \hpi_{i,h+1}^{j} \times \hpi_{-i, h+1}^j }(s') - V_{i,h+1}^{\hpi_{i,h+1}^j}(s') }\\
        &\quad \quad \quad \quad + \max_{\phi_i} \sum_{j=1}^t \InAngles{(\phi_i \diamond \pi^j_{i,h})(\cdot \mid s) - \pi_{i,h}^j(\cdot \mid s), \InBrackets{(r_h + \-P_h V_{i, h+1}^{ \hpi_{i, h+1}^j  })\pi_{-i,h}^j}(s, \cdot) }\\
        &= \sum_{j=1}^t \alpha_t^j \max_{s' \in \+S} \InParentheses{\max_{\phi_i'} V_{i, h+1}^{\phi_{i'} \diamond \hpi_{i,h+1}^{j} \times \hpi_{-i, h+1}^j }(s') - V_{i,h+1}^{\hpi_{i,h+1}^j}(s') }\\
        &\quad \quad \quad \quad + \underbrace{\max_{\phi_i} \sum_{j=1}^t \InAngles{(\phi_i \diamond \pi^j_{i,h})(\cdot \mid s) - \pi_{i,h}^j(\cdot \mid s), \InBrackets{(r_h + \-P_h V_{i, h+1}^{j})\pi_{-i,h}^j}(s, \cdot) }}_{\reg_{i,h}^t(s)}\\
        &\le \sum_{j=1}^t \alpha_t^j \max_{s' \in \+S} \InParentheses{\max_{\phi_i'} V_{i, h+1}^{\phi_{i'} \diamond \hpi_{i,h+1}^{j} \times \hpi_{-i, h+1}^j }(s') - V_{i,h+1}^{\hpi_{i,h+1}^j}(s') } + \reg_{h}^t,
    \end{align*}
    where in the first inequality we use the definition of the policy $\hpi^t_h$ which plays $\pi^j_h$ with probability $\alpha_t^j$ for $ j \in [t]$ and then plays $\hpi^j_{h+1}$ afterwards; in the second inequality, we replace $\max_{\phi_i'} V_{i, h+1}^{\phi_{i'} \diamond \hpi_{i,h+1}^{j} \times \hpi_{-i, h+1}^j }(s')$ with $V_{i,h+1}^{\hpi_{i,h+1}^j}(s')$ and pay the difference; in the equality, we use $V_{i,h+1}^{\hpi_{i,h+1}^j} = V_{i, h+1}^{j}$ by \Cref{lemma:V=Vhat}; in the last inequality we use the definition of weighted regret (\Cref{eq:weighted regret}).
\end{proof}

\paragraph{Proof of \Cref{thm:cegap<= regret}}
    Define $\delta_{h}^t:= \max_{i \in [n]} \max_{s \in \+S}\InParentheses{\max_{\phi_i} V_{i,h}^{\phi_i \diamond \hpi_{i,h}^t, \hpi_{-i,h}^t}(s) - V_{i,h}^{\hpi^t_{h}}(s)}$. 
    Then by \Cref{lemma:recursively bound gap by regret} we have 
    \[
    \delta^t_h \le \sum_{j=1}^t \alpha_t^j \delta_{h+1}^j + \reg_h^t. 
    \]
    Now let us define an auxiliary sequence $\{\Delta_h^t\}_{h, t}$ such that $\Delta_{H+1}^t = 0$ for all $t$ and 
    \begin{align*}
        \Delta^t_h \le \sum_{j=1}^t \alpha_t^j \Delta_{h+1}^j + \regbar_h^t.
    \end{align*}
    Note that $\Delta_h^t \ge \delta^t_h$ for all $(h,t)$ and $\Delta^{t+1}_h \le \Delta_h^t$ (by \Cref{lemma:alpha w properties}). It implies that 
    \begin{align*}
        \Delta_h^t \le  \frac{1}{t}\sum_{j=1}^t \Delta_h^j  &\le \frac{1}{t}\sum_{j=1}^t \sum_{k=1}^j  \alpha_j^k \Delta_{h+1}^k + \frac{1}{t} \sum_{j=1}^t \regbar_h^j \\
        & \le \frac{1}{t} \sum_{k=1}^t (\sum_{j=k}^t \alpha_j^k )\Delta_{h+1}^k + \frac{1}{t} \sum_{j=1}^t \regbar_h^j \\
        & \le (1 + \frac{1}{H}) \cdot \frac{1}{t} \sum_{j=1}^t \Delta_{h+1}^j + \frac{1}{t} \sum_{j=1}^t \regbar_h^j \tag{\Cref{lemma:alpha property}}\\
        & \le (1 + \frac{1}{H})^2 \cdot \frac{1}{t} \sum_{j=1}^t \Delta_{h+2}^j + (1 + \frac{1}{H}) \cdot \frac{1}{t} \sum_{j=1}^t \regbar_{h+1}^j + \frac{1}{t} \sum_{j=1}^t \regbar_h^j \\
        & \le \ldots \\
        & \le \InParentheses{\sum_{h' = h}^H (1+\frac{1}{H})^{h'- h}} \cdot \frac{1}{t} \sum_{j=1}^t \max_{h' \in [H]} \regbar_{h'}^j \\
        & \le (e-1) H \cdot \frac{1}{t} \sum_{j=1}^t \max_{h' \in [H]} \regbar^j_{h'} \\
        & \le 2 H \cdot \frac{1}{t} \sum_{j=1}^t \max_{h' \in [H]} \regbar^j_{h'}.
    \end{align*}
    Thus $\cegap(\pi^T) = \cegap(\pi^T_1) = \delta^T_1 \le \Delta^T_1 \le 2 H \cdot \frac{1}{T} \sum_{t=1}^T \max_{h \in [H]} \regbar^t_{h}$. This completes the proof.

\section{Background on Self-Concordant Barriers}
\label{app:self-concordant}
In this section, we present the necessary background on self-concordant barriers and properties of the log barrier that we use in \Cref{alg:swap regret}. We refer the readers to \Citep{nesterov2003introductory,nemirovski2004interior} for a more comprehensive overview of self-concordant barriers. 

\subsection{Self-Concordant Functions}
\begin{definition}[Self-Concordant Function]
    Let $Q \subseteq \R^d$ be a nonempty open and convex set. A convex function $f : Q \rightarrow \R$ in $\+C^3$ is called \emph{self-concordant} on $Q$ if it satisfies the following two properties:
    \begin{itemize}
        \item[1.]  For every sequence $\{x_i \in Q\}_{i=1}^\infty$ converging to a boundary point of $Q$ as $i \rightarrow \infty$ it holds that $f(x_i) \rightarrow \infty$.
        \item[2.] The functions $f$ satisfies the inequality
        \[
        |D^3f(x)[u,u,u]| \le 2 (D^2 f(x)[u,u])^{3/2},
        \]
        for all $x \in Q$ and $u \in \R^d$. Here $D^kf(x)[u_1, \ldots, u_k]$ denotes the $k$-th-order differential of $f$ at point $x$ along the directions $u_1, \ldots, u_k$.
    \end{itemize}
\end{definition}
As an example, the log barrier for the non-negative ray, i.e., $f : (0,\infty) \owns x \rightarrow -\log x$, is self-concordant. In the following, we assume $f$ is \emph{non-degenerate}, in the sense that the Hessian $\nabla^2 f(x)$ is positive definite for all $x \in \mathrm{dom} f$. In this context, for any vector $u \in \R^d$, we can define the primal \emph{local norm} with respect to $x \in \interior(\X)$ as $\InNorms{u}_{x}:= \sqrt{u^\top \nabla^2 \+R(x) u}$ and the dual norm as $\InNorms{u}_{*,x}:= \sqrt{u^\top (\nabla^{2}\+R(x))^{-1} u}$. We present some useful properties of self-concordant functions below.

\begin{lemma}[\citep{nesterov2003introductory}]
    Let $f$ be a self-concordant function. Then, for any $x, x' \in \mathrm{dom} f$,
    \[
    f(x') \ge f(x) + \InAngles{\nabla f(x), x' - x} + \omega(\InNorms{x' -x}_x),
    \]
    where $\omega(s):= s - \log(1+s)$.
\end{lemma}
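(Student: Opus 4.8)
The plan is to reduce the inequality to a one-dimensional estimate along the segment from $x$ to $x'$ and then integrate the defining third-order bound.

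First I would restrict $f$ to the line through $x$ and $x'$: set $\phi(t) := f\InParentheses{x + t(x'-x)}$, which is well defined for $t$ in an open interval containing $[0,1]$ since $\mathrm{dom} f$ is open and convex. By the chain rule, $\phi''(t) = D^2 f\InParentheses{x+t(x'-x)}[x'-x,x'-x] > 0$ (using non-degeneracy of $f$) and $\phi'''(t) = D^3 f\InParentheses{x+t(x'-x)}[x'-x,x'-x,x'-x]$, so the second property of self-concordance gives the one-dimensional inequality $|\phi'''(t)| \le 2\InParentheses{\phi''(t)}^{3/2}$ for all such $t$. I would also record the boundary values $\phi(0) = f(x)$, $\phi(1) = f(x')$, $\phi'(0) = \InAngles{\nabla f(x), x'-x}$, and $\phi''(0) = \InNorms{x'-x}_x^2 =: \lambda^2$; the case $\lambda = 0$ is just convexity of $f$, so I may assume $\lambda > 0$.

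Next I would integrate the differential inequality to obtain a pointwise \emph{lower} bound on $\phi''$. The key observation is that $r(t) := \InParentheses{\phi''(t)}^{-1/2}$ is positive and differentiable on $[0,1]$ with $r'(t) = -\tfrac12 \InParentheses{\phi''(t)}^{-3/2}\phi'''(t)$, so the bound above forces $|r'(t)| \le 1$. Integrating from $0$ to $t$ then gives $r(t) \le r(0) + t = 1/\lambda + t$, i.e. $\phi''(t) \ge \lambda^2/(1+\lambda t)^2$ for all $t \in [0,1]$. I expect this step — recognizing that $\InParentheses{\phi''}^{-1/2}$ is $1$-Lipschitz — to be the only genuinely non-mechanical part; everything after it is routine one-variable calculus.

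Finally I would apply Taylor's formula with integral remainder, $f(x') = \phi(1) = \phi(0) + \phi'(0) + \int_0^1 (1-t)\,\phi''(t)\,dt$, plug in the lower bound just derived (valid because $1-t \ge 0$ on $[0,1]$), and evaluate the elementary integral via the substitution $u = 1+\lambda t$:
\[
\int_0^1 (1-t)\,\frac{\lambda^2}{(1+\lambda t)^2}\,dt \;=\; \lambda - \log(1+\lambda) \;=\; \omega(\lambda).
\]
Combining these yields $f(x') \ge f(x) + \InAngles{\nabla f(x), x'-x} + \omega\InParentheses{\InNorms{x'-x}_x}$, which is exactly the claimed bound.
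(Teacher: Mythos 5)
Your proof is correct. The paper does not prove this lemma --- it is cited directly from \citet{nesterov2003introductory} --- and your argument (restricting $f$ to the segment, observing that $(\phi'')^{-1/2}$ is $1$-Lipschitz as a consequence of the self-concordance inequality, integrating to get $\phi''(t) \ge \lambda^2/(1+\lambda t)^2$, and finishing with Taylor's formula with integral remainder) is exactly the standard proof given in that reference.
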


\begin{fact}[\citep{anagnostides2022uncoupled}]
\label{fact:omega(s)}
    Let $\omega(s) = s - \log(1+s)$. Then, 
    \[
    \omega(s) \ge \frac{s^2}{2(1+s)}.
    \]
    In particular, for $s \in [0,1]$, it holds that $\omega(s) \ge \frac{s^2}{4}$.
\end{fact}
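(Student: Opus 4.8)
The plan is to establish the main inequality $\omega(s) \ge \frac{s^2}{2(1+s)}$ on the relevant domain $s \ge 0$ (local norms are always nonnegative, so this is the only case ever used in the paper) by a single-variable monotonicity argument, and then read off the ``in particular'' clause from a crude bound on the denominator.

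First I would introduce the auxiliary function $f(s) := \omega(s) - \frac{s^2}{2(1+s)} = s - \log(1+s) - \frac{s^2}{2(1+s)}$, which is $\+C^1$ on $[0,\infty)$ with $f(0) = 0$ (both $\omega(0) = 0$ and $\tfrac{0^2}{2}=0$). It then suffices to show $f' \ge 0$ on $[0,\infty)$. Differentiating, $\frac{d}{ds}(s - \log(1+s)) = \frac{s}{1+s}$ and $\frac{d}{ds}\frac{s^2}{2(1+s)} = \frac{s(s+2)}{2(1+s)^2}$, so
\[
f'(s) = \frac{s}{1+s} - \frac{s(s+2)}{2(1+s)^2} = \frac{s}{1+s}\cdot\frac{2(1+s) - (s+2)}{2(1+s)} = \frac{s^2}{2(1+s)^2} \ge 0 .
\]
Hence $f$ is non-decreasing, so $f(s) \ge f(0) = 0$ for all $s \ge 0$, which is exactly the claimed bound. (A more calculation-free variant: write $\omega(s) = \int_0^s \frac{u}{1+u}\,du$ and check, via the same algebraic identity, that the integrand pointwise dominates $\frac{d}{ds}\frac{s^2}{2(1+s)}$, then integrate from $0$.)

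Finally, for $s \in [0,1]$ I would bound $1+s \le 2$, giving $\frac{s^2}{2(1+s)} \ge \frac{s^2}{4}$, and combining with the first inequality yields $\omega(s) \ge \frac{s^2}{4}$. There is no genuine obstacle in this argument; the only place to be careful is the algebraic simplification of $f'(s)$, where one must verify that the numerator collapses exactly to $s^2$ (equivalently, the step $2(1+s)-(s+2) = s$), and to note that the restriction $s\ge 0$ is needed since the inequality fails for $s\in(-1,0)$.
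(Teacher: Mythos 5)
Your proof is correct. The paper states this fact without proof (it is imported by citation from Anagnostides et al.), so there is no in-paper argument to compare against; your monotonicity argument is the standard one, the derivative computation $f'(s) = \frac{s^2}{2(1+s)^2}$ checks out, and you correctly flag that the restriction $s \ge 0$ is both necessary (the bound fails on $(-1,0)$) and harmless here since $s$ is always a local norm.
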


\begin{lemma}[\citep{nesterov2003introductory}]
\label{lemma:unique solution self-concordant}
    Let $f$ be a self-concordant function such that $\InNorms{\nabla f(x)}_{*, x} < 1$ for some $x \in \mathrm{dom} f$. Then the optimization problem 
    \[
    \min_{x \in \mathrm{dom} f} f(x)
    \]
    has a unique solution.
\end{lemma}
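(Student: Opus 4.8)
The plan is to establish the two halves of the statement separately. Uniqueness I would get for free from the standing non-degeneracy assumption: since $\nabla^2 f \succ 0$ everywhere on the convex set $\mathrm{dom} f$, the function $f$ is strictly convex, so it cannot attain its minimum at two distinct points. Hence the whole difficulty is \emph{existence} of a minimizer, and this is where self-concordance together with the hypothesis $\InNorms{\nabla f(x)}_{*,x} < 1$ must be used.

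For existence I would fix the point $x$ given by the hypothesis, write $\lambda := \InNorms{\nabla f(x)}_{*,x} < 1$, and show that a suitable sublevel set of $f$ is compact. The key estimate comes from the self-concordant lower bound stated above: for any $x' \in \mathrm{dom} f$, with $r := \InNorms{x' - x}_x$ and using Cauchy--Schwarz in the local norm, $\InAngles{\nabla f(x), x' - x} \ge -\lambda r$, so
\[
f(x') \ \ge\ f(x) - \lambda r + \omega(r) \ =\ f(x) + (1-\lambda) r - \log(1+r).
\]
Because $1 - \lambda > 0$, the right-hand side diverges as $r \to \infty$, so there is a finite $R$ with $S := \{x' \in \mathrm{dom} f : f(x') \le f(x)\} \subseteq \{x' : \InNorms{x'-x}_x \le R\}$; since $\nabla^2 f(x)$ is a fixed positive definite matrix, this local-norm ball is Euclidean-compact, hence $S$ is bounded.

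Next I would check that $S$ is closed in $\R^d$, which is precisely where property~1 of self-concordance (the barrier property) enters: a sequence $x_i \in S$ converging to some $x^\ast$ cannot have $x^\ast$ on the boundary of $\mathrm{dom} f$, for then $f(x_i) \to \infty$ would contradict $f(x_i) \le f(x)$; so $x^\ast \in \mathrm{dom} f$ and, by continuity, $x^\ast \in S$. Thus $S$ is compact, $f$ attains its infimum over $S$, and since every point with value at most $f(x)$ lies in $S$, that infimum is the global minimum; combined with uniqueness this finishes the proof. I expect the only subtle point to be this closedness step — it genuinely needs the barrier property and not just a Euclidean argument — while the growth estimate via $\omega(s) = s - \log(1+s)$ and the reduction to a bounded sublevel set are routine.
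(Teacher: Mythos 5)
Your proposal is correct and is essentially the standard textbook argument: the paper does not prove this lemma itself but imports it from \citet{nesterov2003introductory}, and your route --- the coercivity bound $f(x') \ge f(x) + (1-\lambda)r - \log(1+r)$ with $r = \InNorms{x'-x}_x$ forcing the sublevel set into a compact local-norm ball, the barrier property supplying closedness of that sublevel set inside $\mathrm{dom} f$, and non-degeneracy of the Hessian giving strict convexity and hence uniqueness --- is exactly the argument in the cited source. No gaps.
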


\begin{lemma}[\citep{nemirovski2004interior}]
\label{lemma:local norm}
    Let $\X$ be a convex and compact set with nonempty interior and $f: \interior(\X) \rightarrow \R$ be a self-concordant function with $x^* = \argmin_x f(x)$. Then for any $x \in \mathrm{dom} f$ such that $\InNorms{\nabla f(x)}_{*, x} \le \frac{1}{2}$, it holds that 
    \[
    \InNorms{x - x^*}_x \le 2 \InNorms{\nabla f(x)}_{*, x}, \quad  \InNorms{x - x^*}_{x^*} \le 2 \InNorms{\nabla f(x)}_{*, x}.
    \]
\end{lemma}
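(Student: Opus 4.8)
The plan is to reduce everything to the one-dimensional restriction of $f$ along the segment joining $x$ and $x^*$, and to integrate the Hessian directly rather than going through a function-value comparison. First I would record that, since $\lambda := \InNorms{\nabla f(x)}_{*,x} \le \frac12 < 1$, \Cref{lemma:unique solution self-concordant} guarantees that $f$ has a unique minimizer $x^*$ over $\interior(\X)$, so in particular $\nabla f(x^*) = 0$. Because $\interior(\X)$ is convex, the whole segment lies in $\interior(\X) = \Dom f$, so I may define $\phi(t) := f\InParentheses{x + t(x^*-x)}$ for $t \in [0,1]$ with direction $d := x^* - x$. Writing $\phi''(t) = D^2 f[d,d]$ and $\phi'''(t) = D^3 f[d,d,d]$, the self-concordance inequality gives $|\phi'''(t)| \le 2(\phi''(t))^{3/2}$; hence $b(t) := (\phi''(t))^{-1/2}$ satisfies $|b'(t)| = \tfrac12 (\phi''(t))^{-3/2}|\phi'''(t)| \le 1$, i.e.\ $b$ is $1$-Lipschitz on $[0,1]$. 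Note that $\phi''(0) = \InNorms{x - x^*}_x^2 =: r^2$ and $\phi''(1) = \InNorms{x - x^*}_{x^*}^2 =: \rho^2$, so $b(0) = 1/r$ and $b(1) = 1/\rho$.

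The second step is the identity $\int_0^1 \phi''(t)\,dt = \phi'(1) - \phi'(0) = \InAngles{\nabla f(x), x - x^*}$, where $\phi'(1) = \InAngles{\nabla f(x^*), d} = 0$ since $x^*$ is the minimizer. I would bound the left side from below using the $1$-Lipschitzness of $b$, and the right side from above by the generalized Cauchy--Schwarz inequality $\InAngles{\nabla f(x), x-x^*} \le \InNorms{\nabla f(x)}_{*,x}\InNorms{x-x^*}_x = \lambda r$. For the lower bound, anchoring $b$ at $t=0$ gives $b(t) \le 1/r + t$, hence $\phi''(t) \ge (1/r + t)^{-2}$ and $\int_0^1 \phi''(t)\,dt \ge \frac{r^2}{1+r}$; anchoring instead at $t=1$ gives $\int_0^1 \phi''(t)\,dt \ge \frac{\rho^2}{1+\rho}$.

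Combining the estimates closes both inequalities. The $x$-norm bound follows from $\frac{r^2}{1+r} \le \lambda r$, which rearranges to $r \le \frac{\lambda}{1-\lambda} \le 2\lambda$ using $\lambda \le \frac12$. For the $x^*$-norm bound I would chain the two estimates: $\frac{\rho^2}{1+\rho} \le \lambda r \le \frac{\lambda^2}{1-\lambda}$, and since $\rho \mapsto \frac{\rho^2}{1+\rho}$ is increasing it suffices to verify $\frac{(2\lambda)^2}{1+2\lambda} \ge \frac{\lambda^2}{1-\lambda}$, which is equivalent to $4(1-\lambda) \ge 1 + 2\lambda$, i.e.\ $\lambda \le \frac12$; this yields $\rho \le 2\lambda$.

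The main subtlety is obtaining the sharp constant. The most direct route using the stated tools --- plugging $x' = x^*$ into the self-concordant lower bound $f(x^*) \ge f(x) + \InAngles{\nabla f(x), x^*-x} + \omega(\InNorms{x-x^*}_x)$, combined with $f(x^*) \le f(x)$ and \Cref{fact:omega(s)} --- only gives $\omega(r) \le \lambda r$ and hence $r \le \frac{2\lambda}{1-2\lambda}$, which is vacuous at $\lambda = \frac12$. The factor-of-two loss comes from $\omega(s) \ge \frac{s^2}{2(1+s)}$; integrating the Hessian along the segment (via the $1$-Lipschitz function $b$) removes this loss, producing the tight $\frac{r^2}{1+r}$ and therefore the required $\frac{\lambda}{1-\lambda}$. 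The only ingredient beyond the excerpt is the elementary consequence that $b(t) = (\phi''(t))^{-1/2}$ is $1$-Lipschitz, which I derived above straight from the self-concordance definition.
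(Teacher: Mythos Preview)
The paper does not prove this lemma; it is stated with a citation to \citet{nemirovski2004interior} and used as a black box. Your argument is correct and is essentially the standard proof from the self-concordance literature: reducing to the one-dimensional restriction, using that $(\phi'')^{-1/2}$ is $1$-Lipschitz to obtain $\int_0^1 \phi''\ge \frac{r^2}{1+r}$ (respectively $\frac{\rho^2}{1+\rho}$), and comparing with $\int_0^1\phi'' = \InAngles{\nabla f(x),x-x^*}\le \lambda r$. The only omissions are cosmetic: you should separately dispose of the trivial case $x=x^*$ (where $b(t)$ is undefined but the conclusion is $0\le 0$), and you are implicitly using non-degeneracy of $f$ to ensure $\phi''(t)>0$ along the segment, which the paper assumes globally before stating the lemma.
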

\subsection{Self-Concordant Barriers and the Log Barrier}
\begin{definition}[Self-Concordant Barrier]
    Let $\X \subseteq \R^d$ be a convex and compact set with nonempty interior $\interior(\X)$. A function $f: \interior(\X) \rightarrow \R$ is called a $\theta$-self-concordant barrier for $\X$ if 
    \begin{itemize}
        \item[1.] $f$ is a self-concordant function on $\interior(\X)$;
        \item[2.] for all $x\in \interior(\X)$ and $u \in \R^d$,
        \[
        |Df(x)[u]| \le \theta^\frac{1}{2}(D^2f(x)[u,u])^{1/2}.
        \]
    \end{itemize}
\end{definition}
As an example, the log barrier for the non-negative ray, i.e., $f : (0,\infty) \owns x \rightarrow -\log x$, is a $1$-self-concordant barrier. 

Next, we introduce the log barrier regularizer on the simplex. To address the issue that the simplex $\Delta^d$ has empty interior, we will restrict the problem to the domain $\Delta^\circ:= \{x \in \R^{d-1}_{\ge 0}: \sum_{r=1}^{d-1} x[r] \le 1\}$. For notational convenience, we also denote $x[d] = 1 - \sum_{r=1}^{d-1} x[r]$.  The log barrier regularizer for $\Delta^\circ$ is defined as follows.
\begin{definition}[Log Barrier Regularizer for the Simplex]
    For $x \in \Delta^\circ$, the log barrier regularizer is 
    \begin{equation}\label{eq:log barrier}
        \+R(x):= -\sum_{r=1}^{d-1} \log(x[r]) - \log(1 - \sum_{r=1}^{d-1}x[r]).
    \end{equation}
\end{definition}
It can be shown that $\+R$ is a $d$-self-concordant barrier. Since the regualarizer $\+R$ takes a $(d-1)$-dimensional vector as input while the regret minimizer receives a $d$-dimensional utility vector $u \in \R^d$, we first explain how the regret minimizer operates on $\Delta^\circ$. Upon receiving a utility vector $u \in \R^d$, the algorithm first constructs $\Tilde{u} \in \R^{d-1}$ so that $\Tilde{u}[r] = u[r] - u[d]$, for all $r \in [d-1]$. It is clear that the regret incurred is preserved after the transformation. For the purpose of analysis, we also introduce an auxiliary regularizer $\Tilde{\+R}$:
\begin{equation}
    \Tilde{\+R}(x) := -\sum_{r=1}^d \log x[r].
\end{equation}
The following claim characterizes and relates the local norm induced by $\+R$ and $\Tilde{\+R}$
\begin{claim}[\citep{anagnostides2022uncoupled}]
\label{claim:logbar local norm}
    For any $x, x' \in \interior(\Delta^\circ)$.
    \[
    \InNorms{x - x'}_{\+R,x}^2 = \sum_{r=1}^d\InParentheses{\frac{x[r]  - x'[r]}{x[r]}}^2.
    \]
    For any $\Tilde{u} \in \R^{d-1}$ and $x \in \interior(\Delta^\circ)$, 
    \[
    \InNorms{\Tilde{u}}_{*,\+R,x} \le \InNorms{u}_{*,\Tilde{\+R},x}.
    \]
\end{claim}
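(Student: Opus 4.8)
The plan is to prove both displays by a direct computation with the Hessian of $\+R$, exploiting that $\+R$ is just the restriction of the auxiliary regularizer $\widetilde{\+R}(y)=-\sum_{r=1}^d\log y[r]$ to the simplex written in the first $d-1$ coordinates (recall that on $\interior(\Delta^\circ)$ all of $x[1],\dots,x[d-1]$ and $x[d]=1-\sum_{r<d}x[r]$ are strictly positive). First I would differentiate the definition \eqref{eq:log barrier}, using the convention $x[d]=1-\sum_{r=1}^{d-1}x[r]$, to get $\nabla\+R(x)[r]=-\frac{1}{x[r]}+\frac{1}{x[d]}$ for $r\in[d-1]$, and hence
\[
\nabla^2\+R(x)=D+\frac{1}{x[d]^2}\,\mathbf{1}\mathbf{1}^\top,\qquad D:=\mathrm{diag}\InParentheses{\tfrac{1}{x[1]^2},\ldots,\tfrac{1}{x[d-1]^2}},
\]
where $\mathbf{1}\in\R^{d-1}$ is the all-ones vector. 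For the first display, set $v:=x-x'\in\R^{d-1}$; then $\InNorms{x-x'}_{\+R,x}^2=v^\top\nabla^2\+R(x)v=\sum_{r=1}^{d-1}\frac{v[r]^2}{x[r]^2}+\frac{1}{x[d]^2}\bigl(\sum_{r=1}^{d-1}v[r]\bigr)^2$. Since $\sum_{r=1}^{d-1}v[r]=(1-x[d])-(1-x'[d])=x'[d]-x[d]$, the last term equals $\InParentheses{\frac{x[d]-x'[d]}{x[d]}}^2$, and combining it with the first $d-1$ terms yields exactly $\sum_{r=1}^{d}\InParentheses{\frac{x[r]-x'[r]}{x[r]}}^2$.

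For the second display I would make the pullback structure explicit. Let $\iota:\Delta^\circ\to\Delta^d$, $\iota(x)=(x[1],\ldots,x[d-1],\,1-\sum_{r=1}^{d-1}x[r])$, which is affine with constant Jacobian $A=\begin{pmatrix}I_{d-1}\\ -\mathbf{1}^\top\end{pmatrix}\in\R^{d\times(d-1)}$ of full column rank; then $\+R=\widetilde{\+R}\circ\iota$, so the chain rule gives $\nabla^2\+R(x)=A^\top H A$ with $H:=\nabla^2\widetilde{\+R}(\iota(x))=\mathrm{diag}(x[1]^{-2},\ldots,x[d]^{-2})\succ 0$, in particular $A^\top HA\succ 0$. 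A one-line check shows the transformed utility satisfies $\widetilde u=A^\top u$ (indeed $(A^\top u)[r]=u[r]-u[d]$, matching the definition $\widetilde u[r]=u[r]-u[d]$). Hence $\InNorms{\widetilde u}_{*,\+R,x}^2=u^\top A(A^\top HA)^{-1}A^\top u$ while $\InNorms{u}_{*,\widetilde{\+R},x}^2=u^\top H^{-1}u$, so it remains to show $u^\top A(A^\top HA)^{-1}A^\top u\le u^\top H^{-1}u$. Setting $B:=H^{1/2}A$, the matrix $P:=B(B^\top B)^{-1}B^\top$ is symmetric and idempotent, hence the orthogonal projector onto $\mathrm{range}(B)$, so $P\preceq I$; writing $w:=H^{-1/2}u$ we get $u^\top A(A^\top HA)^{-1}A^\top u=w^\top Pw\le w^\top w=u^\top H^{-1}u$, as desired.

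None of the steps is genuinely hard: part one is a two-line Hessian expansion plus the identity $\sum_{r<d}(x[r]-x'[r])=x'[d]-x[d]$, and part two reduces to the standard linear-algebra fact that $u^\top A(A^\top HA)^{-1}A^\top u\le u^\top H^{-1}u$. The only place that needs care is the coordinate bookkeeping around the $d$-th coordinate — keeping $x[d]$ and $u[d]$ as the derived/given quantities $1-\sum_{r<d}x[r]$ and $u[d]$, and ensuring the identifications $\nabla^2\+R=A^\top HA$ and $\widetilde u=A^\top u$ use the \emph{same} affine map $\iota$ — so that the two local-norm expressions line up exactly. With that in place, the projection inequality is the crux and is immediate.
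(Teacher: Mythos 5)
Your proof is correct; the paper itself states this claim without proof (it is imported from \citet{anagnostides2022uncoupled}), and your argument — the explicit Hessian $\nabla^2\+R(x)=D+x[d]^{-2}\mathbf{1}\mathbf{1}^\top$ for the first identity, and the pullback factorization $\nabla^2\+R=A^\top HA$ with $\tilde u=A^\top u$ reducing the second inequality to the orthogonal-projection bound $u^\top A(A^\top HA)^{-1}A^\top u\le u^\top H^{-1}u$ — is exactly the standard route used in that reference. The coordinate bookkeeping (in particular $\sum_{r<d}(x[r]-x'[r])=x'[d]-x[d]$ and the consistency of the affine map $\iota$ across both identifications) is handled correctly, so nothing is missing.
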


\section{Missing Proofs in \Cref{sec:bounding regret}}
We recall the update rule of Optimistic Follow the Regularized Leader \eqref{OFTRL} algorithm. In this section, we focus \eqref{OFTRL} with decreasing step size $\eta_t = \frac{\eta}{w_t}$ where $w_t = \frac{\alpha^t_t}{\alpha^1_t}$ for  all $t\ge 1$ (see \Cref{eq:alpha_t},  \eqref{eq:alpha_t^i} and \eqref{eq:w_t} for definitions). Moreover, we remark that we usually write the utility and prediction vectors in the form of $u^t = w_t \hu^t$, and $m^t = w_t \hat{m}^t $ such that $\InNorms{\hu^t}_{\infty}, \InNorms{\hat{m}^t}_\infty \le 1$. We define $\eta_0 = \eta_1$, $w_0 = w_1$, and $x^0 := \argmin_{x \in \X} \+R(x)$.
\begin{equation}
\tag{OFTRL}
    \begin{aligned}
        x^{t} = \argmax_{x \in \X} \left\{ \Phi^t(x):= \eta_t \InAngles{x, m^t + \sum_{\tau=1}^{t-1} u^\tau} - \+R(x) \right\}
    \end{aligned}
\end{equation}

We also define $\{g^t\}$ as the sequence produced by the conceptual algorithm Be-the-Leader \eqref{BTL}, which updates $g^t$ with the information of utility vector $u^t$. 
\begin{equation}
\label{BTL}
\tag{BTL}
    \begin{aligned}
        g^{t} = \argmax_{g \in \X} \left\{ \Psi^t(g):= \eta_t \InAngles{g, \sum_{\tau=1}^{t} u^\tau} - \+R(g) \right\}
    \end{aligned}
\end{equation}
We remark that both \eqref{OFTRL} and \eqref{BTL} are well-defined if $\eta_t \InNorms{u^t - m^t}_{*, x^t} \le \frac{1}{2}$ and $\InNorms{\eta_t m^t + (\eta_t - \eta_{t-1}) \sum_{\tau=1}^{t-1} u^\tau }_{*, g^{t-1}} \le \frac{1}{2}$ for all $t \in [T]$, which can be verified using \Cref{lemma:unique solution self-concordant} and \Cref{lemma: stability}.

\subsection{RVU for \eqref{OFTRL} with Decreasing Step Sizes}

\begin{theorem}[Adapted from Theorem B.1 in \citep{anagnostides2022uncoupled}]
\label{thm:OFTRL self-concordant}
    Suppose that $\+R$ is a non-degenerate self-concordant barrier function for $\interior(\+X)$ and let $\eta > 0$. Then,the regret of \ref{OFTRL} with respect to any $x^* \in \interior(\X)$ and under any sequence of utilities $u^1, \ldots, u^T$ can be bounded as
    \begin{align*}
        \frac{R(x^*)}{\eta_T} + \sum_{t=1}^T \InNorms{u^t - m^t}_{*, x^t} \InNorms{x^t - g^t}_{x^t} - \sum_{t=1}^T \frac{1}{\eta_t} \omega\InParentheses{\InNorms{x^t - g^t}_{x^t}} - \frac{1}{\eta_{t-1}} \omega\InParentheses{\InNorms{x^t - g^{t-1}}_{g^{t-1}}}
    \end{align*}
    where the function $\omega(\cdot)$ is defined in \Cref{fact:omega(s)}.
\end{theorem}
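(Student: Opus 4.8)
# Proof Proposal for Theorem~\ref{thm:OFTRL self-concordant}

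The plan is to follow the standard FTRL/OFTRL analysis template, splitting the regret against a fixed comparator $x^*$ into a ``Be-the-Leader'' term and a ``prediction error'' term, but carefully tracking the variable step sizes $\eta_t$. First I would recall that for any sequence of vectors the loss of the iterates $x^t$ relative to $x^*$ can be written as
\[
\Reg^T(x^*) = \sum_{t=1}^T \InAngles{u^t, x^* - x^t}
= \sum_{t=1}^T \InAngles{u^t, g^t - x^t} + \sum_{t=1}^T \InAngles{u^t, x^* - g^t},
\]
so the two tasks are: (i) bound the BTL part $\sum_t \InAngles{u^t, x^* - g^t}$ by the leading term $\+R(x^*)/\eta_T$ plus the negative curvature terms $-\frac{1}{\eta_t}\omega(\InNorms{x^t - g^t}_{x^t})$, and (ii) bound the per-round slack $\InAngles{u^t, g^t - x^t}$ by $\InNorms{u^t - m^t}_{*,x^t}\InNorms{x^t - g^t}_{x^t}$ plus a further negative term $-\frac{1}{\eta_{t-1}}\omega(\InNorms{x^t - g^{t-1}}_{g^{t-1}})$.

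For part (i), the key inequality is the usual ``FTL-BTL'' telescoping: since $g^t$ maximizes $\Psi^t$, a comparison of consecutive optimizers combined with the self-concordance lower bound $f(x') \ge f(x) + \InAngles{\nabla f(x), x'-x} + \omega(\InNorms{x'-x}_x)$ (applied to $\Psi^t$ around $g^t$, using that $g^{t-1}$ is feasible) yields, after summing, a bound of the form $\sum_t \InAngles{u^t, x^*-g^t} \le \frac{\+R(x^*)}{\eta_T} - \sum_t \frac{1}{\eta_t}\big(\Psi^t(g^t)-\Psi^t(g^{t-1})\text{-type gaps}\big)$; the strong-convexity-in-local-norm estimate converts those gaps into $\omega(\InNorms{g^t - g^{t-1}}_{\cdot})$ terms. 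The subtlety introduced by decreasing step sizes is that the regularizer weight $1/\eta_t$ changes each round, so the telescoping of $\frac{1}{\eta_t}\+R(g^t)$ leaves behind terms $(\frac{1}{\eta_t}-\frac{1}{\eta_{t-1}})\+R(g^{t-1})$; since $\eta_t$ is decreasing these coefficients are nonnegative, and because $\+R$ is a barrier (hence, appropriately normalized, nonnegative on the relevant sublevel region, or at least bounded below), these extra terms can be absorbed into $\+R(x^*)/\eta_T$ — this matches how $\eta_T$ (the smallest step size) ends up in the final bound. For part (ii), I would use that $x^t$ maximizes $\Phi^t$ while $g^t$ maximizes $\Psi^t = \Phi^t + \eta_t\InAngles{\cdot, u^t - m^t}$, so the first-order optimality conditions give $\InAngles{u^t, g^t - x^t} = \frac{1}{\eta_t}\InAngles{\nabla\+R(g^t)-\nabla\+R(x^t), x^t - g^t} + \InAngles{u^t - m^t, g^t - x^t}$; the first piece is $\le 0$ by convexity of $\+R$ (and in fact contributes additional negative curvature), and the second is bounded by Cauchy–Schwarz in the local norm at $x^t$, giving $\InNorms{u^t-m^t}_{*,x^t}\InNorms{g^t-x^t}_{x^t}$. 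The remaining $-\frac{1}{\eta_{t-1}}\omega(\InNorms{x^t-g^{t-1}}_{g^{t-1}})$ term comes from relating $x^t$ (optimizer of $\Phi^t$, which uses cumulative utilities through $t-1$ plus prediction $m^t$) back to $g^{t-1}$ (optimizer of $\Psi^{t-1}$, cumulative utilities through $t-1$) via the same self-concordant second-order lower bound.

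The main obstacle I expect is handling the step-size mismatch terms cleanly: with constant $\eta$ the BTL telescoping is textbook, but here every step where $1/\eta_t$ increases produces residual $\+R$-evaluations at interior points $g^{t-1}$, and one must argue these are controlled — either by the barrier's nonnegativity after recentering, or by absorbing them using the hypothesis $\InNorms{\eta_t m^t + (\eta_t-\eta_{t-1})\sum_{\tau<t}u^\tau}_{*,g^{t-1}} \le \tfrac12$, which is exactly the condition that keeps $x^t$ and $g^{t-1}$ close enough for the local-norm comparisons to apply. Making that absorption rigorous — showing the final bound genuinely has only $\+R(x^*)/\eta_T$ and not a sum of $\+R(g^t)$ terms — is the delicate accounting step; everything else is a direct adaptation of Theorem B.1 of \citet{anagnostides2022uncoupled} with $\eta$ replaced by $\eta_t$ in the round-$t$ quantities and $\eta_T$ in the comparator term.
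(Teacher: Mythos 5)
Your overall plan is the right one, and it matches how this statement is actually established: the paper does not reprove \Cref{thm:OFTRL self-concordant} but imports it from Theorem B.1 of \citet{anagnostides2022uncoupled}, whose argument is exactly your decomposition into a Be-the-Leader part and a per-round slack, with the $\omega$-terms coming from the self-concordance lower bound applied around the interior optimizers of $\Phi^t$ and $\Psi^{t-1}$. You also correctly isolate the only genuinely new difficulty relative to the constant-step-size version --- the residual terms $(1/\eta_t-1/\eta_{t-1})\+R(\cdot)$ left over by the telescoping --- and your resolution (these carry a favorable sign because $\eta_t$ is non-increasing and the barrier is nonnegative once its minimum is normalized to zero, which is automatic for the log barrier on the simplex) is the correct one and explains why $\eta_T$ appears in the leading term.

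There is, however, one concretely wrong step in your part (ii). Because $x^t$ and $g^t$ are \emph{interior} maximizers of $\Phi^t$ and $\Psi^t$, first-order optimality gives $\nabla\+R(x^t)=\eta_t\bigl(m^t+\sum_{\tau<t}u^\tau\bigr)$ and $\nabla\+R(g^t)=\eta_t\sum_{\tau\le t}u^\tau$, hence $\nabla\+R(g^t)-\nabla\+R(x^t)=\eta_t(u^t-m^t)$. Substituting this into your identity makes its right-hand side equal to $\InAngles{u^t-m^t,\,x^t-g^t}+\InAngles{u^t-m^t,\,g^t-x^t}=0$, which cannot equal $\InAngles{u^t,\,g^t-x^t}$ in general; so that decomposition is vacuous and does not produce the middle term of the bound. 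The correct route is to split $\InAngles{u^t, g^t-x^t}=\InAngles{u^t-m^t, g^t-x^t}+\InAngles{m^t, g^t-x^t}$: the first summand is bounded by generalized Cauchy--Schwarz in the local norm at $x^t$, giving $\InNorms{u^t-m^t}_{*,x^t}\InNorms{x^t-g^t}_{x^t}$, while the second is \emph{not} bounded on its own but absorbed into the potential comparison, using $\Phi^t(x^t)-\Phi^t(g^t)\ge\omega(\InNorms{x^t-g^t}_{x^t})$ and $\Psi^{t-1}(g^{t-1})-\Psi^{t-1}(x^t)\ge\omega(\InNorms{x^t-g^{t-1}}_{g^{t-1}})$, which is where the two negative terms originate. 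With that repair, and with the step-size residuals dropped by the sign argument you describe, your sketch completes to a correct proof.
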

\begin{lemma}[Stability]
\label{lemma: stability}
    Let $\eta_t > 0$ be such that $\eta_t \InNorms{u^t - m^t}_{*, x^t} \le \frac{1}{2}$ and $\InNorms{\eta_t m^t + (\eta_t - \eta_{t-1}) \sum_{\tau=1}^{t-1} u^\tau }_{*, g^{t-1}} \le \frac{1}{2}$. Then we have
    \begin{align*}
        \InNorms{x^t - g^t}_{x^t} &\le 2\eta_t \InNorms{u^t - m^t}_{*, x^t}, \\
        \InNorms{x^t - g^{t-1}}_{g^{t-1}} &\le 2\InNorms{\eta_t m^t + (\eta_t - \eta_{t-1}) \sum_{\tau=1}^{t-1} u^\tau }_{*, g^{t-1}}.
    \end{align*}
\end{lemma}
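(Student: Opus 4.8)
The plan is to derive both bounds from \Cref{lemma:local norm}, which states that for a self-concordant function $f$ with minimizer $x^*$, any point $x$ with $\InNorms{\nabla f(x)}_{*,x} \le \tfrac12$ satisfies $\InNorms{x - x^*}_x \le 2\InNorms{\nabla f(x)}_{*,x}$. The key point is that both the \ref{OFTRL} and \ref{BTL} iterates are minimizers of $\+R$ shifted by a linear functional, and adding a linear functional to a self-concordant function changes neither self-concordance nor the Hessian; hence $\nabla^2$ of each shifted potential equals $\nabla^2\+R$, so the local norms appearing everywhere are precisely the ones induced by $\+R$, and the first-order optimality conditions read $\nabla\+R(x^t) = \eta_t(m^t + \sum_{\tau=1}^{t-1}u^\tau)$, $\nabla\+R(g^{t-1}) = \eta_{t-1}\sum_{\tau=1}^{t-1}u^\tau$, and $\nabla\+R(g^t) = \eta_t \sum_{\tau=1}^{t}u^\tau$. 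All three iterates lie in $\interior(\X)$ and these stationarity conditions are valid because, under the two stated hypotheses, \ref{OFTRL} and \ref{BTL} have unique interior minimizers, via \Cref{lemma:unique solution self-concordant}.

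For the first inequality I would take $f = \phi$ with $\phi(y) := \+R(y) - \eta_t \InAngles{y, \sum_{\tau=1}^{t} u^\tau}$, whose minimizer is $g^t$ by definition of \ref{BTL}. Combining the optimality condition for $x^t$ with the definition of $\phi$ gives $\nabla\phi(x^t) = \nabla\+R(x^t) - \eta_t\sum_{\tau=1}^{t}u^\tau = \eta_t(m^t - u^t)$, so $\InNorms{\nabla\phi(x^t)}_{*,x^t} = \eta_t \InNorms{u^t - m^t}_{*,x^t} \le \tfrac12$ by hypothesis. Applying \Cref{lemma:local norm} to $\phi$ at the point $x^t$ (whose minimizer is $g^t$) yields $\InNorms{x^t - g^t}_{x^t} \le 2\InNorms{\nabla\phi(x^t)}_{*,x^t} = 2\eta_t\InNorms{u^t - m^t}_{*,x^t}$.

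For the second inequality the roles flip: I would take $f = \chi$ with $\chi(x) := \+R(x) - \eta_t \InAngles{x, m^t + \sum_{\tau=1}^{t-1} u^\tau}$, whose minimizer is $x^t$ by definition of \ref{OFTRL}, and evaluate its gradient at $g^{t-1}$. Using the optimality condition for $g^{t-1}$, $\nabla\chi(g^{t-1}) = \nabla\+R(g^{t-1}) - \eta_t(m^t + \sum_{\tau=1}^{t-1}u^\tau) = -\bigl(\eta_t m^t + (\eta_t - \eta_{t-1})\sum_{\tau=1}^{t-1}u^\tau\bigr)$, whose dual local norm at $g^{t-1}$ is exactly the quantity assumed to be at most $\tfrac12$ in the second hypothesis. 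Applying \Cref{lemma:local norm} to $\chi$ at the point $g^{t-1}$ (whose minimizer is $x^t$), together with the symmetry of the local norm in its argument, gives $\InNorms{x^t - g^{t-1}}_{g^{t-1}} \le 2\InNorms{\eta_t m^t + (\eta_t - \eta_{t-1})\sum_{\tau=1}^{t-1}u^\tau}_{*,g^{t-1}}$, as claimed.

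I do not expect a real obstacle here; the argument is essentially bookkeeping. The one thing to get right is which potential to regard as "the one being minimized" in each case: this is forced by the statement, since the right-hand side of the first bound uses the local norm at $x^t$ whereas the second uses the local norm at $g^{t-1}$, so in the first case one evaluates at $x^t$ the gradient of the potential minimized by $g^t$, and in the second one evaluates at $g^{t-1}$ the gradient of the potential minimized by $x^t$. The only other thing to check carefully is interiority and the validity of the stationarity conditions, which is exactly what the two hypotheses provide.
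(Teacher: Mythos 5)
Your argument is the same as the paper's: both inequalities follow by applying \Cref{lemma:local norm} to the (negated) potentials $-\Psi^t$ and $-\Phi^t$, computing the gradient at $x^t$ (resp.\ $g^{t-1}$) via the first-order optimality condition of the other potential, which yields exactly $\eta_t(m^t-u^t)$ and $-(\eta_t m^t + (\eta_t-\eta_{t-1})\sum_{\tau=1}^{t-1}u^\tau)$ as in the paper. Writing the stationarity conditions in terms of $\nabla\+R$ rather than as differences of the potentials $\Phi^t,\Psi^t$ is only a notational difference, so the proposal is correct and essentially identical to the paper's proof.
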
 
\begin{proof}
    Fix any $t\in [T]$. We first note that $x^t - g^t = x^t - \argmin\{-\Psi^t\}$ by the definition of $\Psi^t$ in \eqref{BTL}.  We also have $\Psi^t(x) = \Phi^t(x) + \eta_t\InAngles{x, u^t - m^t}$. Thus we have
    \[
    \nabla \Psi^t(x^t) =  \nabla \Phi^t(x^t) + \eta_t (u^t - m^t) = \eta_t (u^t - m^t),
    \]
    where the second inequality holds since $\nabla \Phi^t(x^t) = 0$ by the first order optimality condition of the optimization problem associated with \eqref{OFTRL}. By assumption, it further implies that $\InNorms{\nabla \Psi^t(x^t)}_{*,x^t} = \eta_t\InNorms{u^t - m^t}_{*,x^t} \le \frac{1}{2}$. Now we can use \Cref{lemma:local norm} to get 
    \[
    \InNorms{x^t -g^t}_{x^t} = \InNorms{x^t - \argmin\{-\Psi^t\} }_{x^t} \le 2 \InNorms{\nabla\Psi^t(x^t)}_{*,x^t} = 2\eta_t \InNorms{u^t - m^t}_{*, x^t}.
    \]
    This finishes the proof of the first inequality. 
    
    The proof of the second inequality follows the same idea. We first note that $x^t - g^{t-1} = \argmin\{-\Phi^t(x)\} - g^{t-1}$ by the definition of $\Phi^t$ in \eqref{OFTRL}.  We also have $\Phi^t(x) = \Psi^{t-1}(x) + \InAngles{x, \eta_t m^t + (\eta_t - \eta_{t-1})\sum_{\tau=1}^{t-1}  u^\tau}$. Thus we have
    \[
    \nabla \Phi^t(g^{t-1}) =  \nabla \Psi^{t-1}(g^{t-1}) + \eta_t m^t + (\eta_t - \eta_{t-1})\sum_{\tau=1}^{t-1}  u^\tau = \eta_t m^t + (\eta_t - \eta_{t-1})\sum_{\tau=1}^{t-1}  u^\tau,
    \]
    where the second inequality holds since $\nabla \Psi^{t-1}(g^{t-1}) = 0$ by the first order optimality condition of the optimization problem associated with \eqref{BTL}. By assumption, it further implies that $\InNorms{\nabla \Phi^t(g^{t-1})}_{*,g^{t-1}} = \InNorms{\eta_t m^t + (\eta_t - \eta_{t-1}) \sum_{\tau=1}^{t-1} u^\tau }_{*, g^{t-1}} \le \frac{1}{2}$. Now we can use \Cref{lemma:local norm} to get 
    \[
    \InNorms{x^t -g^{t-1}}_{g^{t-1}} = \InNorms{g^{t-1} - \argmin\{-\Phi^t\} }_{g^{t-1}} \le 2 \InNorms{\nabla\Phi^t(g^{t-1})}_{*,g^{t-1}} = 2\InNorms{\eta_t m^t + (\eta_t - \eta_{t-1}) \sum_{\tau=1}^{t-1} u^\tau }_{*, g^{t-1}}.
    \]
    This finishes the proof of the second inequality. 
\end{proof}

\subsection{Proof of \Cref{thm: RVU-stable}}
\begin{proof}
    Combining \Cref{thm:OFTRL self-concordant} with the fact that $\InNorms{x^t - g^t}_{x^t} \le 2\eta_t \InNorms{u^t - m^t}_{*,x^t}$ (by \Cref{lemma: stability}) gives 
    \[
        \Reg^T(x^*) \le \frac{R(x^*)}{\eta_T} + \sum_{t=1}^T2\eta_t \InNorms{u^t - m^t}_{*, x^t}^2  - \sum_{t=1}^T \frac{1}{\eta_t} \omega\InParentheses{\InNorms{x^t - g^t}_{x^t}} - \frac{1}{\eta_{t-1}} \omega\InParentheses{\InNorms{x^t - g^{t-1}}_{g^{t-1}}}.
    \]
    Moreover, by \Cref{lemma: stability}, we know $\InNorms{x^t - g^t}_{x^t} \le 1$ and $\InNorms{x^t - g^{t-1}}_{g^{t-1}} \le 1$. Then by \Cref{fact:omega(s)}, we get 
    \[
        \Reg^T(x^*) \le \frac{R(x^*)}{\eta_T} + 2\sum_{t=1}^T\eta_t \InNorms{u^t - m^t}_{*, x^t}^2 - \sum_{t=1}^T \InParentheses{\frac{1}{4\eta_t} \InNorms{x^t - g^t}_{x^t}^2 + \frac{1}{4\eta_{t-1}} \InNorms{x^t - g^{t-1}}_{g^{t-1}}^2}.
    \]
\end{proof}

The following corollary is useful to apply the RVU property to \eqref{OFTRL} with the log barrier regularization.
\begin{corollary}
\label{corollary: RVU-stable}
    Suppose that $\+R$ is a non-degenerate self-concordant barrier function for $\interior(\+X)$ such that $\nabla^2 \+R(\Tilde{x}) \preceq 2\nabla^2 \+R(x)$ for any $x, \Tilde{x} \in \interior(\+X)$ with $\InNorms{x -\Tilde{x}}_{\Tilde{x}}\le \frac{1}{4}$. Moreover, let $\eta_t > 0$ be such that $\eta_t \InNorms{u^t - m^t}_{*, x^t} \le \frac{1}{8}$ and $\InNorms{\eta_t m^t + (\eta_t - \eta_{t-1}) \sum_{\tau=1}^{t-1} u^\tau }_{*, g^{t-1}} \le \frac{1}{2}$ for all $t \in [T]$. Then, the regret of \ref{OFTRL} with respect to any $x^* \in \interior(\X)$ and under any sequence of utilities $u^1, \ldots, u^T$ can be bounded as
    \begin{align*}
        \Reg^T(x^*) \le \frac{R(x^*)}{\eta_T} + 2\sum_{t=1}^T\eta_t \InNorms{u^t - m^t}_{*, x^t}^2 - \sum_{t=1}^T \frac{1}{16\eta_{t-1}} \InNorms{x^t - x^{t-1}}_{x^{t-1}}^2. 
    \end{align*}
\end{corollary}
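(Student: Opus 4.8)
The plan is to obtain Corollary~\ref{corollary: RVU-stable} directly from Theorem~\ref{thm: RVU-stable}: the two bounds differ only in their last (negative) term, so it suffices to show that, summed over $t$, the single quantity $\sum_{t=1}^T \frac{1}{16\eta_{t-1}}\InNorms{x^t - x^{t-1}}_{x^{t-1}}^2$ we want is dominated by the ``movement'' quantity $\sum_{t=1}^T\InParentheses{\frac{1}{4\eta_t}\InNorms{x^t - g^t}_{x^t}^2 + \frac{1}{4\eta_{t-1}}\InNorms{x^t - g^{t-1}}_{g^{t-1}}^2}$ that Theorem~\ref{thm: RVU-stable} subtracts. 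First I would observe that the hypotheses here are strictly stronger than those of Theorem~\ref{thm: RVU-stable} (we have $\eta_t\InNorms{u^t-m^t}_{*,x^t}\le\frac18\le\frac12$, and the second step-size condition is identical), so Theorem~\ref{thm: RVU-stable} and the stability estimates of Lemma~\ref{lemma: stability} are available as stated.

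For a fixed $t$, I would split $x^t - x^{t-1} = (x^t - g^{t-1}) - (x^{t-1} - g^{t-1})$ and bound $\InNorms{x^t-x^{t-1}}_{x^{t-1}}^2 \le 2\InNorms{x^t - g^{t-1}}_{x^{t-1}}^2 + 2\InNorms{x^{t-1}-g^{t-1}}_{x^{t-1}}^2$. The first term on the right is measured at the wrong base point (at $x^{t-1}$ instead of at $g^{t-1}$), and this is exactly where the new hypothesis enters: since Lemma~\ref{lemma: stability} gives $\InNorms{x^{t-1}-g^{t-1}}_{x^{t-1}} \le 2\eta_{t-1}\InNorms{u^{t-1}-m^{t-1}}_{*,x^{t-1}} \le \frac{1}{4}$ (taking $g^0 = \argmin\+R = x^0$ so that the case $t=1$ is trivial), the Hessian-doubling assumption with $\Tilde x = x^{t-1}$, $x = g^{t-1}$ yields $\nabla^2\+R(x^{t-1}) \preceq 2\nabla^2\+R(g^{t-1})$ and hence $\InNorms{x^t - g^{t-1}}_{x^{t-1}}^2 \le 2\InNorms{x^t - g^{t-1}}_{g^{t-1}}^2$. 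Putting these together,
\[
\frac{1}{16\eta_{t-1}}\InNorms{x^t - x^{t-1}}_{x^{t-1}}^2 \;\le\; \frac{1}{4\eta_{t-1}}\InNorms{x^t - g^{t-1}}_{g^{t-1}}^2 + \frac{1}{8\eta_{t-1}}\InNorms{x^{t-1} - g^{t-1}}_{x^{t-1}}^2 .
\]

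Summing over $t=1,\dots,T$, the first term on the right is already one of the two sums subtracted in Theorem~\ref{thm: RVU-stable}. For the residual $\sum_{t=1}^T\frac{1}{8\eta_{t-1}}\InNorms{x^{t-1}-g^{t-1}}_{x^{t-1}}^2$, I would re-index $s=t-1$; the $s=0$ summand vanishes because $g^0 = x^0$, so this sum is at most $\sum_{s=1}^T\frac{1}{8\eta_s}\InNorms{x^s-g^s}_{x^s}^2 \le \sum_{t=1}^T\frac{1}{4\eta_t}\InNorms{x^t-g^t}_{x^t}^2$, which is the other subtracted sum. Hence $\sum_{t=1}^T\frac{1}{16\eta_{t-1}}\InNorms{x^t-x^{t-1}}_{x^{t-1}}^2$ is at most the entire negative part of the bound in Theorem~\ref{thm: RVU-stable}, and substituting this inequality into that theorem gives the claim. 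The only genuinely delicate point is the change of base point in the local norm, for which the Hessian-doubling hypothesis is tailor-made; one just has to check that its precondition $\InNorms{x^{t-1}-g^{t-1}}_{x^{t-1}}\le\frac14$ is precisely what the strengthened requirement $\eta_t\InNorms{u^t-m^t}_{*,x^t}\le\frac18$ provides, and that the constant $\frac{1}{16}$ (versus $\frac14$ in Theorem~\ref{thm: RVU-stable}) leaves exactly enough slack to absorb the two factor-$2$ losses from the triangle inequality and the norm change, plus the harmless reindexing of the residual term.
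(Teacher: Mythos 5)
Your proposal is correct and follows essentially the same route as the paper's proof: decompose $x^t - x^{t-1}$ through $g^{t-1}$, use Lemma~\ref{lemma: stability} together with the strengthened condition $\eta_t\InNorms{u^t-m^t}_{*,x^t}\le\frac18$ to verify the precondition $\InNorms{x^{t-1}-g^{t-1}}_{x^{t-1}}\le\frac14$ for the Hessian-doubling hypothesis, switch the base point of the local norm at a cost of a factor $2$, and absorb the residual $\InNorms{x^{t-1}-g^{t-1}}_{x^{t-1}}^2$ terms into the $\sum_t \frac{1}{4\eta_t}\InNorms{x^t-g^t}_{x^t}^2$ sum after reindexing. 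Your version is in fact slightly more careful than the paper's (explicit handling of $t=1$ via $g^0=x^0$ and explicit reindexing, which also sidesteps the paper's appeal to monotonicity of $\{\eta_t\}$), but it is the same argument.
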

\begin{proof}
    By \Cref{lemma: stability}, we have $\InNorms{x^{t-1}- g^{t-1}}_{x^{t-1}}\le 2\eta_{t-1} \InNorms{u^{t-1} - m^{t-1}}_{*, x^{t-1}} \le \frac{1}{4}$ for all $t$. Thus by assumption, we have$\nabla^2 \+R(\Tilde{x^{t-1}}) \preceq 2\nabla^2 \+R(g^{t-1})$. It further implies $\InNorms{x^t - g^{t-1}}_{x^{t-1}} \le 2 \InNorms{x^t - g^{t-1}}_{g^{t-1}}$. Thus we have
    \begin{align*}
        \InNorms{x^t - x^{t-1}}^2_{x^{t-1}} &\le 2\InNorms{x^t - g^{t-1}}^2_{x^{t-1}} + 2\InNorms{x^{t-1} - g^{t-1}}^2_{x^{t-1}}\\
        &\le 4\InNorms{x^t - g^{t-1}}^2_{g^{t-1}} + 4\InNorms{x^{t-1} - g^{t-1}}^2_{x^{t-1}}.
    \end{align*}
    Since the step size $\{\eta_t\}$ is non-increasing, we have
    \begin{align*}
         \sum_{t=1}^T \InParentheses{\frac{1}{4\eta_t} \InNorms{x^t - g^t}_{x^t}^2 + \frac{1}{4\eta_{t-1}} \InNorms{x^t - g^{t-1}}_{g^{t-1}}^2} &\ge \sum_{t=1}^T \InParentheses{\frac{1}{4\eta_{t-1}} \InNorms{x^t - g^t}_{x^t}^2 + \frac{1}{4\eta_{t-1}} \InNorms{x^t - g^{t-1}}_{g^{t-1}}^2} \\
         &\ge \sum_{t=1}^T \frac{1}{16\eta_{t-1}} \InNorms{x^t - x^{t-1}}^2_{x^{t-1}}.
    \end{align*}
    Combining the above with \Cref{thm: RVU-stable} finishes the proof.
\end{proof}

\begin{corollary}
\label{stablity of iterates}
    Suppose that $\+R$ is a non-degenerate self-concordant barrier function for $\interior(\+X)$ such that $\nabla^2 \+R(\Tilde{x}) \le 2\nabla^2 \+R(x)$ for any $x, \Tilde{x} \in \interior(\+X)$ with $\InNorms{x -\Tilde{x}}_{\Tilde{x}} \le \frac{1}{4}$. Moreover, let $\eta_t > 0$ be such that $\eta_t \InNorms{u^t - m^t}_{*, x^t} \le \frac{1}{8}$ and $\InNorms{\eta_t m^t + (\eta_t - \eta_{t-1}) \sum_{\tau=1}^{t-1} u^\tau }_{*, g^{t-1}} \le \frac{1}{2}$ for all $t \in [T]$. Then
    \begin{align*}
        \InNorms{x^t - x^{t-1}}_{x^{t-1}} \le 14 \eta.
    \end{align*}
\end{corollary}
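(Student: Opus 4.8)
The plan is to reuse, almost verbatim, the three ingredients already assembled for \Cref{corollary: RVU-stable}, and to bound $\InNorms{x^t - x^{t-1}}_{x^{t-1}}$ by a single triangle inequality routed through the Be-the-Leader iterate $g^{t-1}$. First I would invoke the two inequalities of \Cref{lemma: stability} together with \Cref{lemma:stability of eta}, whose estimates $\eta_t\InNorms{u^t-m^t}_{*,x^t}\le 2\eta$ and $\InNorms{\eta_t m^t + (\eta_t-\eta_{t-1})\sum_{\tau=1}^{t-1}u^\tau}_{*,g^{t-1}}\le 3\eta$ are exactly what is needed (and, for $\eta\le\tfrac1{16}$, also imply the two stability preconditions of this corollary): the first inequality of \Cref{lemma: stability} gives $\InNorms{x^{t-1} - g^{t-1}}_{x^{t-1}} \le 2\eta_{t-1}\InNorms{u^{t-1}-m^{t-1}}_{*,x^{t-1}} \le 4\eta$, and the second gives $\InNorms{x^t - g^{t-1}}_{g^{t-1}} \le 2\InNorms{\eta_t m^t + (\eta_t - \eta_{t-1})\sum_{\tau=1}^{t-1}u^\tau}_{*,g^{t-1}} \le 6\eta$.

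The second step is a change of base point for the local norm of $x^t - g^{t-1}$, from $g^{t-1}$ to $x^{t-1}$. Since we have just shown $\InNorms{x^{t-1}-g^{t-1}}_{x^{t-1}} \le 4\eta \le \tfrac14$, the Hessian-stability hypothesis on $\+R$ gives $\nabla^2\+R(x^{t-1}) \preceq 2\,\nabla^2\+R(g^{t-1})$, hence $\InNorms{v}_{x^{t-1}} \le \sqrt2\,\InNorms{v}_{g^{t-1}}$ for every $v \in \R^d$; this is precisely the manipulation carried out inside the proof of \Cref{corollary: RVU-stable}. Taking $v = x^t - g^{t-1}$ yields $\InNorms{x^t - g^{t-1}}_{x^{t-1}} \le 6\sqrt2\,\eta$.

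Finally, the triangle inequality in the $\InNorms{\cdot}_{x^{t-1}}$ norm closes the argument:
\[
\InNorms{x^t - x^{t-1}}_{x^{t-1}} \;\le\; \InNorms{x^t - g^{t-1}}_{x^{t-1}} + \InNorms{g^{t-1} - x^{t-1}}_{x^{t-1}} \;\le\; 6\sqrt2\,\eta + 4\eta \;<\; 14\eta .
\]
There is no genuinely hard step here: the whole proof is bookkeeping on top of \Cref{lemma: stability}, \Cref{lemma:stability of eta}, and the self-concordance-based norm comparison. The only points that require attention are applying that comparison at the correct base point (at $x^{t-1}$, since the target norm is $\InNorms{\cdot}_{x^{t-1}}$) after first verifying the precondition $\InNorms{x^{t-1}-g^{t-1}}_{x^{t-1}} \le \tfrac14$, and using the sharp factor $\sqrt2$ (rather than the looser factor $2$ used elsewhere in the paper) in order to keep the final constant at $14$ instead of $16$.
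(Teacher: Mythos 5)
Your proof is correct and follows essentially the same route as the paper's: a triangle inequality through the Be-the-Leader iterate $g^{t-1}$, the two bounds from \Cref{lemma: stability}, the estimates $2\eta$ and $3\eta$ from \Cref{lemma:stability of eta}, and the Hessian-comparison step to move the local norm of $x^t-g^{t-1}$ from base point $g^{t-1}$ to $x^{t-1}$. The one substantive difference is your use of the sharp factor $\sqrt{2}$ in that comparison (since $\nabla^2\+R(x^{t-1})\preceq 2\nabla^2\+R(g^{t-1})$ gives $\InNorms{v}_{x^{t-1}}\le\sqrt{2}\InNorms{v}_{g^{t-1}}$), yielding $6\sqrt{2}\,\eta+4\eta<14\eta$; the paper's own chain uses the looser factor $2$ and, as written, actually evaluates to $4\cdot 3\eta+2\cdot 2\eta=16\eta$, so your sharper accounting is in fact what is needed to justify the stated constant $14$.
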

\begin{proof}
    Similar to the proof of \Cref{corollary: RVU-stable}, we have
    \begin{align*}
        \InNorms{x^t - x^{t-1}}_{x^{t-1}} &\le  \InNorms{x^t - g^{t-1}}_{x^{t-1}} +  \InNorms{g^{t-1} - x^{t-1}}_{x^{t-1}}\\
        &\le 2\InNorms{x^t - g^{t-1}}_{g^{t-1}} +  \InNorms{g^{t-1} - x^{t-1}}_{x^{t-1}}\\
        &\le 4 \InNorms{\eta_t m^t + (\eta_t - \eta_{t-1}) \sum_{\tau=1}^{t-1} u^\tau }_{*, g^{t-1}} + 2 \eta_{t-1} \InNorms{u^{t-1} - m^{t-1}}_{*, x^{t-1}} \\
        &\le 14 \eta \tag{by \Cref{lemma:stability of eta}}.
    \end{align*}
\end{proof}

\subsection{Proof of \Cref{corollary: RVU-logbar}: RVU for \eqref{OFTRL-LogBar}}
Combining \Cref{corollary: RVU-stable} and \Cref{claim:logbar local norm}  directly leads to the RVU bound in \Cref{corollary: RVU-logbar}.

\subsection{Proof of \Cref{lemma: l1norm}}
\begin{proof}
    The proof follows from the proof of Lemma 4.2 in \cite{anagnostides2022uncoupled}, which shows that it suffices to prove $\sum_{a \in \+A} \mu^t_a \le \frac{1}{2}$ where $\mu^t_a$ is defined as 
    \begin{align*}
        \mu^t_a := \max_{a' \in \+A} \left | 1 - \frac{x^t_a[a']}{x^{t-1}_a[a']} \right|. 
    \end{align*}
    Recall the local norm induced by the log barrier regularization (\Cref{claim:logbar local norm}):  $\InNorms{x - x'}_{x}^2 = \sum_{r=1}^d \InParentheses{\frac{x[r] - x'[r]}{x[r]}}^2$. Thus
    \begin{align*}
        \mu^t_a = \max_{a' \in \+A} \left | 1 - \frac{x^t_a[a']}{x^{t-1}_a[a']} \right| \le \sqrt{\sum_{a' \in\+A} \InParentheses{  \frac{x^{t-1}_a[a'] - x^t_a[a']}{x^{t-1}_a[a']} }^2} = \InNorms{x^t_a - x^{t-1}_a}_{x^{t-1}_a}.
    \end{align*}
    Now combining \Cref{stablity of iterates} and $\eta < \frac{1}{28 |\+A|}$, we get
    \begin{align*}
        \sum_{a \in \+A} \mu^t_a \le |\+A| \max_{a \in \+A} \InNorms{x^t_a - x^{t-1}_a}_{x^{t-1}_a} \le 14 |\+A| \eta \le \frac{1}{2}.
    \end{align*}
\end{proof}

\subsection{Proof of \Cref{thm:RVU for swap regret}: RVU for Swap Regret}
\begin{proof}
    Applying \Cref{corollary: RVU-stable} and  to each regret minimizer $\Re_a$ gives the following regret guarantee for all $\eta \le \frac{1}{28 m}$:
    \begin{align*}
        \Reg^T_a(x^*_a) &\le \frac{R(x^*_a)}{\eta_T} + 2\sum_{t=1}^T\eta_t \InNorms{u^t x^t[a] - m^tx^t[a]}_{*, x^t_a}^2 - \sum_{t=1}^T \frac{1}{16\eta_{t-1}} \InNorms{x^t_a - x^{t-1}_a }_{x^{t-1}_a}^2 \\
        &\le \frac{R(x^*_a)}{\eta_T} + 2\sum_{t=1}^T\eta_t (x^t[a])^2 \InNorms{u^t - m^t}_{\infty}^2 - \sum_{t=1}^T \frac{1}{16\eta_{t-1}} \InNorms{x^t_a - x_a^{t-1} }_{x^{t-1}_a}^2
    \end{align*}
    for any $x^*_a \in \relint(\Delta_{\+A})$. Following the same argument in \cite[Lemma 4.2]{anagnostides2022uncoupled}, we can also bound the diameter term $R(x^*_a)$ and get
    \begin{align*}
        \Reg^T_a &\le \frac{|\+A|\log T}{\eta_T} + \frac{2}{T} \sum_{t=1}^T x^t[a] \InNorms{u^t}_\infty + 2\sum_{t=1}^T\eta_t (x^t[a])^2 \InNorms{u^t - m^t}_{\infty}^2 - \sum_{t=1}^T \frac{1}{16\eta_{t-1}} \InNorms{x^t_a - x^{t-1}_a }_{x^{t-1}_a}^2 \\
        &\le \frac{|\+A|\log T}{\eta_T} + 2w_t + 2\sum_{t=1}^T\eta_t (x^t[a])^2 \InNorms{u^t - m^t}_{\infty}^2 - \sum_{t=1}^T \frac{1}{16\eta_{t-1}} \InNorms{x^t_a - x^{t-1}_a }_{x^{t-1}_a}^2 \\
        &\le \frac{2|\+A|\log T}{\eta_T} + 2\sum_{t=1}^T\eta_t (x^t[a])^2 \InNorms{u^t - m^t}_{\infty}^2 - \sum_{t=1}^T \frac{1}{16\eta_{t-1}} \InNorms{x^t_a - x^{t-1}_a }_{x^{t-1}_a}^2,
    \end{align*}
    where we use $w_T = \frac{\eta}{\eta_T} \le \frac{1}{\eta_T}$ in the last inequality. 
    Summing the above inequality for all $a \in \+A$ and applying \Cref{lemma: l1norm} gives
    \begin{align*}
        \sreg^T \le \sum_{a \in \+A} \reg^T_a \le \frac{2|\+A|^2\log T}{\eta_T} + 2 \sum_{t=1}^T \eta_t \InNorms{u^t - m^t}_\infty^2 - \sum_{t=1}^T \frac{1}{1024|\+A| \eta_{t-1}}\InNorms{x^t- x^{t-1}}^2_1.
    \end{align*}
\end{proof}

\end{document}